\newtheorem{theorem}{Theorem}
\newtheorem{lemma}[theorem]{Lemma}
\newtheorem{proposition}[theorem]{Proposition}
\renewcommand{\a}{\mathbf{a}}
\renewcommand{\b}{\mathbf{b}}
\renewcommand{\c}{\mathbf{c}}
\newcommand{\e}{\mathbf{e}}
\renewcommand{\u}{\mathbf{u}}
\renewcommand{\v}{\mathbf{v}}
\newcommand{\w}{\mathbf{w}}
\newcommand{\x}{\mathbf{x}}
\newcommand{\B}{\mathbf{B}}
\newcommand{\C}{\mathbf{C}}
\newcommand{\E}{\mathbf{E}}
\newcommand{\I}{\mathbf{I}}
\renewcommand{\L}{\mathbf{L}}
\newcommand{\M}{\mathbf{M}}
\newcommand{\N}{\mathcal{N}}
\renewcommand{\O}{\mathcal{O}}
\renewcommand{\P}{\mathcal{P}}
\newcommand{\R}{\mathbb{R}}
\renewcommand{\S}{\mathbf{S}}
\newcommand{\T}{\mathbf{T}}
\newcommand{\U}{\mathbf{U}}
\newcommand{\V}{\mathbf{V}}
\newcommand{\X}{\mathbf{X}}
\newcommand{\Y}{\mathbf{Y}}
\newcommand{\rank}{\mathbf{rank}}
\newcommand{\orthc}{\mathbf{orth}_c}
\newcommand{\orthr}{\mathbf{orth}_r}
\newcommand{\0}{\mathbf{0}}
\newcommand{\1}{\mathbf{1}}
\renewcommand{\comment}[1]{}
\title{Noise-Tolerant Life-Long Matrix Completion via Adaptive Sampling}
\author{
Maria-Florina Balcan  \\ Machine Learning Department \\ Carnegie Mellon University \\ \small{ninamf@cs.cmu.edu} \and
Hongyang Zhang \\ Machine Learning Department \\ Carnegie Mellon University \\ \small{hongyanz@cs.cmu.edu}
}
\date{}
\begin{document}
% \nipsfinalcopy is no longer used
\maketitle

\begin{abstract}
We study the problem of recovering an incomplete $m\times n$ matrix of rank $r$ with columns arriving online over time. This is known as the problem of \emph{life-long matrix completion}, and is widely applied to recommendation system, computer vision, system identification, etc. The challenge is to design provable algorithms tolerant to a large amount of noises, with small sample complexity. In this work, we give algorithms achieving strong guarantee under two realistic noise models. In bounded deterministic noise, an adversary can add any bounded yet unstructured noise to each column. For this problem, we present an algorithm that returns a matrix of a small error, with sample complexity almost as small as the best prior results in the noiseless case. For sparse random noise, where the corrupted columns are sparse and drawn randomly, we give an algorithm that \emph{exactly} recovers an $\mu_0$-incoherent matrix by probability at least $1-\delta$ with sample complexity as small as $\O\left(\mu_0rn\log (r/\delta)\right)$. This result advances the state-of-the-art work and matches the lower bound in a worst case. We also study the scenario where the hidden matrix lies on a mixture of subspaces and show that the sample complexity can be even smaller. Our proposed algorithms perform well experimentally in both synthetic and real-world datasets.
\end{abstract}

\section{Introduction}
Life-long learning is an emerging object of study in machine learning, statistics, and many other domains~\cite{balcan2014efficient,carlson-aaai}. In machine learning, study of such a framework has led to significant advances in learning systems that continually learn many tasks over time and improve their ability to learn as they do so, like humans~\cite{gopnik2001babies}. A natural approach to achieve this goal is to exploit information from previously-learned tasks under the belief that some commonalities exist across the tasks~\cite{balcan2014efficient,warmuth2008randomized}. The focus of this work is to apply this idea of life-long learning to the matrix completion problem. That is, given columns of a matrix that arrive online over time with missing entries, how to approximately/exactly recover the underlying matrix by exploiting the low-rank commonality across each column.

Our study is motivated by several promising applications where life-long matrix completion is applicable. In recommendation systems, the column of the hidden matrix consists of ratings by multiple users to a specific movie/news; The news or movies are updated online over time but usually only a few ratings are submitted by those users. In computer vision, inferring camera motion from a sequence of online arriving images with missing pixels has received significant attention in recent years, known as the structure-from-motion problem; Recovering those missing pixels from those partial measurements is an important preprocessing step. Other examples where our technique is applicable include system identification, multi-class learning, global positioning of sensors, etc.

Despite a large amount of applications of life-long matrix completion, many fundamental questions remain unresolved. One of the long-standing challenges is designing \emph{noise-tolerant, life-long} algorithms that can recover the unknown target matrix with small error. In the absence of noise, this problem is not easy because the overall structure of the low rankness is unavailable in each round.
This problem is even more challenging in the context of noise, where an adversary can add any bounded yet unstructured noise to those observations and the error propagates as the algorithm proceeds. This is known as bounded deterministic noise. Another type of noise model that receives great attention is sparse random noise, where the noise is sparse compared to the number of columns and is drawn i.i.d. from a non-degenerate distribution.

\noindent \textbf{Our Contributions:} This paper tackles the problem of \emph{noise-tolerant, life-long} matrix completion and advances the state-of-the-art results under the two realistic noise models.
\begin{itemize}
\item
Under bounded deterministic noise, we design and analyze an algorithm that is robust to noise, with only a small output error (See Figure \ref{figure: synthetic data}). The sample complexity is almost as small as the best prior results in the noiseless case, provided that the noise level is small.
\item
Under sparse random noise, we give sample complexity that guarantees an \emph{exact} recovery of the hidden matrix with high probability. The sample complexity advances the state-of-the-art results (See Figure \ref{figure: synthetic data}) and matches the lower bound in the worst case of this scenario.
\item
We extend our result of sparse random noise to the setting where the columns of the hidden matrix lie on a mixture of subspaces, and show that smaller sample complexity suffices to exactly recover the hidden matrix in this more benign setting.
\item
We also show that our proposed algorithms
perform well experimentally in both synthetic and real-world datasets.
\end{itemize}

\comment{
Compressed sensing is a norm in many data compression problem, a concrete example being one-bit camera, where only the signs of the pixels are captured and we are expecting to recover the original image from these highly compressed signals with overwhelming probabilities. As a two-dimensional compressed sensing problem, matrix completion has been widely applied to recommendation system, image repairing, data compression, etc., of which the task is to complete a low-rank matrix with only partially observed entries. Existing work~\cite{Candes2009exact,Candes2010power,recht2011simpler} showed that $\O(nr\log^2 n)$ passive sampling suffices to \emph{exactly recover} the underlying $n\times n$ matrix $\L$ with rank $r$ by probabilities at least $1-n^{-10}$ under mild incoherent conditions. This target can be achieved by solving a one-shot convex program, in which the nuclear norm is used as a convex surrogate of the rank function, and we do the nuclear norm minimization under the constraint that the sampled entries are consistent with the observed values. When there is bounded noise, i.e., the noise perturbation $\E$ such that $\|\E\|_F\le\epsilon$ and we can observe partial entries of $\M=\L+\E$, Cand\`{e}s et al.~\cite{candes2010matrix} further showed that $n^2p$ passive sampling suffices to approximately recover the $n\times n$ underlying matrix $\L$ with Frobenius error bounded by $\O(\sqrt{n/p}\epsilon)$.

In more recent work,  Krishnamurthy et al.~\cite{krishnamurthy2014power} sampled the matrix adaptively by column. The basic idea is to take full advantage of the low-dimensional commonality: Passing over all columns, if the current (partially observed) column can be represented by the basis matrix at hand, then completing that column by the linear combination of the base vectors; Otherwise, fully sampling the current column and adding it to the basis matrix, and then processing the next column. Krishnamurthy et al.~\cite{krishnamurthy2014power} showed that this procedure requires just $\O(nr\log^2 r)$ sampling to exactly recover the ground-truth in the adaptive sampling/active learning framework, if the observations are \emph{noise-free}.

In another direction, life-long learning~\cite{balcan2014efficient,NELL-aaai15,carlson-aaai} is a promising framework for machine learning. Highly related to transfer learning, multi-task learning and online learning, the life-long learning fully utilizes the knowledge we learn before to help learn the future related tasks, ``learning to learn" like human beings. In this paper, we propose plugging matrix completion into this framework. More specifically, we are assuming that we have $n$ learning (matrix/column completion) problems that arrive online over time. The problems are all over $m$ rows. For each upcoming column, we are able to measure only a few elements and the underlying matrix in hindsight has a rank of $r$. As shown in this paper, we are able to reduce the sample complexity to only $\O(rn\log r)$ even in this challenging online and moreover, noisy setting by an adaptive sampling strategy.

A concrete application of the above-mentioned settings is collaborative filtering. Famous examples include the Netflix challenge, where given $m$ users and $n_i$ items/movies, the task is to complete/predict the ratings for all user/item pair from partial observations. In our settings, we are assuming that the $m$ users are fixed. Every period of time, $n_i$ movies are updated and so we have multiple matrix completion problems arriving online over time. Given that the users typically have relatively fixed preference, the overall data should lie in a common low-dimensional subspace, which corresponds to our assumption nicely.
}

\section{Preliminaries}
Before proceeding, we define some notations and clarify problem setup in this section.

\noindent \textbf{Notations:}
We will use bold capital letter to represent matrix, bold lower-case letter to represent vector, and lower-case letter to represent scalar. Specifically, we denote by $\M\in\R^{m\times n}$ the noisy observation matrix in hindsight. We denote by $\L$ the underlying clean matrix, and by $\E$ the noise. We will frequently use $\M_{:t}\in\R^{m\times 1}$ to indicate the $t$-th column of matrix $\M$, and similarly $\M_{t:}\in\R^{1\times n}$ the $t$-th row. For any set of indices $\Omega$, $\M_{\Omega:}\in\R^{|\Omega|\times n}$ represents subsampling the rows of $\M$ at coordinates $\Omega$.
Without confusion, denote by $\U$ the column space spanned by the matrix $\L$. Denote by $\widetilde{\U}$ the noisy version of $\U$, i.e., the subspace corrupted by the noise, and by $\widehat{\U}$ our estimated subspace. The superscript $k$ of $\widetilde{\U}^k$ means that $\widetilde{\U}^k$ has $k$ columns in the current round. $\P_{\U}$ is frequently used to represent the orthogonal projection operator onto subspace $\U$. We use $\theta(\a,\b)$ to denote the angle between vectors $\a$ and $\b$. For a vector $\u$ and a subspace $\V$, define $\theta(\u,\V)=\min_{\v\in\V}\theta(\u,\v)$. We define the angle between two subspaces $\U$ and $\V$ as $\theta(\U,\V)=\max_{\u\in\U}\theta(\u,\V)$.
For norms, denote by $\|\v\|_2$ the vector $\ell_2$ norm of $\v$. For matrix, $\|\M\|_F^2=\sum_{ij}\M_{ij}^2$ and $\|\M\|_{\infty,2}=\max_i\|\M_{i:}\|_2$, i.e., the maximum vector $\ell_2$ norm across rows. The operator norm is induced by the matrix Frobenius norm, which is defined as $\|\P\|=\max_{\|\M\|_F\le 1}\|\P \M\|_F$. If $\P$ can be represented as a matrix, $\|\P\|$ also denotes the maximum singular value.

\subsection{Problem Setup}
\label{sec: problem setup}
In the setting of life-long matrix completion, we assume that each column of the underlying matrix $\L$ is normalized to have unit $\ell_2$ norm, and arrives online over time. We are not allowed to get access to the next column until we perform the completion for the current one. This is in sharp contrast to the offline setting where all columns come at one time and so we are able to immediately exploit the low-rank structure to do the completion. In hindsight, we assume the underlying matrix is of rank $r$. This assumption enables us to represent $\L$ as $\L=\U\S$, where $\U$ is the dictionary (a.k.a. basis matrix) of size $m\times r$ with each column representing a latent metafeature, and $\S$ is a matrix of size $r\times n$ containing the weights of linear combination for each column $\L_{:t}$. The overall subspace structure is captured by $\U$ and the finer grouping structure, e.g., the mixture of multiple subspaces, is captured by the sparsity of $\S$. Our goal is to approximately/exactly recover the subspace $\U$ and the matrix $\L$ from a small fraction of the entries, possibly corrupted by noise, although these entries can be selected sequentially in a feedback-driven way.

\textbf{Noise Models:} We study two types of realistic noise models, one of which is the deterministic noise. In this setting, we assume that the $\ell_2$ norm of noise on each column is bounded by $\epsilon_{noise}$. Beyond that, no other assumptions are made on the nature of noise. The challenge under this noise model is to design an \emph{online} algorithm limiting the possible error propagation during the completion procedure.
Another noise model we study is the sparse random noise, where we assume that the noise vectors are drawn i.i.d. from \emph{any} non-degenerate distribution. Additionally, we assume the noise is sparse, i.e., only a few columns of $\L$ are corrupted by noise. Our goal is to \emph{exactly} recover the underlying matrix $\L$ with sample complexity as small as possible.

\textbf{Incoherence:}
Apart from the sample budget and noise level, another quantity governing the difficulty of the completion problem is the coherence parameter on the row/column space. Intuitively, the completion should perform better when the information spreads evenly throughout the matrix. To quantify this term, for subspace $\U$ of dimension $r$ in $\R^m$, we define
\begin{equation}
\label{equ: incoherence}
\mu(\U)=\frac{m}{r}\max_{i\in[m]}\|\P_{\U}\e_i\|_2^2,
\end{equation}
where $\e_i$ is the $i$-th column of the identity matrix.
Indeed, without \eqref{equ: incoherence} there is an identifiability issue in the matrix completion problem~\cite{candes2010matrix,Candes2009exact,Zhang2015AAAI}. As an extreme example, let $\L$ be a matrix with only one non-zero entry. Such a matrix cannot be exactly recovered unless we see the non-zero element. As in \cite{krishnamurthy2014power}, to mitigate the issue, in this paper we assume incoherence $\mu_0=\mu(\U)$ on the column space of the underlying matrix. This is in contrast to the classical results of Cand\`{e}s et al.~\cite{candes2010matrix,Candes2009exact}, in which one requires incoherence $\mu_0=\max\{\mu(\U),\mu(\V)\}$ on both the column and the row subspaces.

\textbf{Sampling Model:} Instead of sampling the entries passively by uniform distribution, our sampling oracle allows adaptively measuring entries in each round. Specifically, for any arriving column we are allowed to have two types of sampling phases: we can either uniformly take the samples of the entries, as the passive sampling oracle, or choose to request all entries of the column in an adaptive manner. This is a natural extension of the classical passive sampling scheme with wide applications. For example, in
network tomography, a network operator is interested in inferring latencies between hosts while injecting few packets into the network. The operator is in control of the network, thus can adaptively sample the matrix of pair-wise latencies. In particular, the operator can request full columns of the matrix by measuring one host to all others.
In gene expression analysis, we are interested in recovering a matrix of expression levels for various genes across a number of conditions. The high-throughput microarrays provide expression levels of all genes of interest across operating conditions, corresponding to revealing entire columns of the matrix.

\section{Main Results}
In this section, we formalize our life-long matrix completion algorithm, develop our main theoretical contributions, and compare our results with the prior work.

\subsection{Bounded Deterministic Noise}
To proceed, our algorithm streams the columns of noisy $\M$ into memory and iteratively updates the estimate for the column space of $\L$. In particular, the algorithm maintains an estimate $\widehat{\U}$ of subspace $\U$, and when processing an arriving column $\M_{:t}$, requests only a few entries of $\M_{:t}$ and a few rows of $\widehat{\U}$ to estimate the distance between $\L_{:t}$ and $\U$. If the value of the estimator is greater than a given threshold $\eta_k$, the algorithm requests the remaining entries of $\M_{:t}$ and adds the new direction $\M_{:t}$ to the subspace estimate; Otherwise, finds a best approximation of $\M_{:t}$ by a linear combination of columns of $\widehat{\U}$. The pseudocode of the procedure is displayed in Algorithm \ref{algorithm: never ending learning}. We note that our algorithm is similar to the algorithm of \cite{krishnamurthy2014power} for the problem of offline matrix completion without noise.  However, our setting, with the presence of noise (which might conceivably propagate through the course of the algorithm), makes our analysis significantly more subtle.

\begin{algorithm}[ht]
\caption{Noise-Tolerant Life-Long Matrix Completion under Bounded Deterministic Noise}
\begin{algorithmic}
\label{algorithm: never ending learning}
\STATE {\bfseries Input:} Columns of matrices arriving over time.
\STATE {\bfseries Initialize:} Let the basis matrix $\widehat{\U}^0=\emptyset$. Randomly draw entries $\Omega\subset [m]$ of size $d$ uniformly with replacement.
\STATE {\bfseries 1:} \textbf{For} $t$ from $1$ to $n$, \textbf{do}
\STATE {\bfseries 2:}\ \ \ \ \ \ \ \ (a) If $\|\M_{\Omega t}-\P_{\widehat{\U}_{\Omega:}^k}\M_{\Omega t}\|_2>\eta_k$
\STATE {\bfseries 3:}\ \ \ \ \ \ \ \ \ \ \ \ \ i. Fully measure $\M_{:t}$ and add it to the basis matrix $\widehat{\U}^k$. Orthogonalize $\widehat{\U}^k$.
\STATE {\bfseries 4:}\ \ \ \ \ \ \ \ \ \ \ \ \ ii. Randomly draw entries $\Omega\subset [m]$ of size $d$ uniformly with replacement.
\STATE {\bfseries 5:}\ \ \ \ \ \ \ \ \ \ \ \ \ iii. $k:=k+1$.
\STATE {\bfseries 6:}\ \ \ \ \ \ \ \ (b) Otherwise $\widehat{\M}_{:t}:=\widehat{\U}^k\widehat{\U}_{\Omega:}^{k\dag}\M_{\Omega t}$.
\STATE {\bfseries 7:} \textbf{End For}
\STATE {\bfseries Output:} Estimated range space $\widehat{\U}^K$ and the underlying matrix $\widehat{\M}$ with column $\widehat{\M}_{:t}$.
\end{algorithmic}
\end{algorithm}

The key ingredient of the algorithm is to estimate the distance between the noiseless column $\L_{:t}$ and the clean subspace $\U^k$ with only a few measurements with noise. To estimate this quantity, we downsample both $\M_{:t}$ and $\widehat{\U}^k$ to $\M_{\Omega t}$ and $\widehat{\U}_{\Omega :}^k$, respectively. We then project $\M_{\Omega t}$ onto subspace $\widehat{\U}_{\Omega :}^k$ and use the projection residual $\|\M_{\Omega t}-\P_{\widehat{\U}_{\Omega:}^k}\M_{\Omega t}\|_2$ as our estimator. A subtle and critical aspect of the algorithm is the choice of the threshold $\eta_k$ for this estimator. In the noiseless setting, we can simply set $\eta_k=0$ if the sampling number $|\Omega|$ is large enough --- in the order of $\O(\mu_0r\log^2 r)$, because $\O(\mu_0r\log^2 r)$ noiseless measurements already contain enough information for testing whether a specific column lies in a given subspace~\cite{krishnamurthy2014power}. In the noisy setting, however, the challenge is that both $\M_{:t}$ and $\widehat{\U}^k$ are corrupted by noise, and the error propagates as the algorithm proceeds. Thus instead of setting the threshold as $0$ always, our theory suggests setting $\eta_k$ proportional to the noise level $\sqrt{\epsilon_{noise}}$. Indeed, the threshold $\eta_k$ balances the trade-off between the estimation error and the sample complexity: a) if $\eta_k$ is too large, most of the columns are represented by the noisy dictionary and therefore the error propagates too quickly; b) In contrast, if $\eta_k$ is too small, we observe too many columns in full and so the sample complexity increases. Our goal in this paper is to capture this trade-off, providing a global upper bound on the estimation error of the life-long arriving columns while keeping the sample complexity as small as possible.

\subsubsection{Recovery Guarantee}

Our analysis leads to the following guarantee on the performance of Algorithm \ref{algorithm: never ending learning}.

\begin{theorem}[Robust Recovery under Deterministic Noise]
\label{theorem: noise tolerance bound}
Let $r$ be the rank of the underlying matrix $\L$ with $\mu_0$-incoherent column space. Suppose that the $\ell_2$ norm of noise in each column is upper bounded by $\epsilon_{noise}$. Set the parameters $d\ge c(\mu_0r+mk\epsilon_{noise})\log^2 (2n/\delta))$ and
$\eta_k=C\sqrt{dk\epsilon_{noise}/m}$ for global constants $c$ and $C$. Then with probability at least $1-\delta$, Algorithm \ref{algorithm: never ending learning} outputs $\widehat{\U}^K$ with $K\le r$ and outputs $\widehat{\M}$ with $\ell_2$ error $\|\widehat{\M}_{:t}-\L_{:t}\|_2\le \O\left(\frac{m}{d}\sqrt{k\epsilon_{noise}}\right)$\footnote{By our proof, the constant factor is $9$.} uniformly for all $t$, where $k\le r$ is the number of base vectors when processing the $t$-th column.
\end{theorem}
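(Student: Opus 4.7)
My plan is to argue inductively on the column index $t$, maintaining two invariants on the algorithm's basis $\widehat{\U}^k$: (i) there exists a clean subspace $\U^k \subseteq \U$ of dimension $k$ with $\sin\theta(\widehat{\U}^k, \U^k) = \O(\sqrt{k\epsilon_{noise}})$, and (ii) $\mu(\widehat{\U}^k) = \O(\mu_0)$ stays bounded. Since $\U$ has rank $r$, invariant (i) immediately forces $K \le r$. Both invariants are preserved across the Gram--Schmidt update: each fully measured column contributes a perturbation of norm at most $\epsilon_{noise}$, and a Wedin-type perturbation bound---combined with invariant (ii) to rule out ill-conditioning---shows that the subspace angle accumulates to at most $\O(\sqrt{k\epsilon_{noise}})$ after $k$ additions.

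The key technical workhorse is a matrix-Bernstein concentration for random subsamples of incoherent subspaces, in the spirit of Krishnamurthy--Singh~\cite{krishnamurthy2014power}. Under invariant (ii), the choice $d \ge c(\mu_0 r + mk\epsilon_{noise})\log^2(2n/\delta)$ guarantees, with probability $1-\delta/n$, that $\sigma_{\min}(\widehat{\U}^k_{\Omega:}) \ge \sqrt{d/(2m)}$ and, for any unit vector $\x$,
\[
\|(\I - \P_{\widehat{\U}^k_{\Omega:}})\x_{\Omega}\|_2^2 \;\approx\; \tfrac{d}{m}\,\|(\I - \P_{\widehat{\U}^k})\x\|_2^2
\]
up to a constant multiplicative factor. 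The extra $mk\epsilon_{noise}$ slack in the sample budget absorbs the blow-up of $\mu(\widehat{\U}^k)$ beyond $\mu_0$ caused by the accumulated noise. The threshold $\eta_k = C\sqrt{dk\epsilon_{noise}/m}$ is then a two-sided calibration that separates columns with $\L_{:t}\in\U^k$ from those with $\L_{:t}\notin\U^k$: in the former case only the noise and the $\sqrt{k\epsilon_{noise}}$ subspace mismatch contribute to the subsampled residual, so it falls below $\eta_k$; in the latter the true residual exceeds $\eta_k$. A union bound over the $n$ columns---legal because $\Omega$ is drawn afresh after each basis update, hence independent of the prevailing $\widehat{\U}^k$---delivers a single high-probability guarantee.

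Given these tools, the error bound on the ``compute'' branch is a direct calculation. Decomposing $\L_{:t} = \widehat{\U}^k\a + r$ with $r = (\I - \P_{\widehat{\U}^k})\L_{:t}$ yields
\[
\widehat{\M}_{:t} - \L_{:t} \;=\; \widehat{\U}^k(\widehat{\U}^k_{\Omega:})^{\dag}(r_\Omega + \E_{\Omega t}) - r,
\]
so $\|\widehat{\M}_{:t} - \L_{:t}\|_2 \le \|(\widehat{\U}^k_{\Omega:})^{\dag}\|_2\,\|r_\Omega + \E_{\Omega t}\|_2 + \|r\|_2$. The pseudoinverse norm is $\O(\sqrt{m/d})$ from the singular-value lower bound above; invariant (i) together with the test-passing condition $\|(\I - \P_{\widehat{\U}^k_{\Omega:}})\M_{\Omega t}\|_2 \le \eta_k$, fed through the concentration lemma, controls $\|r\|_2$ and $\|r_\Omega + \E_{\Omega t}\|_2$; summing the two pieces gives the claimed $\O\!\left(\tfrac{m}{d}\sqrt{k\epsilon_{noise}}\right)$ error.

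The central obstacle is controlling the compound noise propagation through $\widehat{\U}^k$: a naive analysis would let $\sin\theta(\widehat{\U}^k,\U^k)$ grow geometrically in $k$ as each noisy column is orthogonalized against an already-perturbed basis. Obtaining the tight additive bound $\O(\sqrt{k\epsilon_{noise}})$ requires the successive perturbations to be essentially orthogonal to the established subspace, which in turn requires that invariant (ii) exclude any ill-conditioned basis. A secondary delicacy is that the threshold $\eta_k$ depends on the adaptively determined $k$, so the two-sided form of the concentration must simultaneously suppress false rejections (in-subspace columns appearing out due to noise) and false acceptances (out-of-subspace columns appearing in due to sampling error).
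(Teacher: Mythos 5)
Your overall architecture matches the paper's: an angle invariant between the estimated and a clean subspace, a bound on the incoherence of the noisy basis, a Krishnamurthy--Singh-type concentration lemma for the subsampled residual, a two-sided calibration of $\eta_k$, the decomposition $\L_{:t}=\widehat{\U}^k\a+\mathbf{r}$ with a pseudoinverse bound for the representation error, and a union bound over the at most $n$ (in the paper, at most $r$) resamplings of $\Omega$. The gap is in the step you treat as routine: the preservation of invariant (i), i.e.\ why $\sin\theta(\widehat{\U}^k,\U^k)$ accumulates only additively to $\O(\sqrt{k\epsilon_{noise}})$. You attribute this to a Wedin-type bound ``combined with invariant (ii) to rule out ill-conditioning,'' but incoherence has nothing to do with the relevant conditioning: the quantity that controls how much a one-column update can rotate the spanned subspace is the angle between the newly added (noisy) column and the \emph{current estimated subspace}, not the alignment of the basis with coordinate axes. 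If a column nearly inside $\widehat{\U}^k$ were ever added, Gram--Schmidt would normalize a residual dominated by noise and the angle error would blow up, no matter how incoherent the basis is. The paper's mechanism is precisely that the test prevents this: a column is added only when its subsampled residual exceeds $\eta_k=C\sqrt{dk\epsilon_{noise}/m}$, which via the concentration lemma forces $\theta(\M_{:t},\widehat{\U}^k)\gtrsim\gamma_k=\sqrt{20k\epsilon_{noise}}$; feeding this angular separation into the one-vector perturbation lemma (Lemma 2 of Balcan et al.) gives a per-step contribution of order $\epsilon_{noise}/\gamma_k\asymp\sqrt{\epsilon_{noise}/k}$, and summing over $k$ steps yields the $\gamma_k/2$ bound. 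This is also why $\eta_k$ must scale like $\sqrt{k\epsilon_{noise}}$ at all, and it is the same fact that certifies $\theta(\L_{:t},\U^k)\ge 3\gamma_k/2>0$, so that each accepted column genuinely increases the dimension of the clean subspace and $K\le r$ follows. In your write-up the threshold is used only as a detector separating in-subspace from out-of-subspace columns, never in the invariant-maintenance argument, so as stated the induction does not go through.

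A smaller point: your claim that invariant (i) ``immediately forces $K\le r$'' presupposes that every accepted column extends the clean subspace $\U^k$ by a new direction, which again is exactly the consequence of the threshold-induced angular separation and not of the invariant alone. Once you insert that ingredient (the paper's Lemma \ref{lemma: angle between U and tilde U} together with the second half of Lemma \ref{lemma: residual estimate}), the rest of your plan --- the incoherence blow-up $\mu(\widehat{\U}^k)\lesssim\mu_0 r/k+m\epsilon_{noise}$ absorbed by the $mk\epsilon_{noise}$ term in $d$, the error decomposition on the representation branch, and the union bound --- coincides with the paper's proof.
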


\begin{proof}[Proof of Theorem \ref{theorem: noise tolerance bound}]
We firstly show that our estimated subspace in each round is accurate. The key ingredient of our proof is a result pertaining the angle between the underlying subspace and the noisy one.
Ideally, the column space spanned by the noisy dictionary cannot be too far to the underlying subspace if the noise level is small. This is true only if \emph{the angle between the newly added vector and the column space of the current dictionary is large}, as shown by the following lemma.
\begin{lemma}
\label{lemma: angle between U and tilde U}
Let $$\U^k=\mathbf{span}\{\u_1,\u_2,...,\u_k\} \mbox{\qquad and\qquad} \widetilde{\U}^k=\mathbf{span}\{\widetilde{\u}_1,\widetilde{\u}_2,...,\widetilde{\u}_k\}$$ be two subspaces such that $\theta(\u_i,\widetilde{\u}_i)\le\epsilon_{noise}$ for all $i\in[k]$. Let $$\gamma_k=\sqrt{20k\epsilon_{noise}} \mbox{\qquad and\qquad} \theta(\widetilde{\u}_i,\widetilde{\U}^{i-1})\ge \gamma_i$$ for $i=2,...,k$. Then
$$
\theta(\U^k,\widetilde{\U}^k)\le \gamma_k/2.
$$
\end{lemma}

\begin{proof}
The proof is basically by induction on $k$. Instead, we will prove a stronger result by showing that the conclusion holds on subspaces $\U^k=\mathbf{span}\{\mathbf{W},\u_1,\u_2,...,\u_k\}$ and $\widetilde{\U}^k=\mathbf{span}\{\mathbf{W},\widetilde{\u}_1,\widetilde{\u}_2,...,\widetilde{\u}_k\}$ for arbitrary fixed subspace $\mathbf{W}$. The base case $k=1$ follows immediately from Lemma \ref{lemma: subspaces angle diff one vect} .
\begin{lemma}[Lemma 2.~\cite{balcan2014efficient}]
\label{lemma: subspaces angle diff one vect}
Let $\mathbf{W}=\mathbf{span}\{\w_1,\w_2,...,\w_{k-1}\}$, $\U=\mathbf{span}\{\w_1,\w_2,...,\w_{k-1},\u\}$, and $\widetilde{\U}=\mathbf{span}\{\w_1,\w_2,...,\w_{k-1},\widetilde{\u}\}$ be subspaces spanned by vectors in $\R^m$. Then
\begin{equation*}
\theta\left(\U,\widetilde{\U}\right)\le \frac{\pi}{2}\frac{\theta(\widetilde{\u},\u)}{\theta(\widetilde{\u},\mathbf{W})}.
\end{equation*}
\end{lemma}

Now suppose the conclusion holds for any index $\le k-1$. Let $\U_0^k=\mathbf{span}\{\U^{k-1},\widetilde{\u}_k\}$. Then for index $k$, we have
\begin{equation*}
\begin{split}
\theta(\U^k,\widetilde{\U}^k)&\le \theta(\U^k,\U_0^k)+\theta(\U_0^k,\widetilde{\U}^k)\\
&\le \frac{\pi}{2}\frac{\theta(\widetilde{\u}_k,\u_k)}{\theta(\widetilde{\u}_k,\U^{k-1})}+10(k-1)\frac{\epsilon_{noise}}{\gamma_{k-1}}\ \ (\text{By Lemma \ref{lemma: subspaces angle diff one vect} and induction hypothesis})\\
&\le \frac{\pi}{2}\frac{\epsilon_{noise}}{\theta(\widetilde{\u}_k,\widetilde{\U}^{k-1})-\theta(\U^{k-1},\widetilde{\U}^{k-1})}+10(k-1)\frac{\epsilon_{noise}}{\gamma_{k-1}}\\
&\le \frac{\pi}{2}\frac{\epsilon_{noise}}{\gamma_k-10(k-1)\frac{\epsilon_{noise}}{\gamma_{k-1}}}+10(k-1)\frac{\epsilon_{noise}}{\gamma_{k-1}}\ \ (\text{By induction hypothesis})\\
&= \frac{\epsilon_{noise}}{\gamma_{k-1}}\left(\frac{\pi}{2}\frac{\gamma_{k-1}^2}{\gamma_k\gamma_{k-1}-10(k-1)\epsilon_{noise}}+10(k-1)\right)\\&\le \frac{\epsilon_{noise}}{\gamma_{k-1}}(\pi+10k-10)\\
&= \frac{\epsilon_{noise}}{\gamma_{k}}\frac{\gamma_k}{\gamma_{k-1}}(\pi+10k-10)\\&= \frac{\epsilon_{noise}}{\gamma_{k}}\sqrt{\frac{k}{k-1}}(\pi+10k-10)\\&\le \frac{\epsilon_{noise}}{\gamma_{k}}10k\ \ \ \ (k\ge 2).
\end{split}
\end{equation*}
\end{proof}

We then prove the correctness of our test in Step 2. Lemma \ref{lemma: angle between U and tilde U} guarantees that the underlying subspace $\U^k$ and our estimated one $\widetilde{\U}^k$ cannot be too distinct. So by algorithm, projecting any vector on the subspace spanned by $\widetilde{\U}^k$ does not make too many mistakes, i.e., $\theta(\M_{:t},\widetilde{\U}^k)\approx \theta(\M_{:t},\U^k)$. On the other hand, by standard concentration argument our test statistic $\|\M_{\Omega t}-\P_{\widetilde{\U}_{\Omega:}^k}\M_{\Omega t}\|_2$ is close to $\frac{d}{m}\|\M_{: t}-\P_{\widetilde{\U}^k}\M_{: t}\|_2$. Note that the latter term is determined by the angle of $\theta(\M_{:t},\widetilde{\U}^k)$. Therefore, our test statistic in Step 2 is indeed an effective measure of $\theta(\M_{:t},\widetilde{\U}^k)$, or $\theta(\L_{:t},\widetilde{\U}^k)$ since $\L_{:t}\approx\M_{:t}$, as proven by the following novel result.

\begin{lemma}
\label{lemma: residual estimate}
Let $\epsilon_k=2\gamma_k$, $\gamma_k=\sqrt{20k\epsilon_{noise}}$, and $k\le r$. Suppose that we observe a set of coordinates $\Omega\subset [m]$ of size $d$ uniformly at random with replacement, where $d\ge c_0(\mu_0r+mk\epsilon_{noise})\log^2 (2/\delta)$. If $\theta(\L_{:t},\widetilde{\U}^k)\le \epsilon_k$, then with probability at least $1-4\delta$, we have
$
\|\M_{\Omega t}-\P_{\widetilde{\U}_{\Omega:}^k}\M_{\Omega t}\|_2\le C\sqrt{dk\epsilon_{noise}/m}.
$
Inversely, if $\theta(\L_{:t},\widetilde{\U}^k)\ge c\epsilon_k$, then with probability at least $1-4\delta$, we have
$
\|\M_{\Omega t}-\P_{\widetilde{\U}_{\Omega:}^k}\M_{\Omega t}\|_2\ge C\sqrt{dk\epsilon_{noise}/m},
$
where $c_0$, $c$ and $C$ are absolute constants.
\end{lemma}
\begin{proof}
The first part of the theorem follows from the upper bound of Lemma \ref{lemma: concentration of measure}. Specifically, by plugging $d$ into the lower bound of Lemma \ref{lemma: concentration of measure}, we see that $\alpha<1/2$ and $\gamma<1/3$. Note that
\begin{equation*}
\begin{split}
\left\|\L_{:t}-\P_{\widetilde{\U}^k}\L_{:t}\right\|_2&=\left\|\L_{:t}\right\|_2\sin \theta\left(\L_{:t},\P_{\widetilde{\U}^k}\L_{:t}\right)\\&\le \theta\left(\L_{:t},\P_{\widetilde{\U}^k}\L_{:t}\right)\\&=\theta(\L_{:t},\widetilde{\U}^k)\\&\le \epsilon_k.
\end{split}
\end{equation*}
Therefore, by Lemma \ref{lemma: concentration of measure},
\begin{equation*}
\begin{split}
\left\|\M_{\Omega t}-\P_{\widetilde{\U}_{\Omega :}^k}\M_{\Omega t}\right\|_2&\le \O\left(\sqrt{\frac{d}{m}}\left\|\M_{:t}-\P_{\widetilde{\U}^k}\M_{:t}\right\|_2\right)\\
&\le \O\left(\sqrt{\frac{d}{m}}\left(\left\|\L_{:t}-\P_{\widetilde{\U}^k}\L_{:t}\right\|_2+\left\|\P_{\widetilde{\U}^k}(\L_{:t}-\M_{:t})\right\|_2+\left\|\M_{:t}-\L_{:t}\right\|_2\right)\right)\\
&\le \O\left(\sqrt{\frac{d}{m}}(\epsilon_k+2\epsilon_{noise})\right)\le C\sqrt{\frac{dk\epsilon_{noise}}{m}}.
\end{split}
\end{equation*}

We now proceed the second part of the theorem. To this end, we first explore the relation between the incoherence of the noisy basis $\widetilde{\U}^k$ and the clean one $\U^k$. Since we are able to control the error propagation in $\widetilde{\U}^k$, intuitively, the incoherence of $\widetilde{\U}^k$ and $\U^k$ is not distinct too much. In particular, for any $i\in [m]$,
\begin{equation*}
\begin{split}
\left\|\P_{\widetilde{\U}^k}\e_i\right\|_2&\le \left\|\P_{\U^k}\e_i\right\|_2+\left\|\P_{\U^k}\e_i-\P_{\widetilde{\U}^k}\e_i\right\|_2\\
&\le \left\|\P_{\U^k}\e_i\right\|_2+\left\|\P_{\U^k}-\P_{\widetilde{\U}^k}\right\|\left\|\e_i\right\|_2\\
&= \left\|\P_{\U^k}\e_i\right\|_2+\left\|\e_i\right\|_2\sin\theta\left(\U^k,\widetilde{\U}^k\right)\\
&\le \left\|\P_{\U^k}\e_i\right\|_2+\theta\left(\U^k,\widetilde{\U}^k\right)\\
&\le \left\|\P_{\U^k}\e_i\right\|_2+\frac{\gamma_k}{2}\\
&= \left\|\P_{\U^k}\e_i\right\|_2+\frac{1}{4}\epsilon_k.
\end{split}
\end{equation*}
Therefore,
$
\mu\left(\widetilde{\U}^k\right)=\frac{m}{k}\max_{i\in[m]}\left\|\P_{\widetilde{\U}^k}\e_i\right\|_2^2\le \frac{m}{k}\left(2\left\|\P_{\U^k}\e_i\right\|_2^2+\frac{1}{8}\epsilon_k^2\right)\le 2\mu(\U^k)+c''m\epsilon_{noise},
$
for global constant $c''$. Also, note that
\begin{equation*}
\left\|\P_{\widetilde{\U}^k}\M_{:t}-\L_{:t}\right\|_2\ge \sin\theta\left(\L_{:t},\P_{\widetilde{\U}^k}\M_{:t}\right)\|\L_{:t}\|_2\ge \frac{1}{2}\theta\left(\L_{:t},\P_{\widetilde{\U}^k}\M_{:t}\right)\ge \frac{1}{2}\theta\left(\L_{:t},\widetilde{\U}^k\right)\ge \frac{c\epsilon_k}{2}.
\end{equation*}
So we have
\begin{equation*}
\begin{split}
&\left\|\M_{\Omega t}-\P_{\widetilde{\U}_\Omega^k}\M_{\Omega t}\right\|_2\ge \sqrt{\frac{1}{m}\left(\frac{d}{2}-\frac{3k\mu(\widetilde{\U}^k)\beta}{2}\right)}\left\|\M_{:t}-\P_{\widetilde{\U}^k}\M_{:t}\right\|_2\\
&\ge \Omega\left(\sqrt{\frac{d}{m}-\frac{3k\mu(\U^k)}{m}\log^2 (1/\delta)-c_0k\epsilon_{noise}\log^2 (1/\delta)}\left\|\M_{:t}-\P_{\widetilde{\U}^k}\M_{:t}\right\|_2\right)\\
&\ge \Omega\left(\sqrt{\frac{d}{m}-\frac{3\mu_0r}{m}\log^2 (1/\delta)-c_0k\epsilon_{noise}\log^2 (1/\delta)}\left\|\M_{:t}-\P_{\widetilde{\U}^k}\M_{:t}\right\|_2\right)\ \ (\mbox{Since } \U^k\subseteq\U^r)\\
&\ge \Omega\left(\sqrt{\frac{d}{m}-c_0k\epsilon_{noise}\log^2 (1/\delta)}\left(\left\|\P_{\widetilde{\U}^k}\M_{:t}-\L_{:t}\right\|_2-\left\|\L_{:t}-\M_{:t}\right\|_2\right)\right)\ \ \left(\mbox{Since }d>3\mu_0r\log^2(1/\delta)\right)\\
&> \Omega\left(\sqrt{\frac{d}{m}-c_0k\epsilon_{noise}\log^2 (1/\delta)}\left(\frac{c\epsilon_k}{2}-\epsilon_{noise}\right)\right)\\
&> C\sqrt{\frac{dk\epsilon_{noise}}{m}}\ \ \left(\mbox{Since }d> c_0mk\epsilon_{noise}\log^2(1/\delta)\right).
\end{split}
\end{equation*}
\end{proof}

Finally, as both our dictionary and our statistic are accurate, the output error cannot be too large. In particular, we first show $K\le r$. Notice that every time we add a new direction to the basis matrix if and only if Condition (a) in Algorithm \ref{algorithm: never ending learning} holds true. In that case by Lemma \ref{lemma: residual estimate}, if setting $\eta_k=C\sqrt{dk\epsilon_{noise}/m}$, then with probability at least $1-4\delta$, we have that $\theta(\L_{:t},\widetilde{\U}^k)\ge 2\gamma_k$, which implies $\theta(\M_{:t},\widetilde{\U}^k)\ge\theta(\L_{:t},\widetilde{\U}^k)-\theta(\M_{:t},\L_{:t}) \ge \gamma_k$. So by Lemma \ref{lemma: angle between U and tilde U}, $\theta(\U^k,\widetilde{\U}^k)\le\gamma_k/2$. Thus $\theta(\L_{:t},\U^k)\ge \theta(\L_{:t},\widetilde{\U}^k)-\theta(\U^k,\widetilde{\U}^k)\ge 3\gamma_k/2$. Since $\rank(\L)=r$, we obtain that $K\le r$.

We now proceed to prove the upper bound on the $\ell_2$ error in Theorem \ref{theorem: noise tolerance bound}. We discuss Case (a) and (b) respectively. If Condition (a) in Algorithm \ref{algorithm: never ending learning} holds true, then according to the algorithm, we fully observe $\M_{:t}$ and use it as our estimate $\widehat{\M}_{:t}$. So $\left\|\widehat{\M}_{:t}-\L_{:t}\right\|_2\le \epsilon_{noise}\le\Theta(\frac{m}{d}\sqrt{k\epsilon_{noise}})$; On the other hand, if Case (b) in Algorithm \ref{algorithm: never ending learning} holds true, then we represent $\widehat{\M}_{:t}$ by the basis subspace $\widetilde{\U}^k$. So we have
\begin{equation*}
\begin{split}
&\ \ \ \ \ \left\|\widehat{\M}_{:t}-\L_{:t}\right\|_2=\left\|\widetilde{\U}^k\widetilde{\U}_{\Omega:}^{k\dag}\M_{\Omega t}-\L_{:t}\right\|_2\\
&\le \left\|\widetilde{\U}^k\widetilde{\U}^{k\dag}\L_{: t}-\L_{:t}\right\|_2+\left\|\widetilde{\U}^k\widetilde{\U}_{\Omega:}^{k\dag}\L_{\Omega t}-\widetilde{\U}^k\widetilde{\U}^{k\dag}\L_{: t}\right\|_2+\left\|\widetilde{\U}^k\widetilde{\U}_{\Omega:}^{k\dag}\L_{\Omega t}-\widetilde{\U}^k\widetilde{\U}_{\Omega:}^{k\dag}\M_{\Omega t}\right\|_2\\
& = \sin\theta(\L_{:t},\widetilde{\U}^k)+\left\|\widetilde{\U}^k\widetilde{\U}_{\Omega:}^{k\dag}\L_{\Omega t}-\widetilde{\U}^k\widetilde{\U}^{k\dag}\L_{: t}\right\|_2+\left\|\widetilde{\U}^k\widetilde{\U}_{\Omega :}^{k\dag}(\L_{\Omega t}-\M_{\Omega t})\right\|_2.
\end{split}
\end{equation*}
To bound the second term, let $\L_{:t}=\widetilde{\U}^k\v+\e$, where $\widetilde{\U}^k\v=\widetilde{\U}^k\widetilde{\U}^{k\dag}\L_{:t}$ and $\|\e\|_2\le\epsilon_k$ since $\|\e\|_2=\sin\theta(\L_{:t},\widetilde{\U}^k)\le \epsilon_k$. So
\begin{equation*}
\begin{split}
\widetilde{\U}^k\widetilde{\U}_{\Omega:}^{k\dag}\L_{\Omega t}-\widetilde{\U}^k\widetilde{\U}^{k\dag}\L_{: t}&=\widetilde{\U}^k(\widetilde{\U}_{\Omega:}^{kT}\widetilde{\U}_{\Omega:}^{k})^{-1}\widetilde{\U}_{\Omega:}^{kT}(\widetilde{\U}_{\Omega:}^{k}\v+\e_{\Omega})-\widetilde{\U}_{\Omega:}^{k}\v\\
&=\widetilde{\U}^k\widetilde{\U}_{\Omega:}^{k\dag}\e_{\Omega}.
\end{split}
\end{equation*}
Therefore,
\begin{equation*}
\begin{split}
\left\|\widehat{\M}_{:t}-\L_{:t}\right\|_2&\le \theta(\L_{:t},\widetilde{\U}^k)+\left\|\widetilde{\U}^k\widetilde{\U}_{\Omega:}^{k\dag}\L_{\Omega t}-\widetilde{\U}^k\widetilde{\U}^{k\dag}\L_{: t}\right\|_2+\left\|\widetilde{\U}^k\widetilde{\U}_{\Omega:}^{k\dag}(\L_{\Omega t}-\M_{\Omega t})\right\|_2\\
&\le \theta(\L_{:t},\widetilde{\U}^k)+\left\|\widetilde{\U}^k\widetilde{\U}_{\Omega:}^{k\dag}\right\|\left\|\e_{\Omega}\right\|_2+\left\|\widetilde{\U}^k\widetilde{\U}_{\Omega:}^{k\dag}\right\|\left\|\L_{\Omega t}-\M_{\Omega t}\right\|_2\\
&\le \epsilon_k+\Theta\left(\frac{m}{d}\epsilon_k\right)+\Theta\left(\frac{m}{d}\epsilon_{noise}\right)\\
&=\Theta\left(\frac{m}{d}\sqrt{k\epsilon_{noise}}\right),
\end{split}
\end{equation*}
where $\left\|\widetilde{\U}^k\widetilde{\U}_{\Omega :}^k\right\|\le\sigma_1(\widetilde{\U}^k)/\sigma_k(\widetilde{\U}_{\Omega:}^k)\le\Theta(m/d)$ once $d\ge \Omega(\mu(\widetilde{\U}^k)k\log (k/\delta))$, due to Lemma \ref{lemma: matrix chernoff bound}.
The final sample complexity follows from the union bound on the $n$ columns.
\end{proof}

Theorem \ref{theorem: noise tolerance bound} implies a result in the noiseless setting when $\epsilon_{noise}$ goes to zero. Indeed, with the sample size growing in the order of $\O(\mu_0nr\log^2 n)$, Algorithm \ref{algorithm: never ending learning} outputs a solution that is exact with probability at least $1-\frac{1}{n^{10}}$. To the best of our knowledge, this is the best sample complexity in the existing literature for noiseless matrix completion without additional side information~\cite{krishnamurthy2014power,recht2011simpler}. For the noisy setting, Algorithm \ref{algorithm: never ending learning} enjoys the same sample complexity $\O(\mu_0nr\log^2 n)$ as the noiseless case, if $\epsilon_{noise}\le\Theta(\mu_0 r/(mk))$. In addition, Algorithm \ref{algorithm: never ending learning} inherits the benefits of adaptive sampling scheme. The vast majority results in the passive sampling scenarios require both the row and column incoherence for exact/robust recovery~\cite{recht2011simpler}. In contrast, via adaptive sampling we can relax the incoherence assumption on the row space of the underlying matrix and are therefore more applicable.

We compare our result with several related lines of research in the prior work. While lots of online matrix completion algorithms have been proposed recently, they either lack of solid theoretical guarantee~\cite{kennedy2014online}, or require strong assumptions for the streaming data~\cite{krishnamurthy2014power,lois2015online,dhanjal2014online,krishnamurthy2013low}.  Specifically, Krishnamurthy et al.~\cite{krishnamurthy2013low} proposed an algorithm that requires column subset selection in the noisy case, which might be impractical in the online setting as we cannot measure columns that do not arrive. Focusing on a similar online matrix completion problem, Lois et al.~\cite{lois2015online} assumed that a) there is a good initial estimate for the column space; b) the column space changes slowly; c) the base vectors of the column space are dense; d) the support of the measurements changes by at least a certain amount. In contrast, our assumptions are much simpler and more realistic.

We mention another related line of research --- matched subspace detection. The goal of matched subspace detection is to decide whether an incomplete signal/vector lies within a given subspace~\cite{balzano2010high,balzano2010online}. It is highly related to the procedure of our algorithm in each round, where we aim at determining whether an arriving vector belongs to a given subspace based on partial and noisy observations. Prior work targeting on this problem formalizes the task as a hypothesis testing problem. So they assume a specific random distribution on the noise, e.g., Gaussian, and choose $\eta_k$ by fixing the probability of false alarm in the hypothesis testing~\cite{balzano2010high,scharf1994matched}.
Compared with this, our result does not have any assumption on the noise structure/distribution.

\subsection{Sparse Random Noise}

In this section, we discuss life-long matrix completion on a simpler noise model but with a stronger recovery guarantee.
We assume that noise is sparse, meaning that the total number of noisy columns is small compared to the total number of columns $n$.  The noisy columns may arrive at any time, and each noisy column is assumed to be drawn i.i.d. from a non-degenerate distribution. Our goal is to \emph{exactly} recover the underlying matrix and identify the noise with high probability.

We use an algorithm similar to Algorithm \ref{algorithm: never ending learning} to attack the problem, with $\eta_k=0$.
The challenge is that here we frequently add noise vectors to the dictionary and so we need to distinguish the noise from the clean column and remove them out of the dictionary at the end of the algorithm. To resolve the issue, we additionally record the support of the representation coefficients in each round when we represent the arriving vector by the linear combinations of the columns in the dictionary matrix. On one hand, the noise vectors in the dictionary fail to represent any column, because they are random. So if the representation coefficient corresponding to a column in the dictionary is $0$ always, it is convincing to identify the column as a noise. On the other hand, to avoid recognizing a true base vector as a noise, we make a mild assumption that the underlying column space is identifiable. Typically, that means for each direction in the underlying subspace, there are at least two clean data points having non-zero projection on that direction. We argue that the assumption is indispensable, since without it there is an identifiability issue between the clean data and the noise. As an extreme example, we cannot identify the black point in Figures \ref{figure: identifiability} as the clean data or as noise if we make no assumption on the underlying subspace. To mitigate the problem, we assume that for each $i\in [r]$ and a subspace $\U^r$ with orthonormal basis, there are at least two columns $\L_{:a_i}$ and $\L_{:b_i}$ of $\L$ such that $[\U^r]_{:i}^T\L_{:a_i}\not=0$ and $[\U^r]_{:i}^T\L_{:b_i}\not=0$. The detailed algorithm can be found in Algorithm \ref{algorithm: never ending learning sparse noise}.
\begin{algorithm}
\caption{Noise-Tolerant Life-Long Matrix Completion under Sparse Random Noise}
\begin{algorithmic}
\label{algorithm: never ending learning sparse noise}
\STATE {\bfseries Input:} Columns of matrices arriving over time.
\STATE {\bfseries Initialize:} Let the basis matrix $\widehat{\B}^0=\emptyset$, the counter $\C=\emptyset$. Randomly draw entries $\Omega\subset [m]$ of size $d$ uniformly without replacement.
\STATE {\bfseries 1:} \textbf{For} each column $t$ of $\M$, \textbf{do}
\STATE {\bfseries 2:}\ \ \ \ \ \ \ \ (a) If $\|\M_{\Omega t}-\P_{\widehat{\B}_{\Omega:}^k}\M_{\Omega t}\|_2>0$
\STATE {\bfseries 3:}\ \ \ \ \ \ \ \ \ \ \ \ \ i. Fully measure $\M_{:t}$ and add it to the basis matrix $\widehat{\B}^k$.
\STATE {\bfseries 4:}\ \ \ \ \ \ \ \ \ \ \ \ \ ii. $\C:=[\C,0]$.
\STATE {\bfseries 5:}\ \ \ \ \ \ \ \ \ \ \ \ \ iii. Randomly draw entries $\Omega\subset [m]$ of size $d$ uniformly without replacement.
\STATE {\bfseries 6:}\ \ \ \ \ \ \ \ \ \ \ \ \ iv. $k:=k+1$.
\STATE {\bfseries 7:}\ \ \ \ \ \ \ \ (b) Otherwise
\STATE {\bfseries 8:}\ \ \ \ \ \ \ \ \ \ \ \ \ i. $\C:=\C+\1_{supp(\widehat{\B}_{\Omega:}^{k\dag}\M_{\Omega t})}^T$.\ \ \ \ //Record supports of representation coefficient
\STATE {\bfseries 9:}\ \ \ \ \ \ \ \ \ \ \ \ \ ii. $\widehat{\M}_{:t}:=\widehat{\B}^k\widehat{\B}_{\Omega:}^{k\dag}\M_{\Omega t}$.
\STATE {\bfseries 10:}\ \ \ \ \ \ \ $t:= t+1$.
\STATE {\bfseries 11:} \textbf{End For}
\STATE {\bfseries Outlier Removal:} Remove columns corresponding to entry 0 in vector $\C$ from $\widehat{\B}^{s_0+r}=[\E^{s_0},\U^r]$.
\STATE {\bfseries Output:} Estimated range space, identified outlier vectors, and recovered underlying matrix $\widehat{\M}$ with column $\widehat{\M}_{:t}$.
\end{algorithmic}
\end{algorithm}

\begin{wrapfigure}{R}{4cm}
\subfigure[Identifiable Subspace]{
\includegraphics[width=4cm]{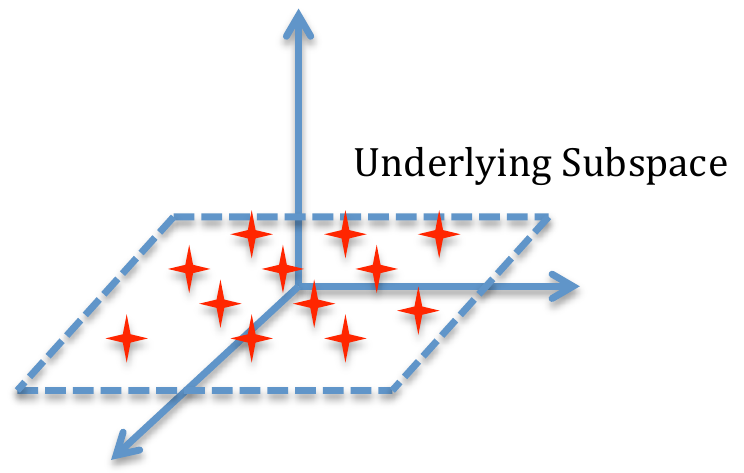}}
\subfigure[Unidentifiable Subspace]{
\includegraphics[width=4cm]{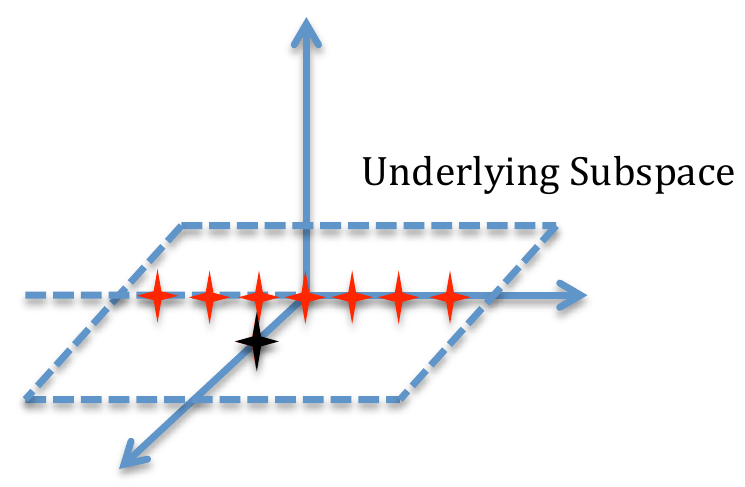}}
\caption{Identifiability.}
\label{figure: identifiability}
\end{wrapfigure}

\subsubsection{Upper Bound}
\label{section: upper bound}
We now provide upper and lower bound on the sample complexity of above algorithm for the exact recovery of underlying matrix. Our upper bound matches the lower bound up to a constant factor. We then analyze a more benign setting, namely, the data lie on a mixture of low-rank subspaces with dimensionality $\tau\ll r$.
Our analysis leads to the following guarantee on the performance of above algorithm.

\begin{theorem}[Exact Recovery under Random Noise]
\label{theorem: upper bound in noiseless case}
Let $r$ be the rank of the underlying matrix $\L$ with $\mu_0$-incoherent column space. Suppose that the noise $\E^{s_0}$ of size $m\times s_0$ are drawn from any non-degenerate distribution, and that the underlying subspace $\U^r$ is identifiable. Then our algorithm exactly recovers the underlying matrix $\L$, the column space $\U^r$, and the outlier $\E^{s_0}$ with probability at least $1-\delta$, provided that $d\ge c\mu_0r\log\left(r/\delta\right)$ and $s_0\le d-r-1$. The total sample complexity is thus $c\mu_0rn\log\left(r/\delta\right)$, where $c$ is a universal constant.
\end{theorem}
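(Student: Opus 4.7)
The plan is to decompose the argument into three stages matching the structure of Algorithm \ref{algorithm: never ending learning sparse noise}: first, with threshold $\eta_k=0$ the residual test accepts $\M_{:t}$ as already-spanned if and only if it truly is; second, the dictionary ends up as $\widehat{\B}^{s_0+r}=[\E^{s_0},\widehat{\L}^r]$ with $\widehat{\L}^r$ a basis of $\U^r$, and whenever a clean column is represented the unique coefficient vector puts zero weight on every noise column; third, the counter $\C$ distinguishes clean basis columns from noise thanks to the identifiability hypothesis. The overall sample complexity then follows from a union bound over the $n$ processed columns.

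For the first stage, I would invoke the matrix Chernoff bound (Lemma \ref{lemma: matrix chernoff bound}, used already in the previous theorem) to show that throughout the run the subsampled dictionary $\widehat{\B}^{k}_{\Omega:}$ has full column rank. For the clean basis columns this uses the $\mu_0$-incoherence of $\U^r$, and for the noise columns it holds with probability one because the noise distribution is non-degenerate. Since $k\le s_0+r\le d-1$, the dictionary never outgrows the sample budget, so a single application with $d\ge c\mu_0 r\log(r/\delta)$ yields full column rank with failure probability absorbed into $\delta$. Full column rank has two immediate consequences: if $\M_{:t}\in\mathrm{range}(\widehat{\B}^k)$, then $\widehat{\B}^{k\dagger}_{\Omega:}\M_{\Omega t}$ returns the unique true coefficients and the residual is exactly $0$; if $\M_{:t}\notin\mathrm{range}(\widehat{\B}^k)$, the same bound applied to $[\widehat{\B}^k,\M_{:t}]$ keeps $\M_{:t}$ linearly independent after subsampling and the residual is strictly positive. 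Hence Step 2 adds a column iff it is genuinely outside the current span, which immediately gives the dictionary contents claimed in stage two. For any clean $\L_{:t}$ represented in Case (b), uniqueness of the coefficient vector forces zero weight on every noise column because $\L_{:t}\in\U^r=\mathrm{span}(\widehat{\L}^r)$ already admits a representation using only the clean basis columns; the reconstruction $\widehat{\B}^{k}\widehat{\B}^{k\dagger}_{\Omega:}\M_{\Omega t}$ therefore equals $\L_{:t}$ exactly, and every noise entry of $\C$ stays at $0$ throughout.

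The main obstacle is the third stage: showing that every clean basis column $\L_{:t_i}\in\widehat{\L}^r$ accumulates counter $\C_i\ge 1$ by the end of the run. Writing $\widehat{\L}^r=\U^r\S$ with $\S\in\R^{r\times r}$ invertible (since $\widehat{\L}^r$ spans $\U^r$), the representation of a later clean column $\L_{:t'}$ on $\widehat{\L}^r$ is $\alpha(t')=\S^{-1}\U^{r\top}\L_{:t'}$, so $\C_i$ is incremented by $\L_{:t'}$ precisely when the $i$-th row of $\S^{-1}$ is not orthogonal to the projection vector $\v_{t'}:=\U^{r\top}\L_{:t'}$. Because $\L$ has rank $r$ the vectors $\{\v_{t'}\}$ span $\R^r$, so no fixed nonzero vector is orthogonal to all of them; the ``two witnesses per orthonormal direction'' identifiability assumption is exactly what rules out the degenerate case in which the unique clean column hitting a particular direction gets consumed as the basis column itself and has no chance to increment the corresponding counter entry. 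Combining these observations, and with a small case analysis on the ordering $t_1<\cdots<t_r$ to cover basis columns added mid-stream, every clean basis column has $\C_i\ge 1$ and the outlier-removal step retains exactly $\widehat{\L}^r$ while discarding $\E^{s_0}$.

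Putting the three stages together, Algorithm \ref{algorithm: never ending learning sparse noise} exactly recovers $\L$, $\U^r$, and $\E^{s_0}$ with probability at least $1-\delta$ after a union bound over the $n$ columns. The total sample count is $d$ entries per column, i.e.\ $\O(\mu_0 rn\log(r/\delta))$, with the $s_0+r\le d$ full-column measurements absorbed into the per-column budget via $s_0\le d-r-1$. The principal technical subtleties will be keeping the matrix-Chernoff guarantee uniform across the online updates of $\widehat{\B}^k$ and carefully handling the ordering issue in the identifiability argument so that every basis column receives at least one hit from a column processed after it is added.
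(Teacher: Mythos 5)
Your stages (1)--(3) mirror the paper's argument closely: rank preservation under row-subsampling via the matrix Chernoff bound (Proposition \ref{proposition: subsampling with same rank}), probability-one statements for the random noise columns from non-degeneracy, exactness and zero noise-coefficients when a spanned column is represented, and the counter-plus-identifiability argument for outlier removal (Lemma \ref{lemma: outlier removal}). The gap is in the final accounting step. You take a union bound over the $n$ processed columns, which forces the per-column failure probability to be $\delta/n$ and hence $d\gtrsim \mu_0 r\log(rn/\delta)$, i.e.\ a total of $\O\left(\mu_0 rn\log(n/\delta)\right)$ samples. That is strictly weaker than the claimed $\O\left(\mu_0 rn\log(r/\delta)\right)$, loses the match with the lower bound of Theorem \ref{theorem: lower bound in noiseless case}, and gives up the improvement over the passive-sampling rate that is the point of the theorem.

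The missing idea is that the probabilistic events do not need to be re-certified for each arriving column. In Algorithm \ref{algorithm: never ending learning sparse noise}, $\Omega$ is redrawn only when a vector is added to the dictionary, which happens at most $r+s_0$ times; and the event one actually needs for a fixed $\Omega$ is that row-restriction to $\Omega$ is injective on $\mathbf{span}(\E^{s})\oplus\U^r$ (equivalently, the subsampled matrix $[\E^{s},\,\text{basis of }\U^r]_{\Omega:}$ has full column rank). Once this holds, \emph{every} clean column outside the current span is detected and every spanned column is reconstructed exactly, deterministically, with no further per-column randomness --- this is why the paper's union bound runs only over the $r+s_0$ resamplings rather than over $n$ columns. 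This yields $d\gtrsim\mu_0 r\log\left((r+s_0)/\delta\right)$, and then the constraint $s_0\le d-r-1$ is used a second time, self-consistently: $d\gtrsim\mu_0 r\log(d/\delta)$, which solves to $d\asymp\mu_0 r\log(r/\delta)$ (under $\mu_0\le\mathrm{poly}(r/\delta)$). Your proposal invokes $s_0\le d-r-1$ only to absorb the cost of the fully measured columns, not for this bootstrapping of the logarithm, so as written it proves a $\log(n)$-dependent bound rather than the stated one; the rest of the argument is sound and essentially the paper's.
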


\begin{proof}[Proof of Theorem \ref{theorem: upper bound in noiseless case}]
We first prove a useful lemma which shows that the orthogonalization of a matrix does not change the rank of the matrix restricted on some rows/columns.
\begin{lemma}
\label{lemma: orth rank}
Let $\X=\U\mathbf{\Sigma}\V^T$ be the skinny SVD of $\X$, $\orthc(\X)=\U$, and $\orthr(\X)=\V^T$. Then for any set of coordinates $\Omega$ and any matrix $\X\in\R^{m\times n}$, we have
\begin{equation*}
\rank(\X_{\Omega:})=\rank([\orthc(\X)]_{\Omega:})\ \ \ \ \text{and}\ \ \ \ \rank(\X_{:\Omega})=\rank([\orthr(\X)]_{:\Omega}).
\end{equation*}
\end{lemma}
\begin{proof}
Let $\X=\U\mathbf{\Sigma}\V^T$ be the skinny SVD of matrix $\X$, where $\U=\orthc(\X)$ and $\V^T=\orthr(\X)$. On one hand,
\begin{equation*}
\X_{\Omega:}=\I_{\Omega:}\X=\I_{\Omega:}\U\mathbf{\Sigma}\V^T=[\orthc(\X)]_{\Omega:}\mathbf{\Sigma}\V^T.
\end{equation*}
So $\rank(\X_{\Omega:})\le\rank([\orthc(\X)]_{\Omega:})$. On the other hand, we have
\begin{equation*}
\X_{\Omega:}\V\mathbf{\Sigma}^{-1}=[\orthc(\X)]_{\Omega:}.
\end{equation*}
Thus $\rank([\orthc(\X)]_{\Omega:})\le\rank(\X_{\Omega:})$. So $\rank(\X_{\Omega:})=\rank([\orthc(\X)]_{\Omega:})$.

The second part of the argument can be proved similarly. Indeed, $\X_{:\Omega}=\U\mathbf{\Sigma}\V^T\I_{:\Omega}=\U\mathbf{\Sigma}[\orthr(\X)]_{:\Omega}$ and $\mathbf{\Sigma}^{-1}\U^T\X_{:\Omega}=[\orthr(\X)]_{:\Omega}$. So $\rank(\X_{:\Omega})=\rank([\orthr(\X)]_{:\Omega})$, as desired.
\end{proof}

We then investigate the effect of sampling on the rank of a matrix.
%\noindent{\textbf{Proposition \ref{proposition: subsampling with same rank}} (Restated)\textbf{.}}
%\emph{
\begin{proposition}
\label{proposition: subsampling with same rank}
Let $\L\in\R^{m\times n}$ be any rank-$r$ matrix with skinny SVD $\U\mathbf{\Sigma}\V^T$. Denote by $\L_{:\Omega}$ the submatrix formed by subsampling the columns of $\L$ with i.i.d. Ber($d/n$). If $d\ge 8\mu(\V)r\log(r/\delta)$, then with probability at least $1-\delta$, we have $\rank(\L_{:\Omega})=r$. Similarly, denote by $\L_{\Omega:}$ the submatrix formed by subsampling the rows of $\L$ with i.i.d. Ber($d/m$). If $d\ge 8\mu(\U)r\log(r/\delta)$, then with probability at least $1-\delta$, we have $\rank(\L_{\Omega:})=r$.
\end{proposition}
%}

\begin{proof}
We only prove the first part of the argument. For the second part, applying the first part to matrix $\L^T$ gets the result. Denote by $\T$ the matrix $\V^T=\orthr(\L)$ with orthonormal rows, and by $\X=\sum_{i=1}^n\delta_i\T_{:i}e_i^T\in\R^{r\times n}$ the sampling of columns from $\T$ with $\delta_i\sim\mbox{Ber}(d/n)$. Let $\X_i=\delta_i\T_{:i}e_i^T$. Define positive semi-definite matrix
\begin{equation*}
\Y=\X\X^T=\sum_{i=1}^n\delta_i\T_{:i}\T_{:i}^T.
\end{equation*}
Obviously, $\sigma_r^2(\X)=\lambda_r(\Y)$. To invoke the matrix Chernoff bound, we estimate the parameters $L$ and $\mu_r$ in Lemma \ref{lemma: matrix chernoff bound}. Specifically, note that
\begin{equation*}
\mathbb{E}\Y=\sum_{i=1}^n\mathbb{E}\delta_i\T_{:i}\T_{:i}^T=\frac{d}{n}\sum_{i=1}^n\T_{:i}\T_{:i}^T=\frac{d}{n}\T\T^T.
\end{equation*}
Therefore, $\mu_r=\lambda_r(\mathbb{E}\Y)=d\sigma_r^2(\T)/n>0$. Furthermore, we also have
\begin{equation*}
\lambda_{\max}(\X_i)=\|\delta_i\T_{:i}\|_2^2\le\|\T\|_{2,\infty}^2\triangleq L.
\end{equation*}
By the matrix Chernoff bound where we set $\epsilon=1/2$,
\begin{equation*}
\begin{split}
\Pr\left[\sigma_r(\X)> 0\right]&=\Pr\left[\lambda_r(\Y)> 0\right]\\
& \ge \Pr\left[\lambda_r(\Y)>\frac{1}{2}\mu_r\right]\\
& = \Pr\left[\lambda_r(\Y)>\frac{d}{2n}\sigma_r^2(\T)\right]\\
&\ge 1-r\exp\left(-\frac{d\sigma_r^2(\T)}{8n\|\T\|_{2,\infty}^2}\right)\\
&\triangleq 1-\delta.
\end{split}
\end{equation*}
So if
\begin{equation*}
d\ge \frac{8n\|\T\|_{2,\infty}^2}{\sigma_r^2(\T)}\log\frac{r}{\delta}=8n\|\T\|_{2,\infty}^2\log\left(\frac{r}{\delta}\right),
\end{equation*}
then $\Pr\left[\sigma_{k+1}(\X)=0\right]\le\delta$, where the last equality holds since $\sigma_r(\T)=\sigma_r(\V^T)=1$. Note that
\begin{equation*}
\|\T\|_{2,\infty}^2\le \max_{i\in[n]}\|\V^T \e_i\|_2^2\le \frac{r}{n}\mu(\V).
\end{equation*}
So if $d\ge 8\mu(\V)r\log (r/\delta)$ then with probability at least $1-\delta$, $\rank(\T_{:\Omega})=r$. Also, by Lemma \ref{lemma: orth rank}, $\rank(\T_{:\Omega})=\rank([\orthr(\L)]_{:\Omega})=\rank(\L_{:\Omega})$. Therefore, $\rank(\L_{:\Omega})=r$ with a high probability, as desired.
\end{proof}

\comment{
\begin{lemma}[Noiseless Case]
\label{lemma: noiseless rank argument}
Let $\U^k\in\R^{m\times k}$ be a $k$-dimensional subspace of $\U^r$, $\L_{:t}\in\U^r$ but $\L_{:t}\not\in \U^k$. Suppose we get access to a set of coordinates $\Omega\subset [m]$ of size $d$ uniformly at random with replacement. If $d\ge 8\mu_0r\log((k+1)/\delta)$ then with probability at least $1-\delta$, $\rank\left([\U_{\Omega:}^k,\L_{\Omega t}]\right)=k+1$. Inversely, if $\L_{:t}\in \U^k$ and $d\ge 8\mu_0r\log((k+1)/\delta)$, then $\rank\left([\U_{\Omega:}^k,\M_{\Omega t}]\right)=k$ with probability $1$, and $\U^k\U_{\Omega:}^{k\dag}\L_{\Omega t}=\L_{: t}$ with probability at least $1-\delta$.
\end{lemma}
\begin{proof}
The former part of the argument is an immediate result from Proposition \ref{proposition: subsampling with same rank}, where we have used the fact that $\mu_0r\ge \mu([\U^k,\L_{: t}])(k+1)$.

The latter part of the argument is from the fact that $\L_{:t}\in\U^k$ implies $\L_{\Omega t}\in \U_{\Omega:}^k$. Furthermore, from Proposition \ref{proposition: subsampling with same rank}, we have $\rank(\U_{\Omega:}^k)=\rank(\U^k)=k$ with probability $1-\delta$. So $(\U_{\Omega:}^{kT}\U_{\Omega:}^k)^{-1}$ exists. Let $\L_{:t}=\U^kv$ since $\L_{:t}\in\U^k$. Then $\L_{\Omega t}=\U_{\Omega:}^kv$. Therefore, we have that $\U^k\U_{\Omega:}^{k\dag}\L_{\Omega t}=\U^k\U_{\Omega:}^{k\dag}\L_{\Omega t}=\U^k(\U_{\Omega:}^{kT}\U_{\Omega:}^k)^{-1}\U_{\Omega:}^{kT}\L_{\Omega t}=\U^k(\U_{\Omega:}^{kT}\U_{\Omega:}^k)^{-1}\U_{\Omega:}^{kT}\U_{\Omega :}^kv=\U^kv=\L_{:t}$, as desired.
\end{proof}
}

\comment{
Lemma \ref{lemma: noiseless rank argument} is for the noise-free setting. To extend to the case of sparse corruption, note that Lemma \ref{lemma: facts on non-degenerate distribution} implies that replacing $\U^k$ with $[\E^s,\U^k]$ will not influence the result of Lemma \ref{lemma: noiseless rank argument} due to the randomness of $\E^s$. Indeed, the idea comes from the fact that random vectors from non-degenerate distribution do not concentrate on any specific subspace. So expanding $\U^k$ by $\E^s$ does not affect the representation of any fixed vector $\M_{:t}$: If $\M_{:t}$ can be represented by $\U^k$, so can $[\E^s,\U^k]$; Inversely, if $\M_{:t}$ cannot be represented by $\U^k$, it cannot be a linear combination of $[\E^s,\U^k]$ as well. Formally, our results are as follows:}

We now study the effectiveness of our representation step.
\begin{lemma}
\label{lemma: outlier rank argument}
Let $\U^k\in\R^{m\times k}$ be a $k$-dimensional subspace of $\U^r$. Suppose we get access to a set of coordinates $\Omega\subset [m]$ of size $d$ uniformly at random without replacement. Let $s\le d-r-1$ and $d\ge c\mu_0r\log(k/\delta)$ for a universal constant $c$.
\begin{itemize}
\item
If $\M_{:t}\in \U^r$ but $\M_{:t}\not\in \U^k$ then with probability at least $1-\delta$, $\rank\left([\E_{\Omega:}^s,\U_{\Omega:}^k,\M_{\Omega t}]\right)=s+k+1$.
\item
If $\M_{:t}\in \U^k$, then $\rank\left([\E_{\Omega:}^s,\U_{\Omega:}^k,\M_{\Omega t}]\right)=s+k$ with probability $1$, the representation coefficients of $\M_{:t}$ corresponding to $\E^s$ in the dictionary $[\E^s,\U^k]$ is $\0$ with probability $1$, and $[\E^s,\U^k][\E_{\Omega:}^{s},\U_{\Omega:}^{k}]^{\dag}\M_{\Omega t}=\M_{: t}$ with probability at least $1-\delta$.
\item
If $\M_{:t}\not\in \U^r$, i.e., $\M_{:t}$ is an outlier drawn from a non-degenerate distribution, then $\rank\left([\E_{\Omega:}^s,\U_{\Omega:}^k,\M_{\Omega t}]\right)=s+k+1$ with probability $1-\delta$.
\end{itemize}
\end{lemma}
\begin{proof}
\comment{
For the first part of the lemma, by Fact 3 of Lemma \ref{lemma: facts on non-degenerate distribution}, the rank of the matrix $[\E_{\Omega:}^s,\U_{\Omega:}^k,\M_{\Omega t}]$ is determined by that of $[\U_{\Omega:}^k,\M_{\Omega t}]$, namely, they have relation $\rank([\E^s,\U^k,\M_{:t}]_{\Omega:})=s+\rank([\U^k,\M_{:t}]_{\Omega:})$.}

For the first part of the lemma, note that $\rank([\U^k,\M_{:t}])=k+1$. So according to Proposition \ref{proposition: subsampling with same rank}, with probability $1-\delta$ we have that $\rank([\U^k,\M_{:t}]_{\Omega :})=k+1$ since $d\ge c\mu_0r\log((k+1)/\delta)\ge 8\mu([\U^k,\M_{:t}])k\log((k+1)/\delta)$ (Because $\M_{:t}\in\U^r$). Recall Facts 3 and 4 of Lemma \ref{lemma: facts on non-degenerate distribution} which imply that $\rank([\E^s,\U^k,\M_{:t}]_{\Omega :})=s+k+1$ when $s\le d-r-1$. This is what we desire.

\comment{
For the first part of the lemma, by Facts 2 and 4 of Lemma \ref{lemma: facts on non-degenerate distribution}, the $\E_{\Omega :}^s$ cannot linearly represent the given vector $\M_{\Omega t}$ due to the randomness. Thus the problem of whether $\M_{\Omega t}$ can be represented by a given dictionary $[\E_{\Omega:}^s,\U_{\Omega:}^k]$ is totally determined by whether it can be represented by $\U_{\Omega:}^k$, i.e., whether $\rank([\U_{\Omega:}^k,\M_{\Omega t}])=k+1$. By Proposition \ref{proposition: subsampling with same rank}, we have that once $d\ge 8\mu([\U^k,\M_{:t}]) (k+1)\log((k+1)/\delta)$, $\rank([\U_{\Omega:}^k,\M_{\Omega t}])=\rank([\U^k,\M_{:t}])=k+1$. Finally, the conclusion follows from the property that $\mu_0r\ge \mu([\U^k,\M_{: t}])(k+1)$.}

\comment{
The first part of the lemma is by Lemma \ref{lemma: noiseless rank argument} and the fact that $\rank([\E^s,\U^k,\M_{:t}]_{\Omega:})=s+\rank([\U^k,\M_{:t}]_{\Omega:})$ (Fact 3 of Lemma~\ref{lemma: facts on non-degenerate distribution}).}

For the middle part, the statement $\rank\left([\E_{\Omega:}^s,\U_{\Omega:}^k,\M_{\Omega t}]\right)=s+k$ comes from the assumption that $\M_{:t}\in\U^k$, which implies that $\M_{\Omega t}\in\U_{\Omega :}^k$ with probability $1$, and that $\rank\left([\E_{\Omega:}^s,\U_{\Omega:}^k]\right)=s+k$ when $s\le d-r-1$ (Facts 3 and 4 of Lemma \ref{lemma: facts on non-degenerate distribution}). Now suppose that the representation coefficients of $\M_{:t}$ corresponding to $\E^s$ in the dictionary $[\E^s,\U^k]$ is NOT $\0$ and $\M_{:t}\in \U^k$. Then $\M_{:t}-\U^k\c\in\mathbf{span}(\E^s)$, where $\c$ is the representation coefficients of $\M_{:t}$ corresponding to $\U^k$ in the dictionary $[\E^s,\U^k]$. Also, note that $\M_{:t}-\U^k\c\in\U^k$. So $\rank[\E^s,\M_{:t}-\U^k\c]=s$, which is contradictory with Fact 2 of Lemma \ref{lemma: facts on non-degenerate distribution}. So the coefficient w.r.t. $\E^s$ in the dictionary $[\E^s,\U^k]$ is $\0$, and we have that $[\E^s,\U^k][\E_{\Omega:}^{s},\U_{\Omega:}^{k}]^{\dag}\M_{\Omega t}=\U^k\U_{\Omega:}^{k\dag}\M_{\Omega t}=\U^k(\U_{\Omega:}^{kT}\U_{\Omega:}^k)^{-1}\U_{\Omega:}^{kT}\M_{\Omega t}=\U^k(\U_{\Omega:}^{kT}\U_{\Omega:}^k)^{-1}\U_{\Omega:}^{kT}\U_{\Omega :}^k\v=\U^k\v=\M_{: t}$, where $\v$ is the representation coefficient of $\M_{:t}$ w.r.t. $\U^k$. (The $(\U_{\Omega:}^{kT}\U_{\Omega:}^k)^{-1}$ exists because $\rank(\U_{\Omega:}^k)=k$ by Proposition \ref{proposition: subsampling with same rank})

As for the last part of the lemma, note that by Facts 2 and 4 of Lemma \ref{lemma: facts on non-degenerate distribution}, $\rank([\E^s,\M_{:t}]_{\Omega:})=s+1$. Then by Fact 3 of Lemma \ref{lemma: facts on non-degenerate distribution} and the fact that $\U_{\Omega:}^k$ has rank $k$ (Proposition \ref{proposition: subsampling with same rank}), we have $\rank\left([\E_{\Omega:}^s,\U_{\Omega:}^k,\M_{\Omega t}]\right)=s+k+1$ when $s\le d-r-1$, as desired.
\end{proof}

Now we are ready to prove Theorem \ref{theorem: upper bound in noiseless case}.
The proof of Theorem \ref{theorem: upper bound in noiseless case} is an immediate result of Lemma \ref{lemma: outlier rank argument} by using the union bound on the samplings of $\Omega$. Although Lemma \ref{lemma: outlier rank argument} states that, for a specific column $\M_{:t}$, the algorithm succeeds with probability at least $1-\delta$, the probability of success that uniformly holds for all columns is $1-(r+s_0)\delta$ rather than $1-n\delta$. This observation is from the proof of Lemma \ref{lemma: outlier rank argument}: $[\E^s,\U^k][\E_{\Omega:}^{s},\U_{\Omega:}^{k}]^{\dag}\M_{\Omega t}=\M_{: t}$ holds so long as $(\U_{\Omega:}^{kT}\U_{\Omega:}^k)^{-1}$ exists. Since in Algorithm \ref{algorithm: never ending learning sparse noise} we resample $\Omega$ if and only if we add new vectors into the basis matrix, which happens at most $r+s_0$ times, the conclusion follows from the union bound of the $r+s_0$ events. Thus, to achieve a global probability of $1-\delta$, the sample complexity for each upcoming column is $\Theta(\mu_0r\log(r+s_0/\delta))$. Since we also require that $s_0\le d-r-1$, the algorithm succeeds with probability $1-\delta$ once $d\ge \Theta(\mu_0r\log(d/\delta))$. Solving for $d$, we obtain that $d\gtrsim \mu_0r\log(\mu_0^2r^2/\delta^2)\asymp \mu_0r\log(r/\delta)$\footnote{We assume here that $\mu_0\le \mbox{poly}(r/\delta)$.}. The total sample complexity for Algorithm \ref{algorithm: never ending learning sparse noise} is thus $\Theta(\mu_0rn\log(r/\delta))$.

For the exact identifiability of the outliers, we have the following guarantee:

\begin{lemma}[Outlier Removal]
\label{lemma: outlier removal}
Let the underlying subspace $\U^r$ be identifiable, i.e., for each $i\in [r]$, there are at least two columns $\M_{:a_i}$ and $\M_{:b_i}$ of $\M$ such that $[\orthc(\U^r)]_{:i}^T\M_{:a_i}\not=0$ and $[\orthc(\U^r)]_{:i}^T\M_{:b_i}\not=0$. Then the entries of $\C$ in Algorithm \ref{algorithm: never ending learning sparse noise} corresponding to $\U^r$ cannot be $0$'s.
\end{lemma}

\begin{proof}
Without loss of generality, let $\U^r$ be orthonormal. Suppose that the lemma does not hold true. Then there must exist one column $\U_{:i}^r$ of $\U^r$, say e.g., $\e_i$, such that $\e_i^T\M_{:t}=0$ for all $t$ except when the index $t$ corresponds exactly to the $\U_{:i}^r$. This is contradictory with the condition that the subspace $\U^r$ is identifiable. The proof is completed.
\end{proof}

Thus the proof of Theorem \ref{theorem: upper bound in noiseless case} is completed.
\end{proof}

Theorem \ref{theorem: upper bound in noiseless case} implies an immediate result in the noise-free setting as $\epsilon_{noise}$ goes to zero. In particular, $\O\left(\mu_0nr\log (r/\delta)\right)$ measurements are sufficient so that our algorithm outputs a solution that is exact with probability at least $1-\delta$.
This sample complexity improves over existing results of $\O\left(\mu_0nr\log^2 (n/\delta)\right)$~\cite{recht2011simpler} and $\O\left(\mu_0nr^{3/2}\log (r/\delta)\right)$~\cite{krishnamurthy2013low}, and over $\O\left(\mu_0nr\log^2 (r/\delta)\right)$ of Theorem \ref{theorem: noise tolerance bound} when $\epsilon_{noise}=0$. Indeed, our sample complexity $\O\left(\mu_0nr\log (r/\delta)\right)$ matches the lower bound, as shown by Theorem \ref{theorem: lower bound in noiseless case} (See Table \ref{table: comparison of sample complexity} for comparisons of sample complexity). We notice another paper of Gittens~\cite{gittens2011spectral} which showed that Nsytr$\ddot{\mbox{o}}$m method recovers a \emph{positive-semidefinite} matrix of rank $r$ from uniformly sampling $\O(\mu_0r\log (r/\delta))$ columns. While this result matches our sample complexity, the assumptions of positive-semidefiniteness and of subsampling the columns are impractical in the online setting.
\begin{table}
\caption{Comparisons of our sample complexity with the best prior results in the noise-free setting.}
\label{table: comparison of sample complexity}
\centering
%\footnotesize
%\scriptsize
%\hskip -1.1cm
\begin{tabular}{c|c|cc}%
\hline
& Passive Sampling & \multicolumn{2}{c}{Adaptive Sampling}
\\ \hline
Upper Bound & $\O\left(\mu_0nr\log^2 (n/\delta)\right)$\cite{recht2011simpler} & $\O\left(\mu_0nr\log^2 (r/\delta)\right)$\cite{krishnamurthy2014power} & $\O\left(\mu_0nr\log (r/\delta)\right)$ (Ours)\\
Lower bound & $\O\left(\mu_0nr\log (n/\delta)\right)$\cite{Candes2010power} & \multicolumn{2}{c}{$\O\left(\mu_0nr\log (r/\delta)\right)$ (Ours)}\\ \hline
\end{tabular}
\end{table}
We compare Theorem \ref{theorem: upper bound in noiseless case} with prior methods on decomposing an incomplete matrix as the sum of a low-rank term and a column-sparse term. Probably one of the best known algorithms is Robust PCA via Outlier Pursuit~\cite{Xu:CSPCP,zhang2015relations,Zhang2015AAAI,zhang2015completing}. Outlier Pursuit converts this problem to a convex program:

\begin{equation}
\label{equ: outlier pursuit}
\min_{\L,\E} \|\L\|_*+\lambda\|\E\|_{2,1},\ \ \mbox{s.t.}\ \ \P_{\Omega}\M=\P_{\Omega}(\L+\E),
\end{equation}
where $\|\cdot\|_*$ captures the low-rankness of the underlying subspace and $\|\cdot\|_{2,1}$ captures the column-sparsity of the noise. Recent papers on Outlier Pursuit~\cite{zhang2015completing} prove that the solution to \eqref{equ: outlier pursuit} exactly recovers the underlying subspace, provided that $d\ge c_1\mu_0^2r^2\log^3 n$ and $s_0\le c_2d^4n/(\mu_0^5r^5m^3\log^6 n)$ for constants $c_1$ and $c_2$. Our result definitely outperforms the existing result in term of the sample complexity $d$, while our dependence of $s_0$ is not always better (although in some cases better) when $n$ is large. Note that while Outlier Pursuit loads all columns simultaneously and so can exploit the global low-rank structure, our algorithm is online and therefore cannot tolerate too much noise.

\subsubsection{Lower Bound}
We now establish a lower bound on the sample complexity. Our lower bound shows that in our adaptive sampling setting, one needs at least $\Omega\left(\mu_0rn\log\left(r/\delta\right)\right)$ many samples in order to uniquely identify a certain matrix \emph{in the worst case}. This lower bound matches our analysis of upper bound in Section \ref{section: upper bound}.

\comment{
To formalize the setting, we consider a sampling oracle in which one could take any strategy to learn the column space of the hidden matrix; After that, one can only sample the entries of arriving column uniformly at random. Such sampling oracle contains our adaptive sampling scheme as a special case, where our strategy to learn the column space is to request the whole column that does not belong to the current basis.
The following theorem provides lower bound on the sample complexity under our sampling scheme.}

\begin{theorem}[Lower Bound on Sample Complexity]
\label{theorem: lower bound in noiseless case}
Let $0<\delta<1/2$, and $\Omega\sim \mathrm{Uniform}(d)$ be the index of the row sampling $\subseteq [m]$. Suppose that $\U^r$ is $\mu_0$-incoherent. If the total sampling number
$dn< c\mu_0rn\log\left(r/\delta\right)$ for a constant $c$, then with probability at least $1-\delta$, there is an example of $\M$ such that under the sampling model of Section \ref{sec: problem setup} (i.e., when a column arrives the choices are either (a) randomly sample or (b) view the entire column), there exist infinitely many matrices $\L'$ of rank $r$ obeying $\mu_0$-incoherent condition on column space such that $\L'_{\Omega:}=\L_{\Omega:}$.
\end{theorem}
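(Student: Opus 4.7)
The plan is to exhibit a specific $\mu_0$-incoherent rank-$r$ matrix $\L$ for which, in the sampling model of Section~\ref{sec: problem setup}, any algorithm drawing fewer than $c\mu_0 rn\log(r/\delta)$ entries in total fails with probability at least $1-\delta$ to uniquely determine $\L$. The hardness mechanism will be a coupon-collector phenomenon: the instance will be engineered so that each column cannot be reconstructed from uniformly random samples unless those samples touch every one of $r$ tightly concentrated row blocks.

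For the construction, assume $\mu_0\le m/r$ (otherwise the incoherence constraint is vacuous) and partition $[m]$ into $r$ disjoint blocks $S_1,\dots,S_r$ of size $\lfloor m/(\mu_0 r)\rfloor$ each. Let $\u_j$ be the normalized indicator vector of $S_j$, set $\U^r=[\u_1,\dots,\u_r]$, and take $\L=\U^r\S$ for any $\S\in\R^{r\times n}$ with no zero entries. Then $\U^r$ has orthonormal columns and $\max_i\|\P_{\U^r}\e_i\|_2^2=\mu_0 r/m$, so $\mu(\U^r)=\mu_0$ as required.

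Next I would argue per-column non-identifiability. Suppose column $t$ is handled by option~(a) with a uniformly random subset $\Omega_t\subset[m]$ of size $d_t$. Because the blocks are disjoint, $\rank(\U_{\Omega_t:}^r)$ equals exactly the number of blocks hit by $\Omega_t$; whenever some $S_j$ is missed, the $j$-th column of $\U_{\Omega_t:}^r$ is identically zero, and there are infinitely many $\v'\in\R^r$ solving $\U_{\Omega_t:}^r\v'=\M_{\Omega_t t}$. Each such $\v'$ yields a distinct rank-$r$ matrix $\L'$ agreeing with $\L$ on $\Omega_t$ and still $\mu_0$-incoherent, since the column space is unchanged. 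A Bonferroni lower bound on the $r$ block-miss events shows that if $d_t<c_0\mu_0 r\log(r/\delta)$ for a sufficiently small universal constant $c_0$, then the probability of missing at least one block is at least $\delta$.

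Finally, a counting argument handles the mixture of options (a) and (b). Let $k$ be the number of columns handled by option (b) (each costing $m$ samples) and $d_t$ the random-sample count for each of the remaining $n-k$ columns, so that $km+\sum_t d_t<c\mu_0 rn\log(r/\delta)$. In the interesting regime $m\gtrsim c\mu_0 r\log(r/\delta)$ --- outside of which even a single fully-viewed column consumes a noticeable share of the budget --- this forces at least a constant fraction of columns to be randomly sampled with $d_t<2c\mu_0 r\log(r/\delta)$. Since the sampling sets $\Omega_t$ are drawn independently across columns, the per-column miss events are independent, and a standard Chernoff bound gives that with probability at least $1-\delta$ at least one such column hits the missed-block event, producing the claimed infinite family of consistent $\L'$. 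The main obstacle is the lower-tail coupon-collector estimate calibrating $c_0$ to the prescribed failure probability; Bonferroni/inclusion-exclusion on the $r$ miss events should suffice, and adaptivity gives the algorithm no advantage since the hidden block partition of $\U^r$ is uncorrelated with the algorithm's past observations until some block is first hit.
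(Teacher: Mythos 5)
Your construction is the same as the paper's: partition $[m]$ into $r$ blocks of size roughly $m/(\mu_0 r)$, take $\U^r$ to be the normalized block indicators (so $\mu(\U^r)=\mu_0$), and argue that whenever the uniformly drawn row set misses an entire block, the coefficient of the corresponding basis vector is unconstrained, yielding infinitely many $\mu_0$-incoherent rank-$r$ matrices consistent with the observations. Up to that point your argument and the paper's coincide, and your per-column calibration (miss probability exceeds $\delta$ once $d_t< c_0\mu_0 r\log(r/\delta)$) is exactly the contrapositive computation the paper performs with the Bernoulli/uniform equivalence.

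The gap is in your final amplification step. You assume the sets $\Omega_t$ are drawn independently for each column and invoke a Chernoff bound across columns to push the failure probability from $\delta$ up to $1-\delta$. This does not match the sampling model the theorem refers to: in Algorithms \ref{algorithm: never ending learning} and \ref{algorithm: never ending learning sparse noise}, $\Omega$ is redrawn only when a column is fully measured and added to the basis, and the theorem's conclusion is phrased for a single row-index set ($\L'_{\Omega:}=\L_{\Omega:}$, with $\Omega\sim\mathrm{Uniform}(d)$). Once the column space has been learned, the same $\Omega$ is reused for all remaining columns, so your block-miss events across columns are perfectly correlated rather than independent, and the cross-column boosting collapses. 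Even if one granted independence, per-column failure probability $\ge\delta$ only yields global failure probability $\ge 1-\delta$ when $n\gtrsim \log(1/\delta)/\delta$, a condition nowhere assumed. The paper avoids this entirely: it works with the single reused row sample, requires that \emph{every} one of the $r$ blocks receive at least one sampled row with probability $\ge 1-\delta$ (the success event for identification), and solves that constraint for $p$ to obtain the necessary bound $d\gtrsim\mu_0 r\log(r/\delta)$; the adaptivity issue is dispatched by simply granting the algorithm the column space for free. To repair your proof you should drop the independence/Chernoff step and instead derive the threshold on $d$ from the requirement that all $r$ blocks are hit under one row sample, as the paper does (also note that, as written, your claim that each consistent $\v'$ gives a rank-$r$, still-incoherent $\L'$ needs the remaining columns of $\S$ to span $\R^r$, which "no zero entries" alone does not guarantee).
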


\begin{proof}
We prove the theorem by assuming that the underlying column space is known. Since we require additional samples to estimate the subspace, the proof under this assumption gives a lower bound. Let $\ell=\left\lfloor\frac{m}{\mu_0r}\right\rfloor$. Construct the underlying matrix $\L$ by
\begin{equation*}
\L=\sum_{k=1}^rb_k\u_k\u_k^T,
\end{equation*}
where the known $\u_k$ (Because the column space is known) is defined as
\begin{equation*}
\u_k=\sqrt{\frac{1}{\ell}}\sum_{i\in B_k}\e_i,\ \ \ \ B_k=\{(k-1)\ell+1,(k-1)\ell+2,...,k\ell\}.
\end{equation*}
So the matrix $\L$ is a block diagonal matrix formulated as Figure \ref{figure: construction of L}.
\begin{figure}[ht]
\centering
\includegraphics[width=0.6\textwidth]{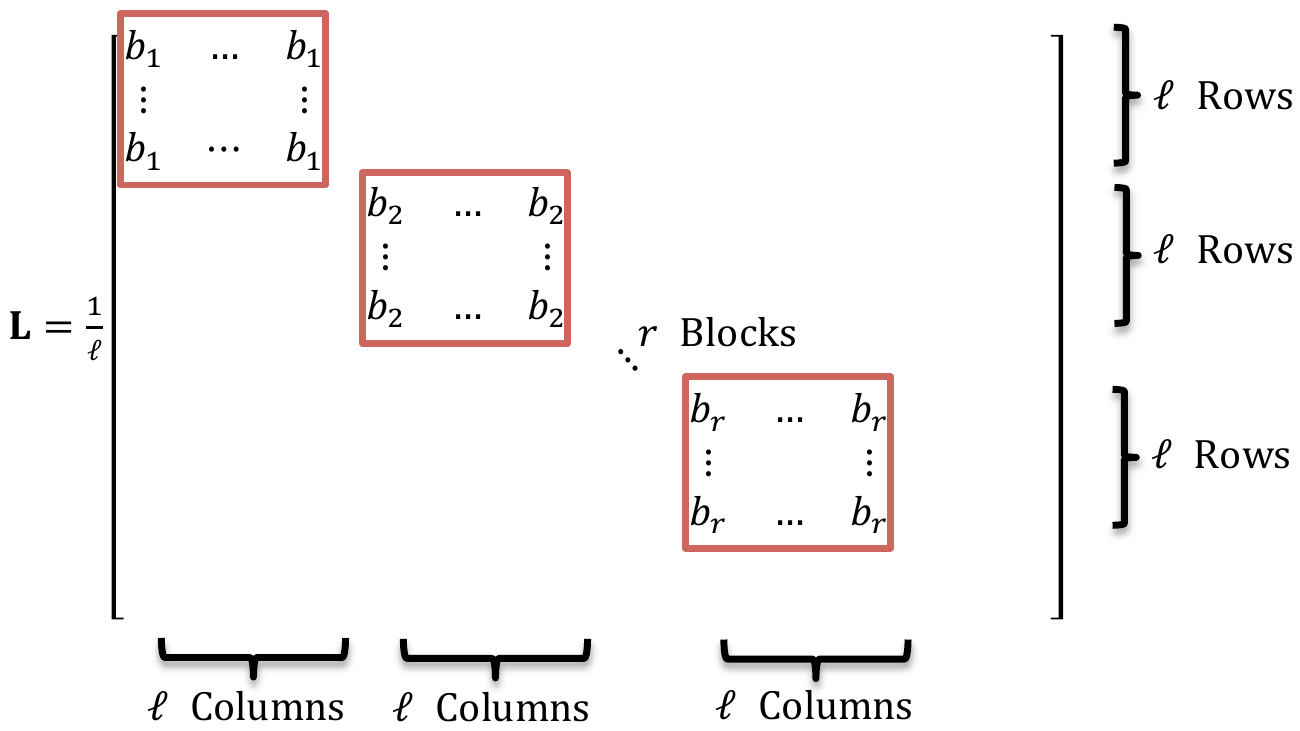}
\caption{Construction of underlying matrix $\L$.}
\label{figure: construction of L}
\end{figure}
Further, construct the noisy matrix $\M$ by
$
\M=[\L,\E].
$
The matrix $\E\in\R^{m\times s_0}$ corresponds to the outliers, and the matrix $\L$ corresponds to the underlying matrix.

Notice that the information of $b_k$'s is only implied in the corresponding block of $\L$.
So overall, the lower bound is given by solving from the inequality
\begin{equation*}
\Pr\{\mbox{For all blocks, there must be at least one row being sampled}\}\ge 1-\delta.
\end{equation*}
We highlight that the $b_k$'s can be chosen arbitrarily in that they do not change the coherence of the column space of $\L$. Also, it is easy to check that the column space of $\L$ is $\mu_0$-incoherent. By construction, the underlying matrix $\L$ is block-diagonal with $r$ blocks, each of which is of size $\ell\times \ell$. According to our sampling scheme, we always sample the same positions of the arriving column after the column space is known to us. This corresponds to sample the row of the matrix in hindsight. To recover $\L$, we argue that each block should have at least one row fully observed; Otherwise, there is no information to recover $b_k$'s. Let $A$ be the event that for a fixed block, none of its rows is observed. The probability $\pi_0$ of this event $A$ is therefore $\pi_0=(1-p)^\ell$, where $p$ is the Bernoulli sampling parameter. Thus by independence, the probability of the event that there is at least one row being sampled holds true \emph{for all diagonal blocks} is $(1-\pi_0)^{r}$, which is $\ge 1-\delta$ as we have argued. So
\begin{equation*}
-r\pi_0\ge r\log(1-\pi_0)\ge \log(1-\delta),
\end{equation*}
where the first inequality is due to the fact that $-x\ge \log(1-x)$ for any $x<1$. Since we have assumed $\delta<1/2$, which implies that $\log(1-\delta)\ge -2\delta$, thus $\pi_0\le 2\delta/r$. Note that $\pi_0=(1-p)^\ell$, and so
\begin{equation*}
-\log(1-p)\ge \frac{1}{\ell}\log\left(\frac{r}{2\delta}\right)\ge\frac{\mu_0 r}{m}\log\left(\frac{r}{2\delta}\right).
\end{equation*}
This is equivalent to
\begin{equation*}
mp\ge m\left(1-\exp\left(-\frac{\mu_0 r}{m}\log \frac{r}{2\delta}\right)\right).
\end{equation*}
Note that $1-e^{-x}\ge x-x^2/2$ whenever $x\ge 0$, we have
\begin{equation*}
mp\ge (1-\epsilon/2)\mu_0r\log\left(\frac{r}{2\delta}\right),
\end{equation*}
where $\epsilon=\mu_0r\log(r/2\delta)<1$. Finally, by the equivalence between the uniform and Bernoulli sampling models (i.e., $d\approx mp$, Lemma \ref{lemma: number of sampling by Bernoulli}), the proof is completed.
\end{proof}

We mention several lower bounds on the sample complexity for passive matrix completion. The first is the paper of Cand\`{e}s and Tao~\cite{Candes2010power}, that gives a lower bound of $\Omega(\mu_0nr\log(n/\delta))$ if the matrix has both incoherent rows and columns. Taking a weaker assumption, Krishnamurthy and Singh~\cite{krishnamurthy2013low,krishnamurthy2014power} showed that if the row space is \emph{coherent}, any passive sampling scheme followed by any recovery algorithm must have $\Omega(mn)$ measurements. In contrast, Theorem \ref{theorem: lower bound in noiseless case} demonstrates that in the absence of row-space incoherence, exact recovery of the matrix is possible with only $\Omega(\mu_0nr\log(r/\delta))$ samples, if the sampling scheme is adaptive.

\subsubsection{Extension to Mixture of Subspaces}
\begin{wrapfigure}{R}{5cm}
\subfigure[Single Subspace]{
\includegraphics[width=5cm]{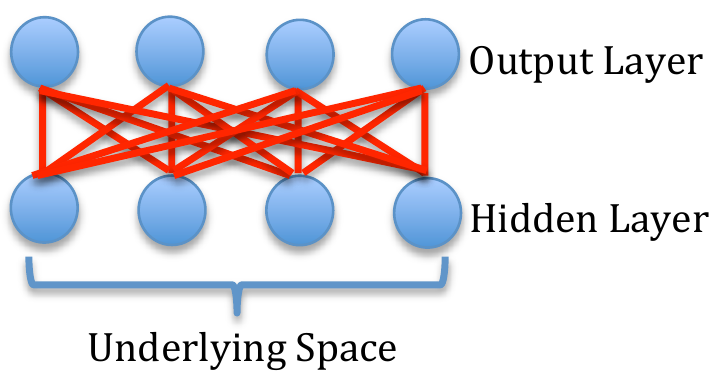}
}
\subfigure[Mixture of Subspaces]{
\includegraphics[width=5cm]{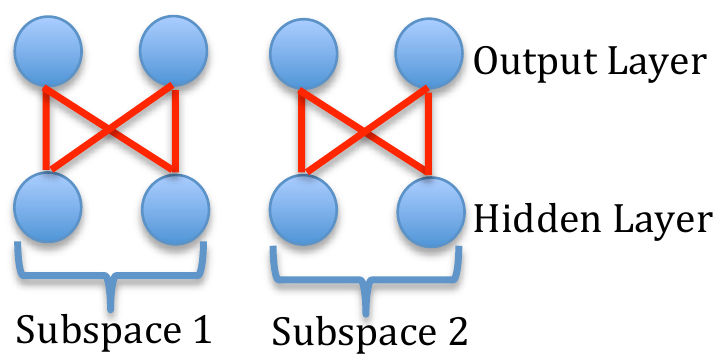}}
\caption{Subspace structure.}
\label{figure: subspace structure}
\end{wrapfigure}
Theorem \ref{theorem: lower bound in noiseless case} gives a lower bound on sample complexity in the \emph{worst} case. In this section, we explore the possibility of further reducing the sample complexity with more complex common structure. We assume that the underlying subspace is a mixture of $h$ independent subspaces\footnote{$h$ linear subspaces are independent if the dimensionality of their sum is equal to the sum of their dimensions.}~\cite{lerman2014l_p}, each of which is of dimension at most $\tau\ll r$. Such an assumption naturally models settings in which there are really $h$ different categories of movies/news while they share a certain commonality across categories.
We can view this setting as a network with two layers: The first layer captures the overall subspace with $r$ metafeatures; The second layer is an output layer, consisting of metafeatures each of which is a linear combination of only $\tau$ metafeatures in the first layer. See Figures \ref{figure: subspace structure} for visualization. Our argument shows that the sparse connections between the two layers significantly improve the sample complexity.

Algorithmically, given a new column, we uniformly sample $\tilde \O(\tau\log r)$ entries as our observations. We try to represent those elements by a \emph{sparse} linear combination of only $\tau$ columns in the basis matrix, whose rows are truncated to those sampled indices; If we fail, we measure the column in full, add that column into the dictionary, and repeat the procedure for the next arriving column. The detailed algorithm can be found in Algorithm \ref{algorithm: never ending learning sparse noise mixture of subspaces}.
\begin{algorithm}[ht]
\caption{Noise-Tolerant Life-Long Matrix Completion under Random Noise for Mixture of Subspaces}
\begin{algorithmic}
\label{algorithm: never ending learning sparse noise mixture of subspaces}
\STATE {\bfseries Input:} Columns of matrices arriving over time.
\STATE {\bfseries Initialize:} Let the basis matrix $\widehat{\B}^0=\emptyset$, the counter $\C=\emptyset$. Randomly draw entries $\Omega\subset [m]$ of size $d$ uniformly without replacement.
\STATE {\bfseries 1:} \textbf{For} each column $t$ of $\M$, \textbf{do}
\STATE {\bfseries 2:}\ \ \ \ \ \ \ \ (a) If there does not exist a $\tau$-sparse linear combination of columns of $\widehat{\B}_{\Omega :}^k$ that represents $\M_{\Omega t}$ exactly
\STATE {\bfseries 3:}\ \ \ \ \ \ \ \ \ \ \ \ \ i. Fully measure $\M_{:t}$ and add it to the basis matrix $\widehat{\B}^k$.
\STATE {\bfseries 4:}\ \ \ \ \ \ \ \ \ \ \ \ \ ii. $\C:=[\C,0]$.
\STATE {\bfseries 5:}\ \ \ \ \ \ \ \ \ \ \ \ \ iii. Randomly draw entries $\Omega\subset [m]$ of size $d$ uniformly without replacement.
\STATE {\bfseries 6:}\ \ \ \ \ \ \ \ \ \ \ \ \ iv. $k:=k+1$.
\STATE {\bfseries 7:}\ \ \ \ \ \ \ \ (b) Otherwise
\STATE {\bfseries 8:}\ \ \ \ \ \ \ \ \ \ \ \ \ i. $\C:=\C+\1_{supp(\widehat{\B}_{\Omega:}^{k\dag}\M_{\Omega t})}^T$.\ \ \ \ //Record supports of representation coefficient
\STATE {\bfseries 9:}\ \ \ \ \ \ \ \ \ \ \ \ \ ii. $\widehat{\M}_{:t}:=\widehat{\B}^k\widehat{\B}_{\Omega:}^{k\dag}\M_{\Omega t}$.
\STATE {\bfseries 10:}\ \ \ \ \ \ \ $t:= t+1$.
\STATE {\bfseries 11:} \textbf{End For}
\STATE {\bfseries Outlier Removal:} Remove columns corresponding to entry 0 in vector $\C$ from $\widehat{\B}^{s_0+r}=[\E^{s_0},\U^r]$.
\STATE {\bfseries Output:} Estimated range space, identified outlier vectors, and recovered underlying matrix $\widehat{\M}$ with column $\widehat{\M}_{:t}$.
\end{algorithmic}
\end{algorithm}

Regarding computational considerations, learning a $\tau$-sparse representation of a given vector w.r.t. a known dictionary can be done in polynomial time if the dictionary matrix satisfies the restricted isometry property~\cite{candes2006robust}, or trivially if $\tau$ is a constant~\cite{balcan2014efficient}. This can be done by applying $\ell_1$ minimization or brute-force algorithm, respectively. Indeed, many real datasets match the constant-$\tau$ assumption, e.g., face image~\cite{basri2003lambertian} (each person lies on a subspace of dimension $\tau=9$), 3D motion trajectory~\cite{Costeira} (each object lies on a subspace of dimension $\tau=4$), handwritten digits~\cite{hastie1998metrics} (each script lies on a subspace of dimension $\tau=12$), etc. So our algorithm is applicable for all these settings.

Theoretically, the following theorem provides a strong guarantee for our algorithm.
\begin{theorem}[Mixture of Subspaces]
\label{theorem: upper bound in noiseless case for mixture of subspaces}
Let $r$ be the rank of the underlying matrix $\L$. Suppose that the columns of $\L$ lie on a mixture of identifiable and independent subspaces, each of which is of dimension at most $\tau$. Denote by $\mu_\tau$ the maximal incoherence over all $\tau$-combinations of $\L$. Let the noise model be that of Theorem \ref{theorem: upper bound in noiseless case}. Then our algorithm exactly recovers the underlying matrix $\L$, the column space $\U^r$, and the outlier $\E^{s_0}$ with probability at least $1-\delta$, provided that $d\ge c\mu_\tau \tau^2\log\left(r/\delta\right)$ for some global constant $c$ and $s_0\le d-\tau-1$. The total sample complexity is thus $c\mu_\tau \tau^2 n\log\left(r/\delta\right)$.
\end{theorem}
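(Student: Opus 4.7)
The plan is to mirror the proof of Theorem \ref{theorem: upper bound in noiseless case}, but localize every rank/representation argument from the global rank-$r$ subspace $\U^r$ to any $\tau$-dimensional slice of the dictionary. Concretely, I would first invoke the independence of the $h$ constituent subspaces: under independence, every incoming clean column $\M_{:t}$ lies in exactly one subspace $\mathcal{S}_j$, and any $\tau$-sparse linear combination of dictionary columns that equals $\M_{:t}$ must (by uniqueness of the direct-sum decomposition) use only dictionary columns that themselves belong to $\mathcal{S}_j$. This is what rules out spurious ``cross-subspace'' representations and makes the $\tau$-sparse test well-defined.

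Next I would establish the $\tau$-local analog of Lemma \ref{lemma: outlier rank argument}. Fix an arbitrary $\tau$-subset $T$ of the current basis $\widehat{\B}^k=[\E^s,\U^k]$ together with an arrival $\M_{:t}$. Proposition \ref{proposition: subsampling with same rank} applied to the matrix formed by this $\tau$-subset (together with any $\M_{:t}\in\U^r$) gives that, provided $d\ge 8\mu_\tau\tau\log(\tau/\delta')$, the row-sampled block $[\widehat{\B}_{\Omega T}^k,\M_{\Omega t}]$ has the same rank as $[\widehat{\B}_T^k,\M_{:t}]$ with probability $\ge 1-\delta'$. Combining this with Facts~2--4 of Lemma~\ref{lemma: facts on non-degenerate distribution} on the sparse random outliers (so that outlier columns in $\E^s$ contribute ``generic'' rank whenever $s\le d-\tau-1$), I can show: (i) if $\M_{:t}$ admits a $\tau$-sparse representation over $\widehat{\B}^k$, then it admits one over $\widehat{\B}_{\Omega:}^k$ whose support is identical, and the reconstructed $\widehat{\M}_{:t}=\widehat{\B}^k\widehat{\B}_{\Omega:}^{k\dag}\M_{\Omega t}$ equals $\M_{:t}$; (ii) if $\M_{:t}$ opens a new direction of some $\mathcal{S}_j$ or is an outlier, then no $\tau$-sparse combination over $\widehat{\B}_{\Omega:}^k$ matches $\M_{\Omega t}$, so the algorithm correctly triggers a full measurement.

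To make the guarantee uniform, I would take a union bound over all $\tau$-subsets of the basis and over all $n$ arrivals. The basis grows to size at most $r+s_0$ during the algorithm (one direction per true basis vector and one per outlier), so there are at most $\binom{r+s_0}{\tau}\le (r+s_0)^\tau$ relevant subsets, and the basis (and hence the fresh independent sample $\Omega$) only changes $r+s_0$ times. Setting $\delta'=\delta/(r+s_0)^{\tau+O(1)}$ in the per-subset bound therefore blows $\log(\tau/\delta')$ up to $\tau\log(r/\delta)$, yielding the target $d\gtrsim \mu_\tau\tau^2\log(r/\delta)$ and total sample complexity $\mu_\tau\tau^2 n\log(r/\delta)$. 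Exact identification of $\E^{s_0}$ then follows verbatim from Lemma~\ref{lemma: outlier removal}: outlier columns, being drawn from a non-degenerate distribution, are never selected into any $\tau$-sparse representation of a clean arrival, so their counter entries remain $0$, while identifiability forces every true basis column to be used at least once.

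The main obstacle I expect is controlling the combinatorial explosion from the $\binom{r+s_0}{\tau}$ subset union bound without picking up an extra factor of $\tau$ in the sample complexity. Handling this tightly requires being careful that $\Omega$ is resampled only $r+s_0$ times (so the randomness is effectively reused across all subsets between two resamplings), and that the rank-preservation event of Proposition \ref{proposition: subsampling with same rank} is applied once per $(\Omega,T)$ pair rather than per arrival. A secondary subtlety is proving uniqueness of the $\tau$-sparse decomposition from the subsampled equations: this reduces to showing that any $2\tau$-column submatrix of $\widehat{\B}_{\Omega:}^k$ taken from distinct subspaces has rank $2\tau$, which again follows from Proposition \ref{proposition: subsampling with same rank} applied with a $2\tau$-coherence parameter absorbed into $\mu_\tau$ up to constants.
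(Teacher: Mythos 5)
Your proposal follows essentially the same route as the paper's proof: a $\tau$-localized analog of Lemma \ref{lemma: outlier rank argument} built from Proposition \ref{proposition: subsampling with same rank} (with coherence $\mu_\tau$) and the non-degeneracy facts of Lemma \ref{lemma: facts on non-degenerate distribution}, followed by a union bound over the $\O(r^\tau)$-type family of $\tau$-subsets and the at most $r+s_0$ resamplings of $\Omega$, then solving the self-referential condition on $d$ and invoking Lemma \ref{lemma: outlier removal} for outlier identification. Your slightly larger union bound over $\binom{r+s_0}{\tau}$ subsets and the explicit independence/uniqueness remarks are harmless refinements that lead to the same $d\gtrsim \mu_\tau\tau^2\log(r/\delta)$ bound, so the argument is correct and matches the paper's.
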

As a concrete example, if the incoherence parameter $\mu_\tau$ is a global constant and the dimension $\tau$ of each subspace is far less than $r$, the sample complexity of $\O(\mu_\tau n\tau^2\log(r/\delta))$ is significantly better than the complexity of $\O(\mu_0 nr\log(r/\delta))$ for the structure of a single subspace in Theorem \ref{theorem: upper bound in noiseless case}. This argument shows that the sparse connections between the two layers improve the sample complexity.

\begin{proof}[Proof of Theorem \ref{theorem: upper bound in noiseless case for mixture of subspaces}]
We first study the effectiveness of our representation step.
\begin{lemma}
\label{lemma: outlier rank argument mixture of subspaces}
Let $[\E^s,\U^k]$ be the current dictionary matrix consisting of a random noise matrix $\E^s\in\R^{m\times s}$ and a clean basis matrix $\U^k\in\R^{m\times k}$. Suppose we get access to a set of coordinates $\Omega\subset [m]$ of size $d$ uniformly at random without replacement. Let $s\le d-\tau-1$ and $d\ge 8\mu_\tau \tau\log(\tau/\delta)$. Denote by $\U^\tau\in\R^{m\times \tau}$ a submatrix of $\U^k$ with $\tau$ columns.
\begin{itemize}
\item
If $\M_{:t}\in \U^r$ but it cannot be represented by a linear combination of $\tau$ vectors in the current dictionary, then with probability at least $1-\delta$, $\M_{\Omega t}$ does not belong to any fixed $\tau$-combination of the truncated dictionary as well.
\item
If $\M_{:t}$ can be represented by a linear combination of $\tau$ vectors in the current basis, then $\M_{\Omega t}$ can be represented as a linear combination of the same $\tau$ truncated vectors in the dictionary with probability $1$, the representation coefficients of $\M_{:t}$ corresponding to $\E^s$ in the dictionary is $\0$ with probability $1$, and $[\E^s,\U^k][\E_{\Omega:}^{s},\U_{\Omega:}^{k}]^{\dag}\M_{\Omega t}=\M_{: t}$ with probability at least $1-\delta$.
\item
If $\M_{:t}$ is an outlier drawn from a non-degenerate distribution, then $\M_{\Omega t}$ cannot be represented by the dictionary with probability $1$.
\end{itemize}
\end{lemma}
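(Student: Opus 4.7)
My plan is to mirror the structure of the proof of Lemma~\ref{lemma: outlier rank argument} (the single-subspace version), but replace every appeal to the full dictionary rank with a rank argument for the $\tau$-combinations that the algorithm actually tests. The two workhorses are Proposition~\ref{proposition: subsampling with same rank} (applied with incoherence $\mu_\tau$ and dimension $\tau$ rather than $\mu_0$ and $r$) and Facts~2--4 of the non-degenerate distribution lemma (used, as before, to split any mixed clean+noise subset into independent parts). In particular, for any fixed $\tau$-subset $S\subseteq[s+k]$ of dictionary columns, write $S=S_E\sqcup S_U$ with $|S_E|=s'$, $|S_U|=k'$, $s'+k'=\tau$, and observe that because the columns of $\E^s$ are drawn i.i.d.\ from a non-degenerate distribution, Facts~3--4 give, for $s'\le d-k'-1$ (hence whenever $s\le d-\tau-1$),
\[
\rank\bigl([[\E^s]_{S_E,\Omega:},[\U^k]_{S_U,\Omega:},\M_{\Omega t}]\bigr)\;=\;s'+\rank\bigl([[\U^k]_{S_U,\Omega:},\M_{\Omega t}]\bigr),
\]
with the noise columns contributing $s'$ independent directions that are orthogonal (after conditioning) to the clean part. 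This reduces every case to a rank question about the clean $\tau$-combination alone.

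For case~(i), since by hypothesis $\M_{:t}$ cannot be written as a $\tau$-sparse combination in the \emph{full} dictionary, in particular $\M_{:t}\notin\mathbf{span}([\E^s,\U^k]_S)$; restricting to the clean part and using non-degeneracy of the noise forces $\M_{:t}\notin\mathbf{span}([\U^k]_{S_U})$, so $[[\U^k]_{S_U},\M_{:t}]$ has rank $k'+1$. Because this submatrix has $\mu_\tau$-incoherent column space (it lives in some $\tau$-combination of $\L$), Proposition~\ref{proposition: subsampling with same rank} with $d\ge 8\mu_\tau\tau\log(\tau/\delta)$ yields that with probability at least $1-\delta$ the subsampled matrix has rank $k'+1$; combined with the $s'$ noise directions this gives rank $\tau+1$ for the augmented subsampled block, which is exactly the statement that $\M_{\Omega t}$ is not in the fixed $\tau$-combination of the truncated dictionary.

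For case~(ii), fix a $\tau$-sparse representation $\M_{:t}=[\E^s,\U^k]\c$; the same non-degeneracy argument used in Lemma~\ref{lemma: outlier rank argument} shows that a non-zero coefficient of $\c$ on $\E^s$ would force $\M_{:t}-\U^k\c_U\in\mathbf{span}(\E^s)$ while also lying in $\U^r$, contradicting Fact~2. Hence the representation is supported on a $\tau$-subset $\U^\tau$ of $\U^k$, i.e.\ $\M_{:t}=\U^\tau\v$. Proposition~\ref{proposition: subsampling with same rank} applied to $\U^\tau$ with incoherence $\mu_\tau$ guarantees $\rank(\U_{\Omega:}^\tau)=\tau$ with probability $\ge1-\delta$, so $(\U_{\Omega:}^{\tau T}\U_{\Omega:}^\tau)^{-1}$ exists and $[\E^s,\U^k][\E_{\Omega:}^s,\U_{\Omega:}^k]^{\dag}\M_{\Omega t}=\U^\tau\v=\M_{:t}$, using again that the noise coefficients are forced to zero. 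Case~(iii) is the easiest: when $\M_{:t}$ is an outlier independent of the dictionary and drawn from a non-degenerate distribution, Fact~2 applied to $\M_{\Omega t}$ (which is itself a non-degenerate random vector in $\R^d$) shows that $\M_{\Omega t}$ almost surely fails to lie in any fixed $\tau$-subspace spanned by columns of the subsampled dictionary.

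The main obstacle is the mixed-subset bookkeeping in case~(i): one must be careful that the rank-additivity across $S_E$ and $S_U$ really requires only $s\le d-\tau-1$ (not the looser $s\le d-r-1$ of Lemma~\ref{lemma: outlier rank argument}), and that the incoherence used in Proposition~\ref{proposition: subsampling with same rank} can be uniformly upper-bounded by $\mu_\tau$ over every choice of $S_U$. Everything else is a structural copy of the earlier proof; the extra union bound over all $\binom{s+k}{\tau}$ subsets needed to pass from this per-subset lemma to the exact-recovery guarantee of Theorem~\ref{theorem: upper bound in noiseless case for mixture of subspaces} is what turns $\log(\tau/\delta)$ into the $\tau\log(r/\delta)$ factor in the theorem's sample complexity.
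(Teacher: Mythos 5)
Your proposal is correct and follows essentially the same route as the paper's proof: use the non-degeneracy facts (Facts 2--4) to force the noise columns out of any $\tau$-combination when $s\le d-\tau-1$, reduce each case to a rank question about the clean $\tau$-subset, and invoke Proposition~\ref{proposition: subsampling with same rank} with $\mu_\tau$ and $d\ge 8\mu_\tau\tau\log(\tau/\delta)$, reusing the zero-noise-coefficient contradiction and pseudo-inverse computation from Lemma~\ref{lemma: outlier rank argument}. Your explicit $S_E\sqcup S_U$ bookkeeping and the remark that the union bound over $\binom{s+k}{\tau}$ subsets is deferred to Theorem~\ref{theorem: upper bound in noiseless case for mixture of subspaces} simply make explicit what the paper's briefer argument leaves implicit.
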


\begin{proof}
The proof is similar as that of Lemma \ref{lemma: outlier rank argument}. For completeness, we give a brief proof here. For the first part of the lemma, by Facts 2 and 4 of Lemma \ref{lemma: facts on non-degenerate distribution}, the $\E_{\Omega :}^s$ cannot have a non-zero representation coefficient of $\M_{\Omega t}$ in any possible $\tau$-combination of the current dictionary when $s\le d-\tau-1$, due to the randomness. Thus the problem of whether $\M_{\Omega t}$ can be $\tau$ represented by the current dictionary $[\E_{\Omega:}^s,\U_{\Omega:}^k]$ is totally determined by whether it can be $\tau$ represented by $\U_{\Omega:}^k$. Now suppose that $\M_{\Omega t}$ can be written as a linear $\tau$-combination of the current basis $\U_{\Omega :}^\tau$. Then according to Proposition \ref{proposition: subsampling with same rank}, since $d\ge 8\mu_\tau\tau\log(\tau/\delta)$, we have that $\rank([\U^\tau,\M_{:t}])=\tau$, which is contradictory with the assumption of Event 1.

The first argument in Event 2 is obvious. Now suppose that the representation coefficients of $\M_{:t}$ corresponding to $\E^s$ in the dictionary $[\E^s,\U^k]$ is NOT $\0$ and $\M_{:t}\in \U^k$. Then $\M_{:t}-\U^k\c\in\mathbf{span}(\E^s)$, where $\c$ is the representation coefficients of $\M_{:t}$ corresponding to $\U^k$ in the dictionary $[\E^s,\U^k]$. Also, note that $\M_{:t}-\U^k\c\in\U^k$. So $\rank[\E^s,\M_{:t}-\U^k\c]=s$, which is contradictory with Fact 2 of Lemma \ref{lemma: facts on non-degenerate distribution}. So the coefficient w.r.t. $\E^s$ in the dictionary $[\E^s,\U^k]$ is $\0$. Since by assumption $\M_{:t}$ can be represented by $\tau$ combination of columns in $\U^k$, termed $\U^\tau$, we have that $[\E^s,\U^k][\E_{\Omega:}^{s},\U_{\Omega:}^{k}]^{\dag}\M_{\Omega t}=\U^\tau\U_{\Omega:}^{\tau\dag}\M_{\Omega t}=\U^\tau(\U_{\Omega:}^{\tau T}\U_{\Omega:}^\tau)^{-1}\U_{\Omega:}^{\tau T}\M_{\Omega t}=\U^\tau(\U_{\Omega:}^{\tau T}\U_{\Omega:}^\tau)^{-1}\U_{\Omega:}^{\tau T}\U_{\Omega :}^\tau \v=\U^\tau \v=\M_{: t}$, where $\v$ is the representation coefficient of $\M_{:t}$ w.r.t. $\U^\tau$. (The $(\U_{\Omega:}^{\tau T}\U_{\Omega:}^\tau)^{-1}$ exists because $\rank(\U_{\Omega:}^\tau)=\tau$ by Proposition \ref{proposition: subsampling with same rank})

Event 3 is an immediate result of Lemma \ref{lemma: facts on non-degenerate distribution}.
\end{proof}

Now we are ready to prove Theorem \ref{theorem: upper bound in noiseless case for mixture of subspaces}. In fact, Theorem \ref{theorem: upper bound in noiseless case for mixture of subspaces} is a result of union bound of Lemma \ref{lemma: outlier rank argument mixture of subspaces}. For the event of type 1, the union bound is over $\binom{r}{\tau}=\O(r^\tau)$ events. For the event of type 2, since we resample $\Omega$ at most $r+s_0$ times by algorithm, the union bound is over $r+s_0$ samplings. The event of type 3 is with probability $1$. So overall, replacing $\delta$ with $\min\{\delta/r^\tau,\delta/(r+s_0)\}$ in Lemma \ref{lemma: outlier rank argument mixture of subspaces}, the sample complexity we need is at least $\O(\mu_\tau\tau\log(\max\{r^\tau,r+s_0\}/\delta))$. Note that $s_0\le d-\tau-1$. So the sample complexity for each column is at least $\O(\mu_\tau\tau^2\log(r/\delta))$ and the total one is $\O(\mu_\tau\tau^2n\log(r/\delta))$, as desired. The success of outlier removal step is guaranteed by Lemma \ref{lemma: outlier removal}.
\end{proof}

\section{Experimental Results}
%\subsection{Experiments on Synthetic Data}
\noindent \textbf{Bounded Deterministic Noise:} We verify the estimated error of our algorithm in Theorem \ref{theorem: noise tolerance bound} under bounded deterministic noise. Our synthetic data are generated as follows. We  construct $5$ base vectors $\{\u_i\}_{i=1}^5$ by sampling their entries from $\N(0,1)$. The underlying matrix $\L$ is then generated by
$\L=\left[\u_1\1_{200}^T,\sum_{i=1}^2\u_i\1_{200}^T,\sum_{i=1}^3\u_i\1_{200}^T,\sum_{i=1}^4\u_i\1_{200}^T,\sum_{i=1}^5\u_i\1_{1,200}^T\right]\in\R^{100\times 2,000}$, each column of which is normalized to the unit $\ell_2$ norm. Finally, we add bounded yet unstructured noise to each column, with noise level $\epsilon_{noise}=0.6$. We randomly pick $20\%$ entries to be unobserved. The left figure in Figure \ref{figure: synthetic data} shows the comparison between our estimated error\footnote{The estimated error is up to a constant factor.} and the true error by our algorithm. The result demonstrates that empirically, our estimated error successfully predicts the trend of the true algorithmic error.

\begin{figure}
\centering
\subfigure{
\includegraphics[width=0.38\textwidth]{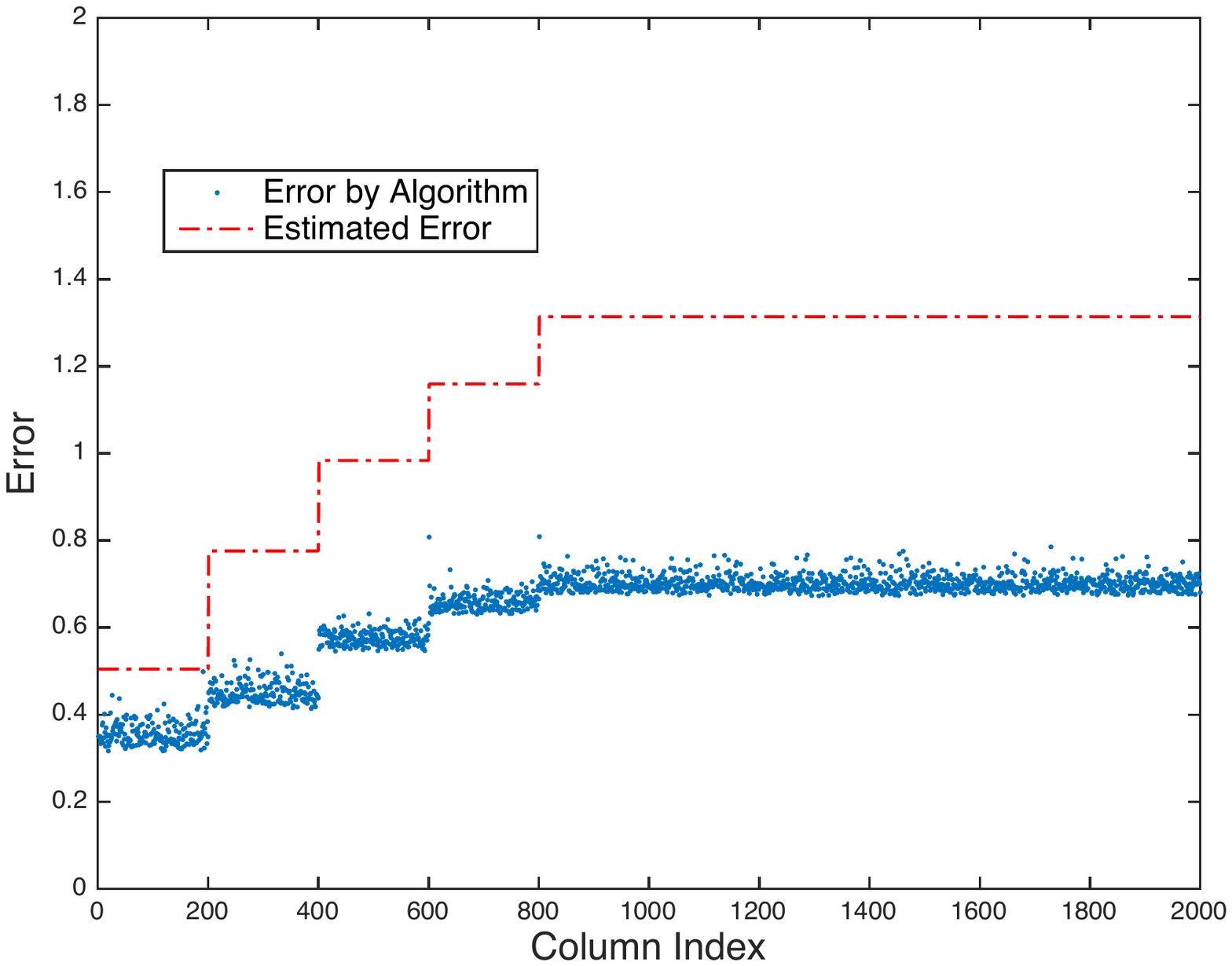}}
\subfigure{
\includegraphics[width=0.29\textwidth]{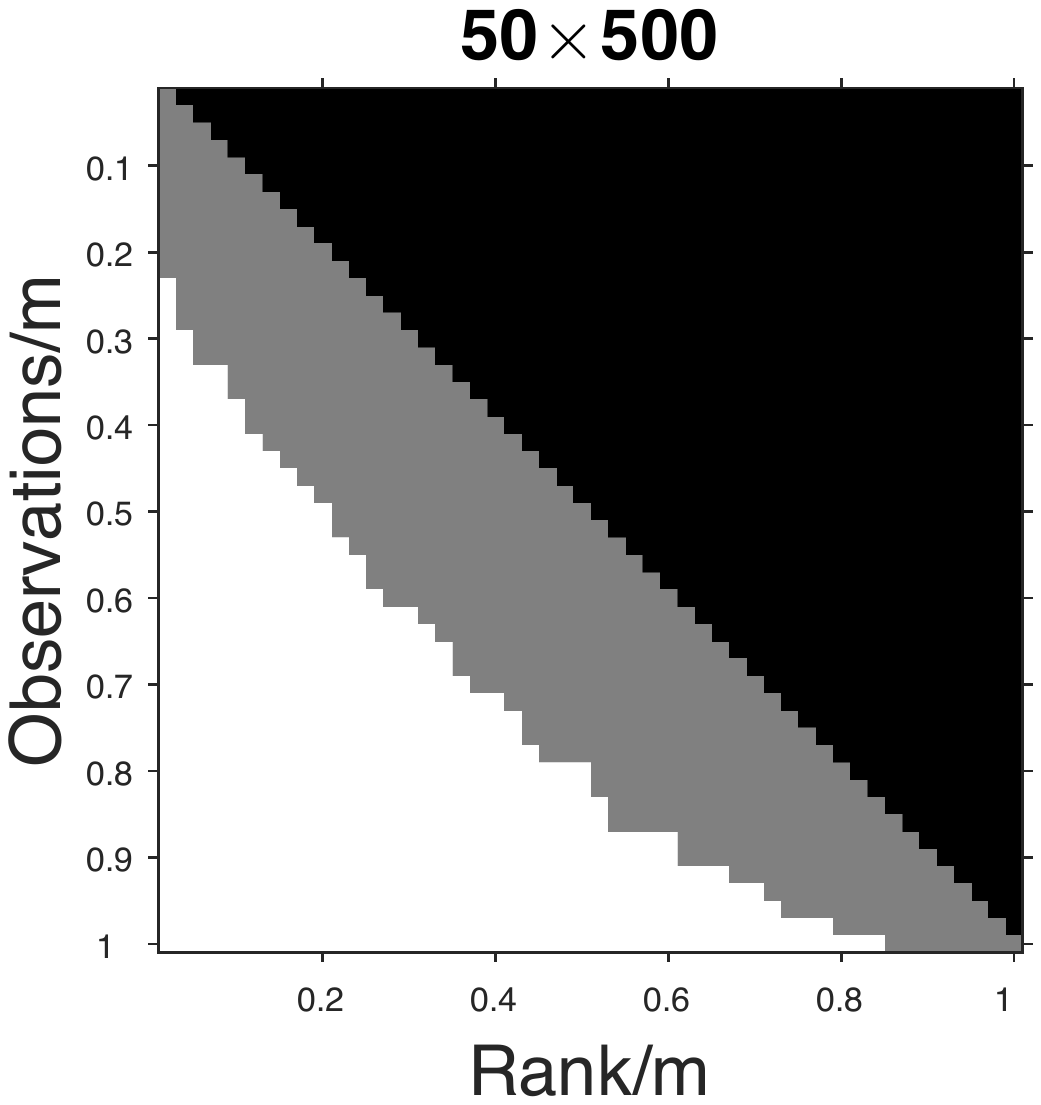}
}
\subfigure{
\includegraphics[width=0.29\textwidth]{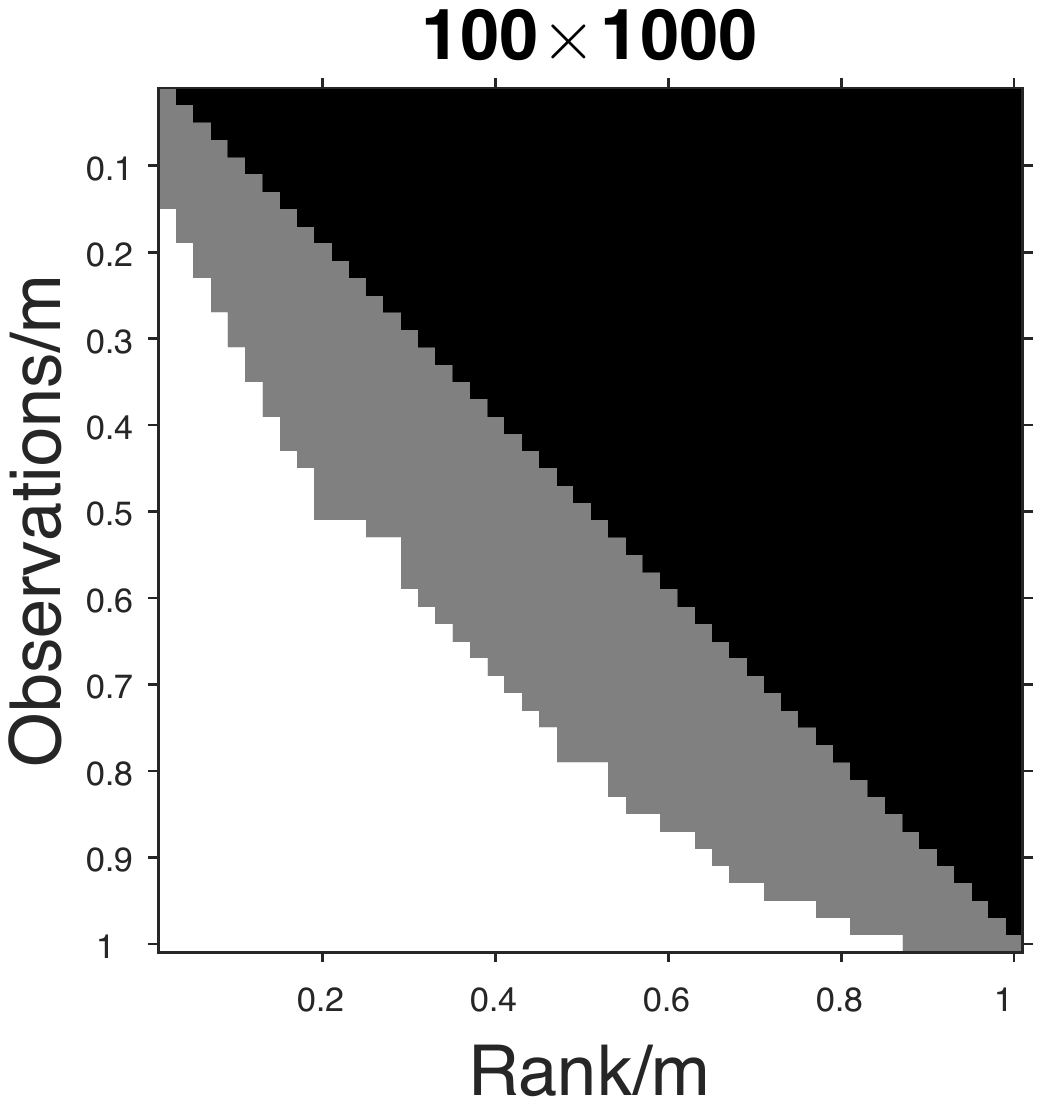}}
\caption{Left Figure: Approximate recovery under bounded deterministic noise with estimated error. Right Two Figures: Exact recovery under sparse random noise with varying rank and sample size. \textbf{White Region:} Nuclear norm minimization (passive sampling) succeeds. \textbf{White and Gray Regions:} Our algorithm (adaptive sampling) succeeds. \textbf{Black Region:} Our algorithm fails. It shows that the success region of our algorithm strictly contains that of the passive sampling method.}
\label{figure: synthetic data}
\end{figure}

\vspace{+0.4cm}
\noindent \textbf{Sparse Random Noise:} We then verify the exact recoverability of our algorithm under sparse random noise. The synthetic data are generated as follows. We construct the underlying matrix $\L=\X\Y$ as a product of $m\times r$ and $r\times n$ i.i.d. $\N(0,1)$ matrices. The sparse random noise is drawn from standard Gaussian distribution such that $s_0\le d-r-1$. For each size of problem ($50\times 500$ and $100\times 1,000$), we test with different rank ratios $r/m$ and measurement ratios $d/m$. The experiment is run by 10 times. We define that the algorithm succeeds if $\|\widehat{\L}-\L\|_F\le 10^{-6}$, $\rank(\widehat{\L})=r$, and the recovered support of the noise is exact for at least one experiment. The right two figures in Figure \ref{figure: synthetic data} plots the fraction of correct recoveries: white denotes perfect recovery by nuclear norm minimization approach \eqref{equ: outlier pursuit}; white+gray represents perfect recovery by our algorithm; black indicates failure for both methods. It shows that the success region of our algorithm strictly contains that of the prior approach. Moreover, the phase transition of our algorithm is nearly a linear function w.r.t $r$ and $d$. This is consistent with our prediction $d=\Omega(\mu_0 r\log(r/\delta))$ when $\delta$ is small, e.g., $\text{poly}(1/n)$.

\vspace{+0.4cm}
%\subsection{Experiments on Real Data}
\noindent \textbf{Mixture of Subspaces:} To test the performance of our algorithm for the mixture of subspaces, we conduct an experiment on the Hopkins 155 dataset. The Hopkins 155 database is composed of $155$ matrices/tasks, each of which consists of multiple data points drawn from two or three motion objects. The trajectory of each object lie in a subspace. We input the data matrix to our algorithm with varying sample sizes. Table \ref{table:hopkins 155} records the average relative error $\|\widehat{\L}-\L\|_F/\|\L\|_F$ of 10 trials for the first five tasks in the dataset. It shows that our algorithm is able to recover the target matrix with high accuracy.
\begin{table}
\caption{Life-long Matrix Completion on the first 5 tasks in Hopkins 155 database.} \label{table:hopkins 155}
\vspace{-0.2cm}
\begin{center}
\begin{tabular}{c||c|c|c|c|c}
\hline
\#Task & Motion Number & $d=0.8m$ & $d=0.85m$ & $d=0.9m$ & $d=0.95m$\\
\hline
\hline
\#1 & 2 & $9.4\times10^{-3}$ & $6.0\times10^{-3}$ & $3.4\times10^{-3}$ & $2.6\times10^{-3}$\\
\hline
\#2 & 3 & $5.9\times10^{-3}$ & $4.4\times10^{-3}$ & $2.4\times10^{-3}$ & $1.9\times10^{-3}$\\
\hline
\#3 & 2 & $6.3\times10^{-3}$ & $4.8\times10^{-3}$ & $2.8\times10^{-3}$ & $7.2\times10^{-4}$\\
\hline
\#4 & 2 & $7.1\times10^{-3}$ & $6.8\times10^{-3}$ &  $6.1\times10^{-3}$ & $1.5\times10^{-3}$\\
\hline
\#5 & 2 & $8.7\times10^{-3}$ & $5.8\times10^{-3}$ & $3.1\times10^{-3}$ & $1.2\times10^{-3}$\\
\hline
\end{tabular}
\end{center}
\end{table}

\noindent \textbf{Single Subspace v.s. Mixture of Subspaces:}
We compare the sample complexity of Algorithm \ref{algorithm: never ending learning sparse noise} (Single Subspace) and Algorithm \ref{algorithm: never ending learning sparse noise mixture of subspaces} (Mixture of Subspaces) for the exact recovery of the underlying matrix. The data are generated as follows. We construct 5 independent subspaces $\{\mathcal{S}_i\}_{i=1}^5$ whose bases $\{\U_i\}_{i=1}^5$ are $100\times 4$ random matrices consisting of orthogonal columns ($\tau=4$ and $r=20$). We then sample 20 data from each subspace uniformly and obtain a $100\times 100$ data matrix. The sample size $d$ varies from 1 to 100, and we record the empirical probability of success over $200$ times of experiments, where we define that an algorithm succeeds if $\|\widehat{\L}-\L\|_F\le 10^{-6}$ and $\rank(\widehat{\L})=r$. As shown in Figure \ref{figure: single_vs_mixture}, we see that the sample complexity can indeed be smaller in the case of mixture of subspaces.
\begin{figure}[ht]
\centering
\includegraphics[width=0.5\textwidth]{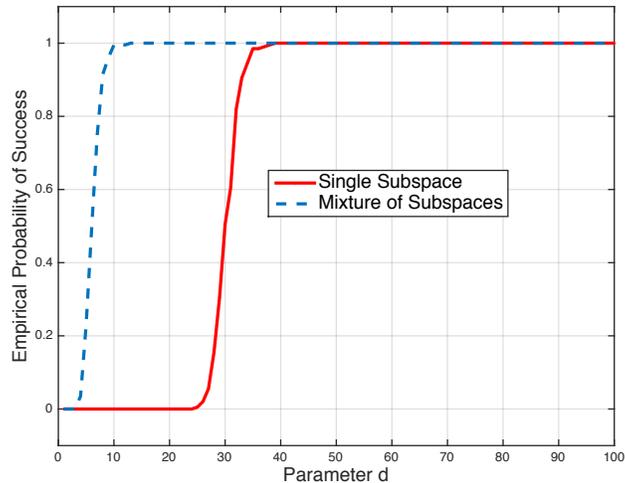}
\caption{Comparison of sample complexity between the case of single subspace and that of mixture of subspaces.}
\label{figure: single_vs_mixture}
\end{figure}

\section{Conclusions}

In this paper, we study life-long matrix completion that aims at online recovering an $m\times n$ matrix of rank $r$ under two realistic noise models --- bounded deterministic noise and sparse random noise. Our result advances the state-of-the-art work and matches the lower bound under sparse random noise. In a more benign setting where the columns of the underlying matrix lie on a mixture of subspaces, we show that a smaller sample complexity is possible to exactly recover the target matrix. It would be interesting to extend our results to other realistic noise
models, including random classification noise or malicious noise previously studied in the context of supervised
classification~\cite{awasthi2014power,balcan2013statistical}

\noindent \textbf{Acknowledgements.} This work was supported in part by NSF grants NSF CCF-1422910, NSF
CCF-1535967,  NSF CCF-1451177, NSF IIS-1618714, a Sloan Research
Fellowship, and a Microsoft Research Faculty Fellowship.

\newpage
\bibliographystyle{alpha}
\bibliography{reference}

\newpage
\appendix

\comment{
\section{Supplementary Experiments}
\subsection{Single Subspace v.s. Mixture of Subspaces}
We compare the sample complexity of Algorithm \ref{algorithm: never ending learning sparse noise} (Single Subspace) and Algorithm \ref{algorithm: never ending learning sparse noise mixture of subspaces} (Mixture of Subspaces) for the exact recovery of the underlying matrix. The data are generated as follows. We construct 5 independent subspaces $\{\mathcal{S}_i\}_{i=1}^5$ whose bases $\{\U_i\}_{i=1}^5$ are $100\times 4$ random matrices consisting of orthogonal columns ($\tau=4$ and $r=20$). We then sample 20 data from each subspace uniformly and obtain a $100\times 100$ data matrix. The sample size $d$ varies from 1 to 100, and we record the empirical probability of success over $200$ times of experiments, where we define that an algorithm succeeds if $\|\widehat{\L}-\L\|_F\le 10^{-6}$ and $\rank(\widehat{\L})=r$. As shown in Figure \ref{figure: single_vs_mixture}, we see that the sample complexity can indeed be smaller in the case of mixture of subspaces.
\begin{figure}[ht]
\centering
\includegraphics[width=0.5\textwidth]{single_vs_mixture.pdf}
\caption{Comparison of sample complexity between the case of single subspace and that of mixture of subspaces.}
\label{figure: single_vs_mixture}
\end{figure}
}

\comment{
\section{Proof of Robust Recovery under Deterministic Noise}
\begin{lemma}[Lemma 2.~\cite{balcan2014efficient}]
\label{lemma: subspaces angle diff one vect}
Let $\mathbf{W}=\mathbf{span}\{\w_1,\w_2,...,\w_{k-1}\}$, $\U=\mathbf{span}\{\w_1,\w_2,...,\w_{k-1},\u\}$, and $\widetilde{\U}=\mathbf{span}\{\w_1,\w_2,...,\w_{k-1},\widetilde{\u}\}$ be subspaces spanned by vectors in $\R^m$. Then
\begin{equation*}
\theta\left(\U,\widetilde{\U}\right)\le \frac{\pi}{2}\frac{\theta(\widetilde{\u},\u)}{\theta(\widetilde{\u},\mathbf{W})}.
\end{equation*}
\end{lemma}

\noindent{\textbf{Lemma ~\ref{lemma: angle between U and tilde U} (restated).}}
\emph{Let $\U^k=\mathbf{span}\{\u_1,\u_2,...,\u_k\}$ and $\widetilde{\U}^k=\mathbf{span}\{\widetilde{\u}_1,\widetilde{\u}_2,...,\widetilde{\u}_k\}$ be two subspaces such that $\theta(\u_i,\widetilde{\u}_i)\le\epsilon_{noise}$ for all $i\in[k]$. Let $\gamma_k=\sqrt{20k\epsilon_{noise}}$ and $\theta(\widetilde{\u}_i,\widetilde{\U}^{i-1})\ge \gamma_i$ for $i=2,...,k$. Then
\begin{equation}
\label{equ: error propagation}
\theta\left(\U^k,\widetilde{\U}^k\right)\le 10k\frac{\epsilon_{noise}}{\gamma_k}=\frac{\gamma_k}{2}.
\end{equation}}

\begin{proof}
The proof is basically by induction on $k$. Instead, we will prove a stronger result by showing that \eqref{equ: error propagation} holds on subspaces $\U^k=\mathbf{span}\{\mathbf{W},\u_1,\u_2,...,\u_k\}$ and $\widetilde{\U}^k=\mathbf{span}\{\mathbf{W},\widetilde{\u}_1,\widetilde{\u}_2,...,\widetilde{\u}_k\}$ for arbitrary fixed subspace $\mathbf{W}$. The base case $k=1$ follows immediately from Lemma \ref{lemma: subspaces angle diff one vect}. Now suppose the conclusion holds for any index $\le k-1$. Let $\U_0^k=\mathbf{span}\{\U^{k-1},\widetilde{\u}_k\}$. Then for the index $k$, we have
\begin{equation*}
\begin{split}
\theta(\U^k,\widetilde{\U}^k)&\le \theta(\U^k,\U_0^k)+\theta(\U_0^k,\widetilde{\U}^k)\\
&\le \frac{\pi}{2}\frac{\theta(\widetilde{\u}_k,\u_k)}{\theta(\widetilde{\u}_k,\U^{k-1})}+10(k-1)\frac{\epsilon_{noise}}{\gamma_{k-1}}\ \ (\text{By Lemma \ref{lemma: subspaces angle diff one vect} and induction hypothesis})\\
&\le \frac{\pi}{2}\frac{\epsilon_{noise}}{\theta(\widetilde{\u}_k,\widetilde{\U}^{k-1})-\theta(\U^{k-1},\widetilde{\U}^{k-1})}+10(k-1)\frac{\epsilon_{noise}}{\gamma_{k-1}}\\
&\le \frac{\pi}{2}\frac{\epsilon_{noise}}{\gamma_k-10(k-1)\frac{\epsilon_{noise}}{\gamma_{k-1}}}+10(k-1)\frac{\epsilon_{noise}}{\gamma_{k-1}}\ \ (\text{By induction hypothesis})\\
&= \frac{\epsilon_{noise}}{\gamma_{k-1}}\left(\frac{\pi}{2}\frac{\gamma_{k-1}^2}{\gamma_k\gamma_{k-1}-10(k-1)\epsilon_{noise}}+10(k-1)\right)\\
&\le \frac{\epsilon_{noise}}{\gamma_{k-1}}(\pi+10k-10)\\
&= \frac{\epsilon_{noise}}{\gamma_{k}}\frac{\gamma_k}{\gamma_{k-1}}(\pi+10k-10)\\
&= \frac{\epsilon_{noise}}{\gamma_{k}}\sqrt{\frac{k}{k-1}}(\pi+10k-10)\\
&\le \frac{\epsilon_{noise}}{\gamma_{k}}10k\ \ \ \ (k\ge 2).
\end{split}
\end{equation*}
\end{proof}
}

\comment{
\noindent{\textbf{Lemma ~\ref{lemma: residual estimate} (restated).}}
\emph{Let $\epsilon_k=2\gamma_k$, $\gamma_k=\sqrt{20k\epsilon_{noise}}$, and $k\le r$. Suppose that we observe a set of coordinates $\Omega\subset [m]$ of size $d$ uniformly at random with replacement, where $d\ge c_0(\mu_0r+mk\epsilon_{noise})\log^2 (2/\delta)$. If $\theta(\L_{:t},\widetilde{\U}^k)\le \epsilon_k$, then with probability at least $1-4\delta$, we have
\begin{equation*}
\|\M_{\Omega t}-\P_{\widetilde{\U}_{\Omega:}^k}\M_{\Omega t}\|_2\le C\sqrt{\frac{dk\epsilon_{noise}}{m}}.
\end{equation*}
Inversely, if $\theta(\L_{:t},\widetilde{\U}^k)\ge c\epsilon_k$, then with probability at least $1-4\delta$, we have
\begin{equation*}
\|\M_{\Omega t}-\P_{\widetilde{\U}_{\Omega:}^k}\M_{\Omega t}\|_2\ge C\sqrt{\frac{dk\epsilon_{noise}}{m}},
\end{equation*}
where $c_0$, $c$ and $C$ are absolute constants.}

\begin{proof}
The first part of the theorem follows from the upper bound of Lemma \ref{lemma: concentration of measure}. Specifically, by plugging $d$ into the lower bound of Lemma \ref{lemma: concentration of measure}, we see that $\alpha<1/2$ and $\gamma<1/3$. Note that
\begin{equation*}
\left\|\L_{:t}-\P_{\widetilde{\U}^k}\L_{:t}\right\|_2=\left\|\L_{:t}\right\|_2\sin \theta\left(\L_{:t},\P_{\widetilde{\U}^k}\L_{:t}\right)\le \theta\left(\L_{:t},\P_{\widetilde{\U}^k}\L_{:t}\right)=\theta(\L_{:t},\widetilde{\U}^k)\le \epsilon_k.
\end{equation*}
Therefore, by Lemma \ref{lemma: concentration of measure},
\begin{equation*}
\begin{split}
\left\|\M_{\Omega t}-\P_{\widetilde{\U}_{\Omega :}^k}\M_{\Omega t}\right\|_2&\le \O\left(\sqrt{\frac{d}{m}}\left\|\M_{:t}-\P_{\widetilde{\U}^k}\M_{:t}\right\|_2\right)\\
&\le \O\left(\sqrt{\frac{d}{m}}\left(\left\|\L_{:t}-\P_{\widetilde{\U}^k}\L_{:t}\right\|_2+\left\|\P_{\widetilde{\U}^k}(\L_{:t}-\M_{:t})\right\|_2+\left\|\M_{:t}-\L_{:t}\right\|_2\right)\right)\\
&\le \O\left(\sqrt{\frac{d}{m}}(\epsilon_k+2\epsilon_{noise})\right)\\
&\le C\sqrt{\frac{dk\epsilon_{noise}}{m}}.
\end{split}
\end{equation*}

We now proceed the second part of the theorem. To this end, we first explore the relation between the incoherence of the noisy basis $\widetilde{\U}^k$ and the clean one $\U^k$. Since we are able to control the error propagation in $\widetilde{\U}^k$, intuitively, the incoherence of $\widetilde{\U}^k$ and $\U^k$ is not distinct too much. In particular, for any $i\in [m]$,
\begin{equation*}
\begin{split}
\left\|\P_{\widetilde{\U}^k}\e_i\right\|_2&\le \left\|\P_{\U^k}\e_i\right\|_2+\left\|\P_{\U^k}\e_i-\P_{\widetilde{\U}^k}\e_i\right\|_2\\
&\le \left\|\P_{\U^k}\e_i\right\|_2+\left\|\P_{\U^k}-\P_{\widetilde{\U}^k}\right\|\left\|\e_i\right\|_2\\
&= \left\|\P_{\U^k}\e_i\right\|_2+\left\|\e_i\right\|_2\sin\theta\left(\U^k,\widetilde{\U}^k\right)\\
&\le \left\|\P_{\U^k}\e_i\right\|_2+\theta\left(\U^k,\widetilde{\U}^k\right)\\
&\le \left\|\P_{\U^k}\e_i\right\|_2+\frac{\gamma_k}{2}\\
&= \left\|\P_{\U^k}\e_i\right\|_2+\frac{1}{4}\epsilon_k.
\end{split}
\end{equation*}
Therefore,
\begin{equation*}
\mu\left(\widetilde{\U}^k\right)=\frac{m}{k}\max_{i\in[m]}\left\|\P_{\widetilde{\U}^k}\e_i\right\|_2^2\le \frac{m}{k}\left(2\left\|\P_{\U^k}\e_i\right\|_2^2+\frac{1}{8}\epsilon_k^2\right)\le 2\mu(\U^k)+c''m\epsilon_{noise},
\end{equation*}
for global constant $c''$. Also, note that
\begin{equation*}
\left\|\P_{\widetilde{\U}^k}\M_{:t}-\L_{:t}\right\|_2\ge \sin\theta\left(\L_{:t},\P_{\widetilde{\U}^k}\M_{:t}\right)\|\L_{:t}\|_2\ge \frac{1}{2}\theta\left(\L_{:t},\P_{\widetilde{\U}^k}\M_{:t}\right)\ge \frac{1}{2}\theta\left(\L_{:t},\widetilde{\U}^k\right)\ge \frac{c\epsilon_k}{2}.
\end{equation*}
So we have
\begin{equation*}
\begin{split}
&\left\|\M_{\Omega t}-\P_{\widetilde{\U}_\Omega^k}\M_{\Omega t}\right\|_2\ge \sqrt{\frac{1}{m}\left(\frac{d}{2}-\frac{3k\mu(\widetilde{\U}^k)\beta}{2}\right)}\left\|\M_{:t}-\P_{\widetilde{\U}^k}\M_{:t}\right\|_2\\
&\ge \Omega\left(\sqrt{\frac{d}{m}-\frac{3k\mu(\U^k)}{m}\log^2 (1/\delta)-c_0k\epsilon_{noise}\log^2 (1/\delta)}\left\|\M_{:t}-\P_{\widetilde{\U}^k}\M_{:t}\right\|_2\right)\\
&\ge \Omega\left(\sqrt{\frac{d}{m}-\frac{3\mu_0r}{m}\log^2 (1/\delta)-c_0k\epsilon_{noise}\log^2 (1/\delta)}\left\|\M_{:t}-\P_{\widetilde{\U}^k}\M_{:t}\right\|_2\right)\ \ (\mbox{Since } \U^k\subseteq\U^r)\\
&\ge \Omega\left(\sqrt{\frac{d}{m}-c_0k\epsilon_{noise}\log^2 (1/\delta)}\left(\left\|\P_{\widetilde{\U}^k}\M_{:t}-\L_{:t}\right\|_2-\left\|\L_{:t}-\M_{:t}\right\|_2\right)\right)\ \ \left(\mbox{Since }d>3\mu_0r\log^2(1/\delta)\right)\\
&> \Omega\left(\sqrt{\frac{d}{m}-c_0k\epsilon_{noise}\log^2 (1/\delta)}\left(\frac{c\epsilon_k}{2}-\epsilon_{noise}\right)\right)\\
&> C\sqrt{\frac{dk\epsilon_{noise}}{m}}\ \ \left(\mbox{Since }d> c_0mk\epsilon_{noise}\log^2(1/\delta)\right).
\end{split}
\end{equation*}
\end{proof}
}

\comment{
\begin{algorithm}[ht]
\caption{Noise-Tolerant Life-Long Matrix Completion under Bounded Deterministic Noise}
\begin{algorithmic}
\label{algorithm: never ending learning}
\STATE {\bfseries Input:} Columns of matrices arriving over time.
\STATE {\bfseries Initialize:} Let the basis matrix $\widehat{\U}^0=\emptyset$. Randomly draw entries $\Omega\subset [m]$ of size $d$ uniformly with replacement.
\STATE {\bfseries 1:} \textbf{For} $t$ from $1$ to $n$, \textbf{do}
\STATE {\bfseries 2:}\ \ \ \ \ \ \ \ (a) If $\|\M_{\Omega t}-\P_{\widehat{\U}_{\Omega:}^k}\M_{\Omega t}\|_2>\eta_k$
\vspace{-0.1cm}
\STATE {\bfseries 3:}\ \ \ \ \ \ \ \ \ \ \ \ \ i. Fully measure $\M_{:t}$ and add it to the basis matrix $\widehat{\U}^k$. Orthogonalize $\widehat{\U}^k$.
\STATE {\bfseries 4:}\ \ \ \ \ \ \ \ \ \ \ \ \ ii. Randomly draw entries $\Omega\subset [m]$ of size $d$ uniformly with replacement.
\STATE {\bfseries 5:}\ \ \ \ \ \ \ \ \ \ \ \ \ iii. $k:=k+1$.
\STATE {\bfseries 6:}\ \ \ \ \ \ \ \ (b) Otherwise $\widehat{\M}_{:t}:=\widehat{\U}^k\widehat{\U}_{\Omega:}^{k\dag}\M_{\Omega t}$.
\STATE {\bfseries 7:} \textbf{End For}
\STATE {\bfseries Output:} Estimated range space $\widehat{\U}^K$ and the underlying matrix $\widehat{\M}$ with column $\widehat{\M}_{:t}$.
\end{algorithmic}
\end{algorithm}
}

\comment{
\noindent{\textbf{Theorem ~\ref{theorem: noise tolerance bound} (restated).}}
\emph{Let $r$ be the rank of the underlying matrix $\L$ with $\mu_0$-incoherent column space. Suppose that the $\ell_2$ norm of the noise in each column is upper bounded by $\epsilon_{noise}$. Set $d\ge c_0(\mu_0r+mk\epsilon_{noise})\log^2 (2n/\delta))$ and
$\eta_k=C\sqrt{dk\epsilon_{noise}/m}$ for some global constants $c_0$ and $C$. Then Algorithm \ref{algorithm: never ending learning} outputs $\widehat{\U}^K$ with $K\le r$, $\widehat{\M}$ with $\ell_2$ error $\|\widehat{\M}_{:t}-\L_{:t}\|_2\le \Theta\left(\frac{m}{d}\sqrt{k\epsilon_{noise}}\right)$ uniformly for all $t$ with probability at least $1-\delta$, where $k\le r$ is the number of base vectors when learning the $t$-th column.}

\begin{proof}
We first show $K\le r$. Every time we add a new direction to the basis matrix if and only if Condition (a) in Algorithm \ref{algorithm: never ending learning} holds true. In that case by Lemma \ref{lemma: residual estimate}, if setting $\eta_k=C\sqrt{dk\epsilon_{noise}/m}$, then with probability at least $1-4\delta$, we have that $\theta(\L_{:t},\widetilde{\U}^k)\ge 2\gamma_k$, which implies $\theta(\M_{:t},\widetilde{\U}^k)\ge\theta(\L_{:t},\widetilde{\U}^k)-\theta(\M_{:t},\L_{:t}) \ge \gamma_k$. So by Lemma \ref{lemma: angle between U and tilde U}, $\theta(\U^k,\widetilde{\U}^k)\le\gamma_k/2$. Thus $\theta(\L_{:t},\U^k)\ge \theta(\L_{:t},\widetilde{\U}^k)-\theta(\U^k,\widetilde{\U}^k)\ge 3\gamma_k/2$. Since $\rank(\L)=r$, we obtain that $K\le r$.

We now proceed to prove the upper bound on the $\ell_2$ error. We discuss Case (a) and (b) respectively. If Condition (a) in Algorithm \ref{algorithm: never ending learning} holds true, then according to the algorithm, we fully observe $\M_{:t}$ and use it as our estimate $\widehat{\M}_{:t}$. So $\left\|\widehat{\M}_{:t}-\L_{:t}\right\|_2\le \epsilon_{noise}\le\Theta(\frac{m}{d}\sqrt{k\epsilon_{noise}})$; On the other hand, if Case (b) in Algorithm \ref{algorithm: never ending learning} holds true, then we represent $\widehat{\M}_{:t}$ by the basis subspace $\widetilde{\U}^k$. So we have
\begin{equation*}
\begin{split}
&\ \ \ \ \ \left\|\widehat{\M}_{:t}-\L_{:t}\right\|_2=\left\|\widetilde{\U}^k\widetilde{\U}_{\Omega:}^{k\dag}\M_{\Omega t}-\L_{:t}\right\|_2\\
&\le \left\|\widetilde{\U}^k\widetilde{\U}^{k\dag}\L_{: t}-\L_{:t}\right\|_2+\left\|\widetilde{\U}^k\widetilde{\U}_{\Omega:}^{k\dag}\L_{\Omega t}-\widetilde{\U}^k\widetilde{\U}^{k\dag}\L_{: t}\right\|_2+\left\|\widetilde{\U}^k\widetilde{\U}_{\Omega:}^{k\dag}\L_{\Omega t}-\widetilde{\U}^k\widetilde{\U}_{\Omega:}^{k\dag}\M_{\Omega t}\right\|_2\\
& = \sin\theta(\L_{:t},\widetilde{\U}^k)+\left\|\widetilde{\U}^k\widetilde{\U}_{\Omega:}^{k\dag}\L_{\Omega t}-\widetilde{\U}^k\widetilde{\U}^{k\dag}\L_{: t}\right\|_2+\left\|\widetilde{\U}^k\widetilde{\U}_{\Omega :}^{k\dag}(\L_{\Omega t}-\M_{\Omega t})\right\|_2.
\end{split}
\end{equation*}
To bound the second term, let $\L_{:t}=\widetilde{\U}^k\v+\e$, where $\widetilde{\U}^k\v=\widetilde{\U}^k\widetilde{\U}^{k\dag}\L_{:t}$ and $\|\e\|_2\le\epsilon_k$ since $\|\e\|_2=\sin\theta(\L_{:t},\widetilde{\U}^k)\le \epsilon_k$. So
\begin{equation*}
\begin{split}
\widetilde{\U}^k\widetilde{\U}_{\Omega:}^{k\dag}\L_{\Omega t}-\widetilde{\U}^k\widetilde{\U}^{k\dag}\L_{: t}&=\widetilde{\U}^k(\widetilde{\U}_{\Omega:}^{kT}\widetilde{\U}_{\Omega:}^{k})^{-1}\widetilde{\U}_{\Omega:}^{kT}(\widetilde{\U}_{\Omega:}^{k}\v+\e_{\Omega})-\widetilde{\U}_{\Omega:}^{k}\v\\
&=\widetilde{\U}^k\widetilde{\U}_{\Omega:}^{k\dag}\e_{\Omega}.
\end{split}
\end{equation*}
Therefore,
\begin{equation*}
\begin{split}
\left\|\widehat{\M}_{:t}-\L_{:t}\right\|_2&\le \theta(\L_{:t},\widetilde{\U}^k)+\left\|\widetilde{\U}^k\widetilde{\U}_{\Omega:}^{k\dag}\L_{\Omega t}-\widetilde{\U}^k\widetilde{\U}^{k\dag}\L_{: t}\right\|_2+\left\|\widetilde{\U}^k\widetilde{\U}_{\Omega:}^{k\dag}(\L_{\Omega t}-\M_{\Omega t})\right\|_2\\
&\le \theta(\L_{:t},\widetilde{\U}^k)+\left\|\widetilde{\U}^k\widetilde{\U}_{\Omega:}^{k\dag}\right\|\left\|\e_{\Omega}\right\|_2+\left\|\widetilde{\U}^k\widetilde{\U}_{\Omega:}^{k\dag}\right\|\left\|\L_{\Omega t}-\M_{\Omega t}\right\|_2\\
&\le \epsilon_k+\Theta\left(\frac{m}{d}\epsilon_k\right)+\Theta\left(\frac{m}{d}\epsilon_{noise}\right)\\
&=\Theta\left(\frac{m}{d}\sqrt{k\epsilon_{noise}}\right),
\end{split}
\end{equation*}
where $\left\|\widetilde{\U}^k\widetilde{\U}_{\Omega :}^k\right\|\le\sigma_1(\widetilde{\U}^k)/\sigma_k(\widetilde{\U}_{\Omega:}^k)\le\Theta(m/d)$ once $d\ge \Omega(\mu(\widetilde{\U}^k)k\log (k/\delta))$, due to Lemma \ref{lemma: matrix chernoff bound}.
The final sample complexity follows from the union bound on the $n$ columns.
\end{proof}
}

\comment{
\section{Proof of Exact Recovery under Random Noise}
\begin{algorithm}
\caption{Noise-Tolerant Life-Long Matrix Completion under Sparse Random Noise}
\begin{algorithmic}
\label{algorithm: never ending learning sparse noise}
\STATE {\bfseries Input:} Columns of matrices arriving over time.
\STATE {\bfseries Initialize:} Let the basis matrix $\widehat{\B}^0=\emptyset$, the counter $\C=\emptyset$. Randomly draw entries $\Omega\subset [m]$ of size $d$ uniformly without replacement.
\STATE {\bfseries 1:} \textbf{For} each column $t$ of $\M$, \textbf{do}
\STATE {\bfseries 2:}\ \ \ \ \ \ \ \ (a) If $\|\M_{\Omega t}-\P_{\widehat{\B}_{\Omega:}^k}\M_{\Omega t}\|_2>0$
\STATE {\bfseries 3:}\ \ \ \ \ \ \ \ \ \ \ \ \ i. Fully measure $\M_{:t}$ and add it to the basis matrix $\widehat{\B}^k$.
\STATE {\bfseries 4:}\ \ \ \ \ \ \ \ \ \ \ \ \ ii. $\C:=[\C,0]$.
\STATE {\bfseries 5:}\ \ \ \ \ \ \ \ \ \ \ \ \ iii. Randomly draw entries $\Omega\subset [m]$ of size $d$ uniformly without replacement.
\STATE {\bfseries 6:}\ \ \ \ \ \ \ \ \ \ \ \ \ iv. $k:=k+1$.
\STATE {\bfseries 7:}\ \ \ \ \ \ \ \ (b) Otherwise
\STATE {\bfseries 8:}\ \ \ \ \ \ \ \ \ \ \ \ \ i. $\C:=\C+\1_{supp(\widehat{\B}_{\Omega:}^{k\dag}\M_{\Omega t})}^T$.\ \ \ \ //Record supports of representation coefficient
\STATE {\bfseries 9:}\ \ \ \ \ \ \ \ \ \ \ \ \ ii. $\widehat{\M}_{:t}:=\widehat{\B}^k\widehat{\B}_{\Omega:}^{k\dag}\M_{\Omega t}$.
\STATE {\bfseries 10:}\ \ \ \ \ \ \ $t:= t+1$.
\STATE {\bfseries 11:} \textbf{End For}
\STATE {\bfseries Outlier Removal:} Remove columns corresponding to entry 0 in vector $\C$ from $\widehat{\B}^{s_0+r}=[\E^{s_0},\U^r]$.
\STATE {\bfseries Output:} Estimated range space, identified outlier vectors, and recovered underlying matrix $\widehat{\M}$ with column $\widehat{\M}_{:t}$.
\end{algorithmic}
\end{algorithm}

\begin{lemma}
\label{lemma: orth rank}
Let $\X=\U\mathbf{\Sigma}\V^T$ be the skinny SVD of $\X$, $\orthc(\X)=\U$, and $\orthr(\X)=\V^T$. Then for any set of coordinates $\Omega$ and any matrix $\X\in\R^{m\times n}$, we have
\begin{equation*}
\rank(\X_{\Omega:})=\rank([\orthc(\X)]_{\Omega:})\ \ \ \ \text{and}\ \ \ \ \rank(\X_{:\Omega})=\rank([\orthr(\X)]_{:\Omega}).
\end{equation*}
\end{lemma}
\begin{proof}
Let $\X=\U\mathbf{\Sigma}\V^T$ be the skinny SVD of matrix $\X$, where $\U=\orthc(\X)$ and $\V^T=\orthr(\X)$. On one hand,
\begin{equation*}
\X_{\Omega:}=\I_{\Omega:}\X=\I_{\Omega:}\U\mathbf{\Sigma}\V^T=[\orthc(\X)]_{\Omega:}\mathbf{\Sigma}\V^T.
\end{equation*}
So $\rank(\X_{\Omega:})\le\rank([\orthc(\X)]_{\Omega:})$. On the other hand, we have
\begin{equation*}
\X_{\Omega:}\V\mathbf{\Sigma}^{-1}=[\orthc(\X)]_{\Omega:}.
\end{equation*}
Thus $\rank([\orthc(\X)]_{\Omega:})\le\rank(\X_{\Omega:})$. So $\rank(\X_{\Omega:})=\rank([\orthc(\X)]_{\Omega:})$.

The second part of the argument can be proved similarly. Indeed, $\X_{:\Omega}=\U\mathbf{\Sigma}\V^T\I_{:\Omega}=\U\mathbf{\Sigma}[\orthr(\X)]_{:\Omega}$ and $\mathbf{\Sigma}^{-1}\U^T\X_{:\Omega}=[\orthr(\X)]_{:\Omega}$. So $\rank(\X_{:\Omega})=\rank([\orthr(\X)]_{:\Omega})$, as desired.
\end{proof}

%\noindent{\textbf{Proposition \ref{proposition: subsampling with same rank}} (Restated)\textbf{.}}
%\emph{
\begin{proposition}
\label{proposition: subsampling with same rank}
Let $\L\in\R^{m\times n}$ be any rank-$r$ matrix with skinny SVD $\U\mathbf{\Sigma}\V^T$. Denote by $\L_{:\Omega}$ the submatrix formed by subsampling the columns of $\L$ with i.i.d. Ber($d/n$). If $d\ge 8\mu(\V)r\log(r/\delta)$, then with probability at least $1-\delta$, we have $\rank(\L_{:\Omega})=r$. Similarly, denote by $\L_{\Omega:}$ the submatrix formed by subsampling the rows of $\L$ with i.i.d. Ber($d/m$). If $d\ge 8\mu(\U)r\log(r/\delta)$, then with probability at least $1-\delta$, we have $\rank(\L_{\Omega:})=r$.
\end{proposition}
%}

\begin{proof}
We only prove the first part of the argument. For the second part, applying the first part to matrix $\L^T$ gets the result. Denote by $\T$ the matrix $\V^T=\orthr(\L)$ with orthonormal rows, and by $\X=\sum_{i=1}^n\delta_i\T_{:i}e_i^T\in\R^{r\times n}$ the sampling of columns from $\T$ with $\delta_i\sim\mbox{Ber}(d/n)$. Let $\X_i=\delta_i\T_{:i}e_i^T$. Define positive semi-definite matrix
\begin{equation*}
\Y=\X\X^T=\sum_{i=1}^n\delta_i\T_{:i}\T_{:i}^T.
\end{equation*}
Obviously, $\sigma_r^2(\X)=\lambda_r(\Y)$. To invoke the matrix Chernoff bound, we estimate the parameters $L$ and $\mu_r$ in Lemma \ref{lemma: matrix chernoff bound}. Specifically, note that
\begin{equation*}
\mathbb{E}\Y=\sum_{i=1}^n\mathbb{E}\delta_i\T_{:i}\T_{:i}^T=\frac{d}{n}\sum_{i=1}^n\T_{:i}\T_{:i}^T=\frac{d}{n}\T\T^T.
\end{equation*}
Therefore, $\mu_r=\lambda_r(\mathbb{E}\Y)=d\sigma_r^2(\T)/n>0$. Furthermore, we also have
\begin{equation*}
\lambda_{\max}(\X_i)=\|\delta_i\T_{:i}\|_2^2\le\|\T\|_{2,\infty}^2\triangleq L.
\end{equation*}
By the matrix Chernoff bound where we set $\epsilon=1/2$,
\begin{equation*}
\begin{split}
\Pr\left[\sigma_r(\X)> 0\right]&=\Pr\left[\lambda_r(\Y)> 0\right]\\
& \ge \Pr\left[\lambda_r(\Y)>\frac{1}{2}\mu_r\right]\\
& = \Pr\left[\lambda_r(\Y)>\frac{d}{2n}\sigma_r^2(\T)\right]\\
&\ge 1-r\exp\left(-\frac{d\sigma_r^2(\T)}{8n\|\T\|_{2,\infty}^2}\right)\\
&\triangleq 1-\delta.
\end{split}
\end{equation*}
So if
\begin{equation*}
d\ge \frac{8n\|\T\|_{2,\infty}^2}{\sigma_r^2(\T)}\log\frac{r}{\delta}=8n\|\T\|_{2,\infty}^2\log\left(\frac{r}{\delta}\right),
\end{equation*}
then $\Pr\left[\sigma_{k+1}(\X)=0\right]\le\delta$, where the last equality holds since $\sigma_r(\T)=\sigma_r(\V^T)=1$. Note that
\begin{equation*}
\|\T\|_{2,\infty}^2\le \max_{i\in[n]}\|\V^T \e_i\|_2^2\le \frac{r}{n}\mu(\V).
\end{equation*}
So if $d\ge 8\mu(\V)r\log (r/\delta)$ then with probability at least $1-\delta$, $\rank(\T_{:\Omega})=r$. Also, by Lemma \ref{lemma: orth rank}, $\rank(\T_{:\Omega})=\rank([\orthr(\L)]_{:\Omega})=\rank(\L_{:\Omega})$. Therefore, $\rank(\L_{:\Omega})=r$ with a high probability, as desired.
\end{proof}

\comment{
\begin{lemma}[Noiseless Case]
\label{lemma: noiseless rank argument}
Let $\U^k\in\R^{m\times k}$ be a $k$-dimensional subspace of $\U^r$, $\L_{:t}\in\U^r$ but $\L_{:t}\not\in \U^k$. Suppose we get access to a set of coordinates $\Omega\subset [m]$ of size $d$ uniformly at random with replacement. If $d\ge 8\mu_0r\log((k+1)/\delta)$ then with probability at least $1-\delta$, $\rank\left([\U_{\Omega:}^k,\L_{\Omega t}]\right)=k+1$. Inversely, if $\L_{:t}\in \U^k$ and $d\ge 8\mu_0r\log((k+1)/\delta)$, then $\rank\left([\U_{\Omega:}^k,\M_{\Omega t}]\right)=k$ with probability $1$, and $\U^k\U_{\Omega:}^{k\dag}\L_{\Omega t}=\L_{: t}$ with probability at least $1-\delta$.
\end{lemma}
\begin{proof}
The former part of the argument is an immediate result from Proposition \ref{proposition: subsampling with same rank}, where we have used the fact that $\mu_0r\ge \mu([\U^k,\L_{: t}])(k+1)$.

The latter part of the argument is from the fact that $\L_{:t}\in\U^k$ implies $\L_{\Omega t}\in \U_{\Omega:}^k$. Furthermore, from Proposition \ref{proposition: subsampling with same rank}, we have $\rank(\U_{\Omega:}^k)=\rank(\U^k)=k$ with probability $1-\delta$. So $(\U_{\Omega:}^{kT}\U_{\Omega:}^k)^{-1}$ exists. Let $\L_{:t}=\U^kv$ since $\L_{:t}\in\U^k$. Then $\L_{\Omega t}=\U_{\Omega:}^kv$. Therefore, we have that $\U^k\U_{\Omega:}^{k\dag}\L_{\Omega t}=\U^k\U_{\Omega:}^{k\dag}\L_{\Omega t}=\U^k(\U_{\Omega:}^{kT}\U_{\Omega:}^k)^{-1}\U_{\Omega:}^{kT}\L_{\Omega t}=\U^k(\U_{\Omega:}^{kT}\U_{\Omega:}^k)^{-1}\U_{\Omega:}^{kT}\U_{\Omega :}^kv=\U^kv=\L_{:t}$, as desired.
\end{proof}
}

\comment{
Lemma \ref{lemma: noiseless rank argument} is for the noise-free setting. To extend to the case of sparse corruption, note that Lemma \ref{lemma: facts on non-degenerate distribution} implies that replacing $\U^k$ with $[\E^s,\U^k]$ will not influence the result of Lemma \ref{lemma: noiseless rank argument} due to the randomness of $\E^s$. Indeed, the idea comes from the fact that random vectors from non-degenerate distribution do not concentrate on any specific subspace. So expanding $\U^k$ by $\E^s$ does not affect the representation of any fixed vector $\M_{:t}$: If $\M_{:t}$ can be represented by $\U^k$, so can $[\E^s,\U^k]$; Inversely, if $\M_{:t}$ cannot be represented by $\U^k$, it cannot be a linear combination of $[\E^s,\U^k]$ as well. Formally, our results are as follows:}

\begin{lemma}
\label{lemma: outlier rank argument}
Let $\U^k\in\R^{m\times k}$ be a $k$-dimensional subspace of $\U^r$. Suppose we get access to a set of coordinates $\Omega\subset [m]$ of size $d$ uniformly at random without replacement. Let $s\le d-r-1$ and $d\ge c\mu_0r\log(k/\delta)$ for a universal constant $c$.
\begin{itemize}
\item
If $\M_{:t}\in \U^r$ but $\M_{:t}\not\in \U^k$ then with probability at least $1-\delta$, $\rank\left([\E_{\Omega:}^s,\U_{\Omega:}^k,\M_{\Omega t}]\right)=s+k+1$.
\item
If $\M_{:t}\in \U^k$, then $\rank\left([\E_{\Omega:}^s,\U_{\Omega:}^k,\M_{\Omega t}]\right)=s+k$ with probability $1$, the representation coefficients of $\M_{:t}$ corresponding to $\E^s$ in the dictionary $[\E^s,\U^k]$ is $\0$ with probability $1$, and $[\E^s,\U^k][\E_{\Omega:}^{s},\U_{\Omega:}^{k}]^{\dag}\M_{\Omega t}=\M_{: t}$ with probability at least $1-\delta$.
\item
If $\M_{:t}\not\in \U^r$, i.e., $\M_{:t}$ is an outlier drawn from a non-degenerate distribution, then $\rank\left([\E_{\Omega:}^s,\U_{\Omega:}^k,\M_{\Omega t}]\right)=s+k+1$ with probability $1-\delta$.
\end{itemize}
\end{lemma}
\begin{proof}
\comment{
For the first part of the lemma, by Fact 3 of Lemma \ref{lemma: facts on non-degenerate distribution}, the rank of the matrix $[\E_{\Omega:}^s,\U_{\Omega:}^k,\M_{\Omega t}]$ is determined by that of $[\U_{\Omega:}^k,\M_{\Omega t}]$, namely, they have relation $\rank([\E^s,\U^k,\M_{:t}]_{\Omega:})=s+\rank([\U^k,\M_{:t}]_{\Omega:})$.}

For the first part of the lemma, note that $\rank([\U^k,\M_{:t}])=k+1$. So according to Proposition \ref{proposition: subsampling with same rank}, with probability $1-\delta$ we have that $\rank([\U^k,\M_{:t}]_{\Omega :})=k+1$ since $d\ge c\mu_0r\log((k+1)/\delta)\ge 8\mu([\U^k,\M_{:t}])k\log((k+1)/\delta)$ (Because $\M_{:t}\in\U^r$). Recall Facts 3 and 4 of Lemma \ref{lemma: facts on non-degenerate distribution} which imply that $\rank([\E^s,\U^k,\M_{:t}]_{\Omega :})=s+k+1$ when $s\le d-r-1$. This is what we desire.

\comment{
For the first part of the lemma, by Facts 2 and 4 of Lemma \ref{lemma: facts on non-degenerate distribution}, the $\E_{\Omega :}^s$ cannot linearly represent the given vector $\M_{\Omega t}$ due to the randomness. Thus the problem of whether $\M_{\Omega t}$ can be represented by a given dictionary $[\E_{\Omega:}^s,\U_{\Omega:}^k]$ is totally determined by whether it can be represented by $\U_{\Omega:}^k$, i.e., whether $\rank([\U_{\Omega:}^k,\M_{\Omega t}])=k+1$. By Proposition \ref{proposition: subsampling with same rank}, we have that once $d\ge 8\mu([\U^k,\M_{:t}]) (k+1)\log((k+1)/\delta)$, $\rank([\U_{\Omega:}^k,\M_{\Omega t}])=\rank([\U^k,\M_{:t}])=k+1$. Finally, the conclusion follows from the property that $\mu_0r\ge \mu([\U^k,\M_{: t}])(k+1)$.}

\comment{
The first part of the lemma is by Lemma \ref{lemma: noiseless rank argument} and the fact that $\rank([\E^s,\U^k,\M_{:t}]_{\Omega:})=s+\rank([\U^k,\M_{:t}]_{\Omega:})$ (Fact 3 of Lemma~\ref{lemma: facts on non-degenerate distribution}).}

For the middle part, the statement $\rank\left([\E_{\Omega:}^s,\U_{\Omega:}^k,\M_{\Omega t}]\right)=s+k$ comes from the assumption that $\M_{:t}\in\U^k$, which implies that $\M_{\Omega t}\in\U_{\Omega :}^k$ with probability $1$, and that $\rank\left([\E_{\Omega:}^s,\U_{\Omega:}^k]\right)=s+k$ when $s\le d-r-1$ (Facts 3 and 4 of Lemma \ref{lemma: facts on non-degenerate distribution}). Now suppose that the representation coefficients of $\M_{:t}$ corresponding to $\E^s$ in the dictionary $[\E^s,\U^k]$ is NOT $\0$ and $\M_{:t}\in \U^k$. Then $\M_{:t}-\U^k\c\in\mathbf{span}(\E^s)$, where $\c$ is the representation coefficients of $\M_{:t}$ corresponding to $\U^k$ in the dictionary $[\E^s,\U^k]$. Also, note that $\M_{:t}-\U^k\c\in\U^k$. So $\rank[\E^s,\M_{:t}-\U^k\c]=s$, which is contradictory with Fact 2 of Lemma \ref{lemma: facts on non-degenerate distribution}. So the coefficient w.r.t. $\E^s$ in the dictionary $[\E^s,\U^k]$ is $\0$, and we have that $[\E^s,\U^k][\E_{\Omega:}^{s},\U_{\Omega:}^{k}]^{\dag}\M_{\Omega t}=\U^k\U_{\Omega:}^{k\dag}\M_{\Omega t}=\U^k(\U_{\Omega:}^{kT}\U_{\Omega:}^k)^{-1}\U_{\Omega:}^{kT}\M_{\Omega t}=\U^k(\U_{\Omega:}^{kT}\U_{\Omega:}^k)^{-1}\U_{\Omega:}^{kT}\U_{\Omega :}^k\v=\U^k\v=\M_{: t}$, where $\v$ is the representation coefficient of $\M_{:t}$ w.r.t. $\U^k$. (The $(\U_{\Omega:}^{kT}\U_{\Omega:}^k)^{-1}$ exists because $\rank(\U_{\Omega:}^k)=k$ by Proposition \ref{proposition: subsampling with same rank})

As for the last part of the lemma, note that by Facts 2 and 4 of Lemma \ref{lemma: facts on non-degenerate distribution}, $\rank([\E^s,\M_{:t}]_{\Omega:})=s+1$. Then by Fact 3 of Lemma \ref{lemma: facts on non-degenerate distribution} and the fact that $\U_{\Omega:}^k$ has rank $k$ (Proposition \ref{proposition: subsampling with same rank}), we have $\rank\left([\E_{\Omega:}^s,\U_{\Omega:}^k,\M_{\Omega t}]\right)=s+k+1$ when $s\le d-r-1$, as desired.
\end{proof}

Now we are ready to prove Theorem \ref{theorem: upper bound in noiseless case}.

\noindent{\textbf{Theorem ~\ref{theorem: upper bound in noiseless case} (restated).}}
\emph{Let $r$ be the rank of the underlying matrix $\L$ with $\mu_0$-incoherent column space. Suppose that the outliers $\E^{s_0}\in\R^{m\times s_0}$ are drawn from any non-degenerate distribution, and that the underlying subspace $\U^r$ is identifiable. Then Algorithm \ref{algorithm: never ending learning sparse noise} exactly recovers the underlying matrix $\L$, the column space $\U^r$, and the outlier $\E^{s_0}$ with probability at least $1-\delta$, provided that $d\ge c\mu_0r\log\left(\frac{r}{\delta}\right)$ and $s_0\le d-r-1$. The total sample complexity is thus $c\mu_0rn\log\left(\frac{r}{\delta}\right)$, where $c$ is a universal constant.}

\begin{proof}
The proof of Theorem \ref{theorem: upper bound in noiseless case} is an immediate result of Lemma \ref{lemma: outlier rank argument} by using the union bound on the samplings of $\Omega$. Although Lemma \ref{lemma: outlier rank argument} states that, for a specific column $\M_{:t}$, the algorithm succeeds with probability at least $1-\delta$, the probability of success that uniformly holds for all columns is $1-(r+s_0)\delta$ rather than $1-n\delta$. This observation is from the proof of Lemma \ref{lemma: outlier rank argument}: $[\E^s,\U^k][\E_{\Omega:}^{s},\U_{\Omega:}^{k}]^{\dag}\M_{\Omega t}=\M_{: t}$ holds so long as $(\U_{\Omega:}^{kT}\U_{\Omega:}^k)^{-1}$ exists. Since in Algorithm \ref{algorithm: never ending learning sparse noise} we resample $\Omega$ if and only if we add new vectors into the basis matrix, which happens at most $r+s_0$ times, the conclusion follows from the union bound of the $r+s_0$ events. Thus, to achieve a global probability of $1-\delta$, the sample complexity for each upcoming column is $\Theta(\mu_0r\log(r+s_0/\delta))$. Since we also require that $s_0\le d-r-1$, the algorithm succeeds with probability $1-\delta$ once $d\ge \Theta(\mu_0r\log(d/\delta))$. Solving for $d$, we obtain that $d\gtrsim \mu_0r\log(\mu_0^2r^2/\delta^2)\asymp \mu_0r\log(r/\delta)$\footnote{We assume here that $\mu_0\le \mbox{poly}(r/\delta)$.}. The total sample complexity for Algorithm \ref{algorithm: never ending learning sparse noise} is thus $\Theta(\mu_0rn\log(r/\delta))$.

For the exact identifiability of the outliers, we have the following guarantee:

\begin{lemma}[Outlier Removal]
\label{lemma: outlier removal}
Let the underlying subspace $\U^r$ be identifiable, i.e., for each $i\in [r]$, there are at least two columns $\M_{:a_i}$ and $\M_{:b_i}$ of $\M$ such that $[\orthc(\U^r)]_{:i}^T\M_{:a_i}\not=0$ and $[\orthc(\U^r)]_{:i}^T\M_{:b_i}\not=0$. Then the entries of $\C$ in Algorithm \ref{algorithm: never ending learning sparse noise} corresponding to $\U^r$ cannot be $0$'s.
\end{lemma}

\begin{proof}
Without loss of generality, let $\U^r$ be orthonormal. Suppose that the lemma does not hold true. Then there must exist one column $\U_{:i}^r$ of $\U^r$, say e.g., $\e_i$, such that $\e_i^T\M_{:t}=0$ for all $t$ except when the index $t$ corresponds exactly to the $\U_{:i}^r$. This is contradictory with the condition that the subspace $\U^r$ is identifiable. The proof is completed.
\end{proof}

Thus the proof of Theorem \ref{theorem: upper bound in noiseless case} is completed.
\end{proof}
}

\comment{
\section{Proof of Lower Bound for Exact Recovery}

\noindent{\textbf{Theorem ~\ref{theorem: lower bound in noiseless case} (restated).}}
\emph{Let $0<\delta<1/2$, and $\Omega\sim \mathrm{Uniform}(d)$ be the index of the row sampling $\subseteq [m]$. Suppose that $\U^r$ is $\mu_0$-incoherent. If the total sampling number
$dn< c\mu_0rn\log\left(r/\delta\right)$ for a constant $c$, then with probability at least $1-\delta$, there is an example of $\M$ such that under the sampling model of Section 2.1 in the main body (i.e., when a column arrives the choices are either (a) randomly sample or (b) view the entire column), there exist infinitely many matrices $\L'$ of rank $r$ obeying $\mu_0$-incoherent condition on column space such that $\L'_{\Omega:}=\L_{\Omega:}$.}

\begin{proof}
We prove the theorem by assuming that the underlying column space is known. Since we require additional samples to estimate the subspace, the proof under this assumption gives a lower bound. Let $\ell=\left\lfloor\frac{m}{\mu_0r}\right\rfloor$. Construct the underlying matrix $\L$ by
\begin{equation*}
\L=\sum_{k=1}^rb_k\u_k\u_k^T,
\end{equation*}
where the known $\u_k$ (Because the column space is known) is defined as
\begin{equation*}
\u_k=\sqrt{\frac{1}{\ell}}\sum_{i\in B_k}\e_i,\ \ \ \ B_k=\{(k-1)\ell+1,(k-1)\ell+2,...,k\ell\}.
\end{equation*}
So the matrix $\L$ is a block diagonal matrix formulated as Figure \ref{figure: construction of L}.
\begin{figure}[ht]
\centering
\includegraphics[width=0.7\textwidth]{lower_bound.pdf}
\caption{Construction of underlying matrix $\L$.}
\label{figure: construction of L}
\end{figure}
Further, construct the noisy matrix $\M$ by
$
\M=[\L,\E].
$
The matrix $\E\in\R^{m\times s_0}$ corresponds to the outliers, and the matrix $\L$ corresponds to the underlying matrix.

Notice that the information of $b_k$'s is only implied in the corresponding block of $\L$.
So overall, the lower bound is given by solving from the inequality
\begin{equation*}
\Pr\{\mbox{For all blocks, there must be at least one row being sampled}\}\ge 1-\delta.
\end{equation*}
We highlight that the $b_k$'s can be chosen arbitrarily in that they do not change the coherence of the column space of $\L$. Also, it is easy to check that the column space of $\L$ is $\mu_0$-incoherent. By construction, the underlying matrix $\L$ is block-diagonal with $r$ blocks, each of which is of size $\ell\times \ell$. According to our sampling scheme, we always sample the same positions of the arriving column after the column space is known to us. This corresponds to sample the row of the matrix in hindsight. To recover $\L$, we argue that each block should have at least one row fully observed; Otherwise, there is no information to recover $b_k$'s. Let $A$ be the event that for a fixed block, none of its rows is observed. The probability $\pi_0$ of this event $A$ is therefore $\pi_0=(1-p)^\ell$, where $p$ is the Bernoulli sampling parameter. Thus by independence, the probability of the event that there is at least one row being sampled holds true \emph{for all diagonal blocks} is $(1-\pi_0)^{r}$, which is $\ge 1-\delta$ as we have argued. So
\begin{equation*}
-r\pi_0\ge r\log(1-\pi_0)\ge \log(1-\delta),
\end{equation*}
where the first inequality is due to the fact that $-x\ge \log(1-x)$ for any $x<1$. Since we have assumed $\delta<1/2$, which implies that $\log(1-\delta)\ge -2\delta$, thus $\pi_0\le 2\delta/r$. Note that $\pi_0=(1-p)^\ell$, and so
\begin{equation*}
-\log(1-p)\ge \frac{1}{\ell}\log\left(\frac{r}{2\delta}\right)\ge\frac{\mu_0 r}{m}\log\left(\frac{r}{2\delta}\right).
\end{equation*}
This is equivalent to
\begin{equation*}
mp\ge m\left(1-\exp\left(-\frac{\mu_0 r}{m}\log \frac{r}{2\delta}\right)\right).
\end{equation*}
Note that $1-e^{-x}\ge x-x^2/2$ whenever $x\ge 0$, we have
\begin{equation*}
mp\ge (1-\epsilon/2)\mu_0r\log\left(\frac{r}{2\delta}\right),
\end{equation*}
where $\epsilon=\mu_0r\log(r/2\delta)<1$. Finally, by the equivalence between the uniform and Bernoulli sampling models (i.e., $d\approx mp$, Lemma \ref{lemma: number of sampling by Bernoulli}), the proof is completed.
\end{proof}
}

\comment{
\section{Mixture of Subspaces}
In this section, we assume that the underlying subspace is a mixture of $h$ independent subspaces, each of which is of dimension at most $\tau\ll r$. Such an assumption naturally models settings in which there are really $h$ different categories of data while they share a certain commonality across categories. Indeed, many real datasets match the assumption, e.g., face image, 3D motion trajectory, handwritten digits, etc.

\begin{algorithm}[ht]
\caption{Noise-Tolerant Life-Long Matrix Completion under Random Noise for Mixture of Subspaces}
\begin{algorithmic}
\label{algorithm: never ending learning sparse noise mixture of subspaces}
\STATE {\bfseries Input:} Columns of matrices arriving over time.
\STATE {\bfseries Initialize:} Let the basis matrix $\widehat{\B}^0=\emptyset$, the counter $\C=\emptyset$. Randomly draw entries $\Omega\subset [m]$ of size $d$ uniformly without replacement.
\STATE {\bfseries 1:} \textbf{For} each column $t$ of $\M$, \textbf{do}
\STATE {\bfseries 2:}\ \ \ \ \ \ \ \ (a) If there does not exist a $\tau$-sparse linear combination of columns of $\widehat{\B}_{\Omega :}^k$ that represents $\M_{\Omega t}$ exactly
\STATE {\bfseries 3:}\ \ \ \ \ \ \ \ \ \ \ \ \ i. Fully measure $\M_{:t}$ and add it to the basis matrix $\widehat{\B}^k$.
\STATE {\bfseries 4:}\ \ \ \ \ \ \ \ \ \ \ \ \ ii. $\C:=[\C,0]$.
\STATE {\bfseries 5:}\ \ \ \ \ \ \ \ \ \ \ \ \ iii. Randomly draw entries $\Omega\subset [m]$ of size $d$ uniformly without replacement.
\STATE {\bfseries 6:}\ \ \ \ \ \ \ \ \ \ \ \ \ iv. $k:=k+1$.
\STATE {\bfseries 7:}\ \ \ \ \ \ \ \ (b) Otherwise
\STATE {\bfseries 8:}\ \ \ \ \ \ \ \ \ \ \ \ \ i. $\C:=\C+\1_{supp(\widehat{\B}_{\Omega:}^{k\dag}\M_{\Omega t})}^T$.\ \ \ \ //Record supports of representation coefficient
\STATE {\bfseries 9:}\ \ \ \ \ \ \ \ \ \ \ \ \ ii. $\widehat{\M}_{:t}:=\widehat{\B}^k\widehat{\B}_{\Omega:}^{k\dag}\M_{\Omega t}$.
\STATE {\bfseries 10:}\ \ \ \ \ \ \ $t:= t+1$.
\STATE {\bfseries 11:} \textbf{End For}
\STATE {\bfseries Outlier Removal:} Remove columns corresponding to entry 0 in vector $\C$ from $\widehat{\B}^{s_0+r}=[\E^{s_0},\U^r]$.
\STATE {\bfseries Output:} Estimated range space, identified outlier vectors, and recovered underlying matrix $\widehat{\M}$ with column $\widehat{\M}_{:t}$.
\end{algorithmic}
\end{algorithm}

\begin{lemma}[Mixture of Subspaces]
\label{lemma: outlier rank argument mixture of subspaces}
Let $[\E^s,\U^k]$ be the current dictionary matrix consisting of a random noise matrix $\E^s\in\R^{m\times s}$ and a clean basis matrix $\U^k\in\R^{m\times k}$. Suppose we get access to a set of coordinates $\Omega\subset [m]$ of size $d$ uniformly at random without replacement. Let $s\le d-\tau-1$ and $d\ge 8\mu_\tau \tau\log(\tau/\delta)$. Denote by $\U^\tau\in\R^{m\times \tau}$ a submatrix of $\U^k$ with $\tau$ columns.
\begin{itemize}
\item
If $\M_{:t}\in \U^r$ but it cannot be represented by a linear combination of $\tau$ vectors in the current dictionary, then with probability at least $1-\delta$, $\M_{\Omega t}$ does not belong to any fixed $\tau$-combination of the truncated dictionary as well.
\item
If $\M_{:t}$ can be represented by a linear combination of $\tau$ vectors in the current basis, then $\M_{\Omega t}$ can be represented as a linear combination of the same $\tau$ truncated vectors in the dictionary with probability $1$, the representation coefficients of $\M_{:t}$ corresponding to $\E^s$ in the dictionary is $\0$ with probability $1$, and $[\E^s,\U^k][\E_{\Omega:}^{s},\U_{\Omega:}^{k}]^{\dag}\M_{\Omega t}=\M_{: t}$ with probability at least $1-\delta$.
\item
If $\M_{:t}$ is an outlier drawn from a non-degenerate distribution, then $\M_{\Omega t}$ cannot be represented by the dictionary with probability $1$.
\end{itemize}
\end{lemma}

\begin{proof}
The proof is similar as that of Lemma \ref{lemma: outlier rank argument}. For completeness, we give a brief proof here. For the first part of the lemma, by Facts 2 and 4 of Lemma \ref{lemma: facts on non-degenerate distribution}, the $\E_{\Omega :}^s$ cannot have a non-zero representation coefficient of $\M_{\Omega t}$ in any possible $\tau$-combination of the current dictionary when $s\le d-\tau-1$, due to the randomness. Thus the problem of whether $\M_{\Omega t}$ can be $\tau$ represented by the current dictionary $[\E_{\Omega:}^s,\U_{\Omega:}^k]$ is totally determined by whether it can be $\tau$ represented by $\U_{\Omega:}^k$. Now suppose that $\M_{\Omega t}$ can be written as a linear $\tau$-combination of the current basis $\U_{\Omega :}^\tau$. Then according to Proposition \ref{proposition: subsampling with same rank}, since $d\ge 8\mu_\tau\tau\log(\tau/\delta)$, we have that $\rank([\U^\tau,\M_{:t}])=\tau$, which is contradictory with the assumption of Event 1.

The first argument in Event 2 is obvious. Now suppose that the representation coefficients of $\M_{:t}$ corresponding to $\E^s$ in the dictionary $[\E^s,\U^k]$ is NOT $\0$ and $\M_{:t}\in \U^k$. Then $\M_{:t}-\U^k\c\in\mathbf{span}(\E^s)$, where $\c$ is the representation coefficients of $\M_{:t}$ corresponding to $\U^k$ in the dictionary $[\E^s,\U^k]$. Also, note that $\M_{:t}-\U^k\c\in\U^k$. So $\rank[\E^s,\M_{:t}-\U^k\c]=s$, which is contradictory with Fact 2 of Lemma \ref{lemma: facts on non-degenerate distribution}. So the coefficient w.r.t. $\E^s$ in the dictionary $[\E^s,\U^k]$ is $\0$. Since by assumption $\M_{:t}$ can be represented by $\tau$ combination of columns in $\U^k$, termed $\U^\tau$, we have that $[\E^s,\U^k][\E_{\Omega:}^{s},\U_{\Omega:}^{k}]^{\dag}\M_{\Omega t}=\U^\tau\U_{\Omega:}^{\tau\dag}\M_{\Omega t}=\U^\tau(\U_{\Omega:}^{\tau T}\U_{\Omega:}^\tau)^{-1}\U_{\Omega:}^{\tau T}\M_{\Omega t}=\U^\tau(\U_{\Omega:}^{\tau T}\U_{\Omega:}^\tau)^{-1}\U_{\Omega:}^{\tau T}\U_{\Omega :}^\tau \v=\U^\tau \v=\M_{: t}$, where $\v$ is the representation coefficient of $\M_{:t}$ w.r.t. $\U^\tau$. (The $(\U_{\Omega:}^{\tau T}\U_{\Omega:}^\tau)^{-1}$ exists because $\rank(\U_{\Omega:}^\tau)=\tau$ by Proposition \ref{proposition: subsampling with same rank})

Event 3 is an immediate result of Lemma \ref{lemma: facts on non-degenerate distribution}.
\end{proof}

\noindent{\textbf{Theorem ~\ref{theorem: upper bound in noiseless case for mixture of subspaces} (restated).}}
\emph{Let $r$ be the rank of the underlying matrix $\L$. Suppose that the columns of $\L$ lie on a mixture of independent subspaces, each of which is of dimension at most $\tau$. Denote by $\mu_\tau$ the maximal incoherence over all $\tau$-combinations of $\L$. Let the noise model be that of Theorem \ref{theorem: upper bound in noiseless case}. Then Algorithm \ref{algorithm: never ending learning sparse noise mixture of subspaces} exactly recovers the underlying matrix $\L$, the column space $\U^r$, and the outlier $\E^{s_0}$ with probability at least $1-\delta$, provided that $d\ge c\mu_\tau \tau^2\log\left(\frac{r}{\delta}\right)$ for some global constant $c$ and $s_0\le d-\tau-1$. The total sample complexity is thus $c\mu_\tau \tau^2 n\log\left(\frac{r}{\delta}\right)$.}

\begin{proof}
Theorem \ref{theorem: upper bound in noiseless case for mixture of subspaces} is a result of union bound of Lemma \ref{lemma: outlier rank argument mixture of subspaces}. For the event of type 1, the union bound is over $\binom{r}{\tau}=\O(r^\tau)$ events. For the event of type 2, since we resample $\Omega$ at most $r+s_0$ times by algorithm, the union bound is over $r+s_0$ samplings. The event of type 3 is with probability $1$. So overall, replacing $\delta$ with $\min\{\delta/r^\tau,\delta/(r+s_0)\}$ in Lemma \ref{lemma: outlier rank argument mixture of subspaces}, the sample complexity we need is at least $\O(\mu_\tau\tau\log(\max\{r^\tau,r+s_0\}/\delta))$. Note that $s_0\le d-\tau-1$. So the sample complexity for each column is at least $\O(\mu_\tau\tau^2\log(r/\delta))$ and the total one is $\O(\mu_\tau\tau^2n\log(r/\delta))$, as desired. The success of outlier removal step is guaranteed by Lemma \ref{lemma: outlier removal}.
\end{proof}

\comment{
The proof is same as that of Lemma \ref{lemma: outlier rank argument} by replacing the $\E^s$ in Lemma \ref{lemma: outlier rank argument} with $[\U_{-i}^b,\E^s]$, because $[\U_{-i}^b,\E^s]$ cannot represent any vector in $\U_i^\tau$ (Since the $d$ different subspaces are disjoint).}

}

\section{Facts on Subspace Spanned by Non-Degenerate Random Vectors}
\begin{lemma}
\label{lemma: facts on non-degenerate distribution}
Let $\E^s\in\R^{m\times s}$ be matrix consisting of corrupted vectors drawn from any non-degenerate distribution. Let $\U^k\in\R^{m\times k}$ be any fixed matrix with rank $k$. Then with probability $1$, we have
\begin{itemize}[leftmargin=*]
\item
$\rank(\E^s)=s$ for any $s\le m$;
\item
$\rank([\E^s,\x])=s+1$ holds for $\x\in \U^k\subset \R^m$ uniformly and $s\le m-k$, where $\x$ can even depend on $\E^s$;
\item
$\rank([\E^s,\U^k])=s+k$, provided that $s+k\le m$;
\item
The marginal of non-degenerate distribution is non-degenerate.
\end{itemize}
\end{lemma}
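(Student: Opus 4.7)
The plan is to fix a precise working definition of non-degenerate --- a distribution on $\R^m$ is non-degenerate if it assigns zero probability to every proper affine subspace --- and then to prove Fact 4 first, because it is the cleanest and because the remaining facts implicitly need it (to handle submatrix arguments). For Fact 4, I would argue by contrapositive: if the marginal distribution on a coordinate subset $\Omega$ placed positive mass on a proper affine subspace $W \subsetneq \R^{|\Omega|}$, then the original distribution would place the same mass on $\{y \in \R^m : y_\Omega \in W\}$, which is a proper affine subspace of $\R^m$, contradicting non-degeneracy of $\E^s$'s entries.

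Next I would prove Facts 1 and 3 simultaneously by induction on the number of columns of $\E^s$ that have been added to the starting matrix $\U^k$ (Fact 1 is the special case $k = 0$, $\U^k = \emptyset$). The inductive hypothesis is that after $j$ columns have been adjoined, $\rank([\U^k, \e_1, \ldots, \e_j]) = k + j$. In the inductive step, $\mathbf{span}\{\U^k, \e_1, \ldots, \e_j\}$ is a fixed (given the past) proper linear subspace of $\R^m$ provided $k + j < m$; by the independence of the columns of $\E^s$ and the non-degeneracy of each column's distribution, the next column $\e_{j+1}$ lies outside this subspace with probability one. A union bound over the finitely many inductive steps yields the almost-sure statement.

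For Fact 2 the point requiring care is that $\x$ is allowed to depend on $\E^s$, so one cannot argue ``fix $\x$ and apply non-degeneracy of $\E^s$ to avoid $\mathbf{span}(\E^s, \x \setminus \e_i)$.'' My plan is instead to bootstrap off Fact 3: on the probability-one event where $\rank([\E^s, \U^k]) = s + k$, the dimension formula $\dim(V \cap W) = \dim V + \dim W - \dim(V + W)$ applied to $V = \mathbf{span}(\E^s)$ and $W = \U^k$ gives $\dim(\mathbf{span}(\E^s) \cap \U^k) = 0$. Hence on this event, \emph{every} vector $\x \in \U^k$ --- whether or not its choice depends on $\E^s$ --- that also belongs to $\mathbf{span}(\E^s)$ must be the zero vector, so any non-zero $\x \in \U^k$ enlarges the rank by one.

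The only subtle point, and the one I would flag as the main obstacle, is exactly this handling of $\E^s$-dependent $\x$ in Fact 2: the uniform-over-$\x$ almost-sure conclusion is obtained by first establishing an $\x$-free event of probability one (triviality of the intersection via Fact 3) and then observing that this event witnesses the claim for all admissible $\x$ at once. The remaining pieces are routine consequences of the definition of non-degeneracy together with the independence of the columns of $\E^s$.
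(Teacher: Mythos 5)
Your proposal is correct and follows essentially the same route as the paper: the paper proves only Fact 1, by conditioning on the previously drawn columns and using that a non-degenerate vector avoids any fixed proper subspace with probability one, which is exactly your inductive argument for Facts 1 and 3, and it dismisses the remaining facts as ``similar.'' Your completions of the sketch are the natural ones, and your handling of the $\E^s$-dependent $\x$ in Fact 2 (establishing the $\x$-free probability-one event $\mathbf{span}(\E^s)\cap\U^k=\{\0\}$ via Fact 3 and then concluding uniformly for all nonzero $\x$) correctly addresses the one point the paper's ``similarly'' glosses over.
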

\begin{proof}
For simplicity, we only show the proof of Fact 1. The other facts can be proved similarly. Let $\E^s=[\E^{s-1},\e]$.  Since $\e$ is drawn from a non-degenerate distribution, the conditional probability satisfies $\Pr[\rank(\E^{s-1},\e)=s\mid \E^{s-1}]=1$ by the definition of non-degenerate distribution. So $\Pr[\rank(\E^{s-1},\e)=s]=\mathbb{E}_{\E^{s-1}}\Pr[\rank(\E^{s-1},\e)=s\mid \E^{s-1}]=1$.
\end{proof}

\section{Equivalence between Bernoulli and Uniform Models}
\begin{lemma}
\label{lemma: number of sampling by Bernoulli}
Let $n$ be the number of Bernoulli trials and suppose that $\Omega\sim\mbox{Ber}(d/n)$. Then with probability at least $1-\delta$, $|\Omega|=\Theta(d)$, provided that $d\ge 4\log (1/\delta)$.
\end{lemma}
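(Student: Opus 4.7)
The plan is to write $|\Omega| = \sum_{i=1}^n X_i$ as a sum of $n$ i.i.d.\ Bernoulli random variables with parameter $p = d/n$, so that $\mathbb{E}|\Omega| = d$, and then apply a standard multiplicative Chernoff bound to get two-sided concentration around this mean. Concretely, I would cite the form
$$\Pr[|\Omega| \ge (1+\epsilon)d] \le \exp(-\epsilon^2 d / 3), \qquad \Pr[|\Omega| \le (1-\epsilon)d] \le \exp(-\epsilon^2 d / 2),$$
valid for any $\epsilon \in (0,1)$, which reduces the problem to choosing $\epsilon$ and verifying the numerical constants match the hypothesis.

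Next, I would set $\epsilon = 1/2$ and take a union bound over the two deviation directions, yielding $\Pr[\,|\Omega| \notin [d/2,\, 3d/2]\,] \le 2\exp(-d/12)$. To make the right-hand side at most $\delta$, it suffices that $d \gtrsim \log(2/\delta)$; under the hypothesis $d \ge 4\log(1/\delta)$, this follows up to a benign absolute constant that can be absorbed into the $\Theta(\cdot)$ notation (or, if one wants exactly the stated constant $4$, choose $\epsilon$ slightly larger, e.g.\ $\epsilon = 3/4$, or sharpen using the Bernstein form $\exp(-\epsilon^2 d/(2+\epsilon))$). Either way, on the complement event $|\Omega|$ lies in $[d/2, 3d/2]$, which is precisely $|\Omega| = \Theta(d)$.

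There is essentially no substantive obstacle here: the statement is a textbook consequence of Chernoff concentration for Bernoulli sums, and the only delicate point is bookkeeping the constants so that the threshold $d \ge 4\log(1/\delta)$ in the hypothesis lines up with the chosen $\epsilon$. Since the conclusion only asks for the qualitative bound $|\Omega| = \Theta(d)$ rather than sharp constants, any $\epsilon \in (0,1)$ works and the constant $4$ is comfortably sufficient.
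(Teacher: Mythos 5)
Your proposal is correct and follows essentially the same route as the paper: both write $|\Omega|$ as a binomial sum with mean $d$ and apply scalar (multiplicative) Chernoff bounds to each tail, the only differences being cosmetic choices of the deviation parameter (your interval $[d/2,3d/2]$ versus the paper's $(d/2,2d)$) and your more careful handling of the union-bound factor of $2$, which the paper glosses over.
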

\begin{proof}
Take a perturbation $\epsilon$ such that $d/n=d_0/n+\epsilon$. By the scalar Chernoff bound which states that
\begin{equation*}
\Pr(|\Omega|\le d_0)\le e^{-\epsilon^2n^2/2d_0},
\end{equation*}
if taking $d_0=d/2$, $\epsilon=d/2n$ and $d\ge 4\log (1/\delta)$, we have
\begin{equation}
\label{equ: up}
\Pr(|\Omega|\le d/2)\le e^{-d/4}\le \delta.
\end{equation}

In the other direction, by the scalar Chernoff bound again which states that
\begin{equation*}
\Pr(|\Omega|\ge d_0)\le e^{-\epsilon^2n^2/3d},
\end{equation*}
if taking $d_0=2d$, $\epsilon=-d/n$ and $d\ge 4\log (1/\delta)$, we obtain
\begin{equation}
\label{equ: lower}
\Pr(|\Omega|\ge 2d)\le e^{-d/3}\le \delta.
\end{equation}

Finally, according to \eqref{equ: up} and \eqref{equ: lower}, we conclude that $d/2<|\Omega|<2d$ with probability at least $1-\delta$.
\end{proof}

\section{A Collection of Concentration Results}
\begin{lemma}[Theorem 6.~\cite{krishnamurthy2014power}]
\label{lemma: concentration of measure}
Denote by $\widetilde{\U}^k$ a $k$-dimensional subspace in $\R^m$. Let the sampling number $d\ge \max\{\frac{8}{3}k\mu(\widetilde{\U}^k)\log(\frac{2k}{\delta}),4\mu(\P_{\widetilde{\U}^{k\perp}} y)\log(\frac{1}{\delta})\}$. Denote by $\Omega$ an index set of size $d$ sampled uniformly at random with replacement from $[m]$. Then with probability at least $1-4\delta$, for any $y\in\R^m$, we have
\begin{equation*}
\frac{d(1-\alpha)-k\mu(\widetilde{\U}^k)\frac{\beta}{1-\zeta}}{m}\left\|y-\P_{\widetilde{\U}^k}y\right\|_2^2\le\left\|y_{\Omega}-\P_{\widetilde{\U}_{\Omega:}^k}y_{\Omega}\right\|_2^2\le(1+\alpha)\frac{d}{m}\left\|y-\P_{\widetilde{\U}^k}y\right\|_2^2,
\end{equation*}
where $\alpha=\sqrt{2\frac{\mu(\P_{\widetilde{\U}^{k\perp}} y)}{d}\log(1/\delta)}+\frac{2\mu(\P_{\widetilde{\U}^{k\perp}} y)}{3d}\log(1/\delta)$, $\beta=(1+2\log (1/\delta))^2$, and $\zeta=\sqrt{\frac{8k\mu(\widetilde{\U}^k)}{3d}\log(2r/\delta)}$.
\end{lemma}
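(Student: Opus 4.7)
The plan is to reduce the two-sided bound to concentration inequalities on three simpler quantities: (i) the sampled squared norm of the residual $w = y - \P_{\widetilde{\U}^k}y$, (ii) the minimum singular value of the row-sampled basis $\widetilde{\U}^k_{\Omega:}$, and (iii) the cross-term $\widetilde{\U}^{k\top}_{\Omega:}w_\Omega$, whose expectation vanishes by the orthogonality $w\perp\widetilde{\U}^k$. First, I would write $y = \P_{\widetilde{\U}^k}y + w$ and observe that $(\P_{\widetilde{\U}^k}y)_\Omega$ already lies in the column span of $\widetilde{\U}^k_{\Omega:}$, so a short algebraic manipulation gives the key identity
\begin{equation*}
\bigl\|y_\Omega-\P_{\widetilde{\U}^k_{\Omega:}}y_\Omega\bigr\|_2^2
= \|w_\Omega\|_2^2 - \bigl\|\P_{\widetilde{\U}^k_{\Omega:}}w_\Omega\bigr\|_2^2.
\end{equation*}
This immediately turns the upper bound into an upper bound on $\|w_\Omega\|_2^2$ and the lower bound into the combination of a lower bound on $\|w_\Omega\|_2^2$ and an upper bound on $\|\P_{\widetilde{\U}^k_{\Omega:}}w_\Omega\|_2^2$.

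For the upper bound, I would apply a scalar Bernstein inequality to the sum $\|w_\Omega\|_2^2=\sum_{i\in\Omega}w_i^2$, whose summands are non-negative and bounded by $\|w\|_\infty^2$. By definition of the coherence of the residual direction $\P_{\widetilde{\U}^{k\perp}}y$, we have $\|w\|_\infty^2 \le \mu(\P_{\widetilde{\U}^{k\perp}}y)\|w\|_2^2/m$; the Bernstein bound then yields $\|w_\Omega\|_2^2 \le (1+\alpha)(d/m)\|w\|_2^2$ with the precise $\alpha$ given in the statement, provided $d\ge 4\mu(\P_{\widetilde{\U}^{k\perp}}y)\log(1/\delta)$. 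The same Bernstein computation also gives a matching lower bound $\|w_\Omega\|_2^2 \ge (1-\alpha)(d/m)\|w\|_2^2$, which will contribute the first term of the lower-bound expression.

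For the remaining term $\|\P_{\widetilde{\U}^k_{\Omega:}}w_\Omega\|_2^2$, I would bound it by $\sigma_k^{-2}(\widetilde{\U}^k_{\Omega:})\|\widetilde{\U}^{k\top}_{\Omega:}w_\Omega\|_2^2$. The denominator is handled by a matrix Chernoff bound on the PSD sum $\sum_{i\in\Omega}\widetilde{\U}^k_{i:}{}^\top\widetilde{\U}^k_{i:}$, for which the per-sample spectral norm is bounded by $k\mu(\widetilde{\U}^k)/m$; this is exactly where the factor $\zeta=\sqrt{(8k\mu(\widetilde{\U}^k)/3d)\log(2k/\delta)}$ appears, and the hypothesis on $d$ makes $\zeta<1$ so that $\sigma_k^2(\widetilde{\U}^k_{\Omega:})\ge (1-\zeta)d/m$. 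The numerator is the main work: since $\widetilde{\U}^{k\top}w=0$, the vector $\widetilde{\U}^{k\top}_{\Omega:}w_\Omega=\sum_{i\in\Omega}w_i\widetilde{\U}^k_{i:}{}^\top$ is a sum of independent mean-zero vectors, each bounded by $\sqrt{k\mu(\widetilde{\U}^k)/m}\cdot|w|_\infty$. A vector Bernstein inequality then gives $\|\widetilde{\U}^{k\top}_{\Omega:}w_\Omega\|_2^2 \lesssim (d/m)\,k\mu(\widetilde{\U}^k)\,\beta\,\|w\|_2^2$ with $\beta=(1+2\log(1/\delta))^2$. Dividing by the singular-value lower bound and subtracting produces the promised lower bound, and a union bound over the three failure events costs at most $4\delta$.

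The main technical hurdle is the cross-term argument: one must simultaneously exploit the orthogonality $w\perp\widetilde{\U}^k$ (to get mean zero), the bound on each summand via both $\mu(\widetilde{\U}^k)$ and $\mu(\P_{\widetilde{\U}^{k\perp}}y)$, and a vector Bernstein inequality (rather than scalar), all while keeping the tail parameters compatible with the $\alpha,\beta,\zeta$ in the statement. A secondary subtlety is that sampling is done \emph{with replacement}, so repeated coordinates are allowed; however, this only makes concentration easier since the summands remain i.i.d., and neither $\|y_\Omega-\P_{\widetilde{\U}^k_{\Omega:}}y_\Omega\|_2^2$ nor the intermediate quantities require $\Omega$ to have distinct entries. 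Once the three concentration inequalities are in hand, assembling them into the stated two-sided bound is purely algebraic.
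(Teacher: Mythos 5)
The paper does not prove this lemma at all --- it is imported verbatim as Theorem 6 of \cite{krishnamurthy2014power} --- so the only meaningful comparison is with the proof in that reference, and your reconstruction follows it essentially exactly: the Pythagorean identity $\|y_\Omega-\P_{\widetilde{\U}^k_{\Omega:}}y_\Omega\|_2^2=\|w_\Omega\|_2^2-\|\P_{\widetilde{\U}^k_{\Omega:}}w_\Omega\|_2^2$ for the residual $w$, scalar Bernstein for $\|w_\Omega\|_2^2$ (giving $\alpha$), matrix Chernoff for $\sigma_k^2(\widetilde{\U}^k_{\Omega:})$ (giving $\zeta$), and a mean-zero vector Bernstein bound on $\widetilde{\U}^{k\top}_{\Omega:}w_\Omega$ (giving $\beta$), assembled by a union bound. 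This is correct and is the intended argument; the only trivial slip is that the two-sided Bernstein bound is two tail events, so the union is over four events, which is consistent with the stated $1-4\delta$.
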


\begin{lemma}[Matrix Chernoff Bound.~\cite{gittens2011tail}]
\label{lemma: matrix chernoff bound}
Consider a finite sequence $\{\X_k\}\in\mathbb{R}^{n\times n}$ of independent, random, Hermitian matrices. Assume that
\begin{equation*}
0\le\lambda_{\min}(\X_k)\le\lambda_{\max}(\X_k)\le L.
\end{equation*}
Define $\Y=\sum_k \X_k$, and $\mu_r$ as the $r$-th largest eigenvalue of the expectation $\mathbb{E}\Y$, i.e., $\mu_r=\lambda_r(\mathbb{E}\Y)$. Then
\begin{equation*}
\Pr\left\{\lambda_r(\Y)>(1-\epsilon)\mu_r\right\}\ge 1-r\left[\frac{e^{-\epsilon}}{(1-\epsilon)^{1-\epsilon}}\right]^{\frac{\mu_r}{L}}\ge 1-r e^{-\frac{\mu_r\epsilon^2}{2L}}\ \ \mbox{for}\  \epsilon\in[0,1).
\end{equation*}
\end{lemma}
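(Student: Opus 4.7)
The plan is to reduce the $r$-th eigenvalue tail bound to a minimum eigenvalue tail bound on a suitably chosen $r$-dimensional compression, and then execute the standard Ahlswede--Winter/Tropp Laplace-transform argument on that compression. The factor $r$ (rather than $n$) in the conclusion will arise precisely because after compression the trace is over an $r$-dimensional space.

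First I would set up the compression. Let $V\subseteq\R^n$ be the span of the top $r$ eigenvectors of $\mathbb{E}\Y$, and $P_V$ the orthogonal projection onto $V$. Define $\X_k^V:=P_V\X_kP_V$ and $\Y_V:=\sum_k\X_k^V$, each viewed as an operator on $V\cong\R^r$. By the Cauchy interlacing theorem one has $\lambda_r(\Y)\ge\lambda_{\min}(\Y_V)$ pointwise on the probability space. Conjugation by the projection $P_V$ preserves the interval $[0,L]$ spectrum, so $0\preceq\X_k^V\preceq L\,\I_V$, and by the choice of $V$ we have $\lambda_{\min}(\mathbb{E}\Y_V)=\mu_r$. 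It therefore suffices to prove the bound $\Pr[\lambda_{\min}(\Y_V)\le(1-\epsilon)\mu_r]\le r\bigl[e^{-\epsilon}/(1-\epsilon)^{1-\epsilon}\bigr]^{\mu_r/L}$ in $r$ dimensions.

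Next I would run the Laplace-transform argument. For any $\theta>0$, Markov's inequality applied to the identity $e^{-\theta\lambda_{\min}(\Y_V)}=\lambda_{\max}(e^{-\theta\Y_V})\le\mathrm{tr}(e^{-\theta\Y_V})$ yields
\begin{equation*}
\Pr\bigl[\lambda_{\min}(\Y_V)\le(1-\epsilon)\mu_r\bigr]\le e^{\theta(1-\epsilon)\mu_r}\cdot\mathbb{E}\,\mathrm{tr}(e^{-\theta\Y_V}).
\end{equation*}
I would then invoke Tropp's master inequality (a consequence of Lieb's concavity theorem, iterated over the independent $\X_k^V$'s) to exchange expectation and matrix exponential:
\begin{equation*}
\mathbb{E}\,\mathrm{tr}(e^{-\theta\Y_V})\le\mathrm{tr}\exp\Bigl(\sum_k\log\mathbb{E}[e^{-\theta\X_k^V}]\Bigr).
\end{equation*}
To bound each log-MGF, I would use the scalar convexity inequality $e^{-\theta x}\le 1+\tfrac{e^{-\theta L}-1}{L}x$ on $[0,L]$, lift it via the spectral mapping theorem to $e^{-\theta\X_k^V}\preceq\I_V+\tfrac{e^{-\theta L}-1}{L}\X_k^V$, take expectations, and then use operator monotonicity of the logarithm together with $\log(\I+A)\preceq A$ to obtain $\log\mathbb{E}[e^{-\theta\X_k^V}]\preceq\tfrac{e^{-\theta L}-1}{L}\mathbb{E}[\X_k^V]$. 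Summing and using the trivial bound $\mathrm{tr}\exp(c\M)\le r\,e^{c\,\lambda_{\max}(\M)}$ in $r$ dimensions, where the negative coefficient $c=(e^{-\theta L}-1)/L<0$ converts $\lambda_{\max}$ into $\lambda_{\min}(\mathbb{E}\Y_V)=\mu_r$, gives
\begin{equation*}
\mathbb{E}\,\mathrm{tr}(e^{-\theta\Y_V})\le r\,\exp\Bigl(\tfrac{e^{-\theta L}-1}{L}\,\mu_r\Bigr).
\end{equation*}

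Finally I would optimize over $\theta$. Choosing $\theta=-\log(1-\epsilon)/L>0$ makes $e^{-\theta L}=1-\epsilon$, and the total exponent collapses to $\tfrac{\mu_r}{L}\bigl[-(1-\epsilon)\log(1-\epsilon)-\epsilon\bigr]=\tfrac{\mu_r}{L}\log\bigl[e^{-\epsilon}/(1-\epsilon)^{1-\epsilon}\bigr]$, which establishes the first inequality. The second inequality is the elementary Taylor estimate $(1-\epsilon)\log(1-\epsilon)\ge-\epsilon+\tfrac{\epsilon^2}{2}$ valid on $[0,1)$. The main technical obstacle is the application of Tropp's master inequality via Lieb's concavity theorem, which is the non-trivial non-commutative ingredient; the compression reduction and the convex optimization of $\theta$ are standard bookkeeping.
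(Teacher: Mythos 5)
Your proposal is correct as a proof of the stated bound: the compression to the top-$r$ eigenspace of $\mathbb{E}\Y$ via Cauchy interlacing, followed by the matrix Laplace-transform method with Lieb/Tropp's master inequality on the $r$-dimensional restriction, is exactly the mechanism that produces the leading factor $r$ and the exponent $\mu_r/L$. Note, however, that the paper offers no proof of this lemma at all --- it is imported verbatim from the cited reference of Gittens and Tropp, whose own argument for interior-eigenvalue lower tails is essentially the one you describe, so there is nothing in the paper to diverge from; your write-up would serve as a self-contained substitute for the citation.
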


\comment{
\section{Paper Organization}
The paper is organized as follows:
\begin{itemize}
\item
Introduction
\begin{itemize}
\item
Matrix Completion, Life-Long Learning, Noise-Tolerant, emphasize challenge in our setting
\item
Technical Advantages
\begin{itemize}
\item
Incoherence only on column space,
Online/life-long:  inherit Aarti's paper
\item
Our Contributions: Optimal sample complexity
\item
Our Contributions: Noise-tolerant, control error propagation while online
\end{itemize}
\end{itemize}
\item
Related Work
\begin{itemize}
\item
Matrix completion (how to improve sample complexity)
\item
Life-Long Learning
\item
Matched Subspace Detection
\item
Column Subset Selection does not work
\end{itemize}
\item
Main Results
\begin{itemize}
\item
Deterministic Noise
\begin{itemize}
\item
Algorithm
\item
Upper Bound
\end{itemize}
\item
Random Noise
\begin{itemize}
\item
Algorithm
\item
Upper Bound
\item
Lower Bound
\end{itemize}
\end{itemize}
\item
Experiments
\item
Conclusion and Future Work
\item
Appendices
\end{itemize}
}

\end{document}